\def\eqref#1{equation~\ref{#1}}
\def\1{\bm{1}}
\DeclareMathAlphabet{\mathsfit}{\encodingdefault}{\sfdefault}{m}{sl}
\SetMathAlphabet{\mathsfit}{bold}{\encodingdefault}{\sfdefault}{bx}{n}
\DeclareMathOperator*{\argmin}{arg\,min}
\newcommand{\RETURN}{\STATE \textbf{return} }
\newtheorem*{theorem_nonum}{Theorem}
\theoremstyle{plain}
\newtheorem{theorem}{Theorem}[section]
\theoremstyle{definition}
\newtheorem{definition}[theorem]{Definition}
\newtheorem{assumption}[theorem]{Assumption}
\theoremstyle{remark}
\begin{document}

\twocolumn[
  \icmltitle{CriticalKV: Optimizing KV Cache Eviction from \\ an Output Perturbation Perspective}



  \icmlsetsymbol{equal}{*}

  \begin{icmlauthorlist}
    \icmlauthor{Yuan Feng}{equal,cs,ddl}
    \icmlauthor{Junlin Lv}{equal,cs,ddl}
    \icmlauthor{Haoyu Guo}{bio,ddl}
    \icmlauthor{Yukun Cao}{ddl,xd}
    \icmlauthor{S Kevin Zhou}{bio,ddl}
    \icmlauthor{Xike Xie}{bio,ddl}
    
  \end{icmlauthorlist}

  \icmlaffiliation{cs}{School of Computer Science, University of Science and Technology of China}
  \icmlaffiliation{bio}{School of Biomedical Engineering, USTC}
  \icmlaffiliation{ddl}{Data Darkness Lab, MIRACLE Center, Suzhou Institute for Advanced Research}
  \icmlaffiliation{xd}{School of Computer Science and Technology, Xidian University}

  \icmlcorrespondingauthor{Xike Xie}{xkxie@mail.ustc.edu.cn}

  \icmlkeywords{Machine Learning, ICML}

  \vskip 0.3in
]



\printAffiliationsAndNotice{}  

\begin{abstract}

Large language models have revolutionized natural language processing but face significant challenges of high storage and runtime costs, due to the transformer architecture's reliance on self-attention, particularly the large KV cache for long-sequence inference. 
Recent efforts to reduce KV cache size by pruning less critical entries based on attention weights remain empirical and lack formal grounding.
This paper presents a formal study on identifying critical KV cache entries by analyzing attention output perturbation.
Our analysis reveals that, beyond attention weights, the value states within KV entries and pretrained parameter matrices are also crucial. 
Based on this, we propose a perturbation-constrained selection algorithm that optimizes the worst-case output perturbation to identify critical entries. We demonstrate that our algorithm is a universal, plug-and-play enhancement that incurs negligible computational overhead. When integrated with three state-of-the-art cache eviction methods on three distinct LLMs, our algorithm significantly reduces the compression loss by more than \textit{half} on average across 29 datasets from the Ruler and LongBench benchmarks. Further perturbation analysis, at both the head and layer levels, confirms the principles underlying our effectiveness.
This work offers a new, formally grounded perspective to  cache eviction , opening promising avenues for future research. The code is publicly available at \url{https://github.com/FFY0/DefensiveKV}.

\end{abstract}

\begin{figure*}[h!]
	\begin{subfigure}[b]{\linewidth}
		\centering
		\includegraphics[width=0.32\linewidth]{./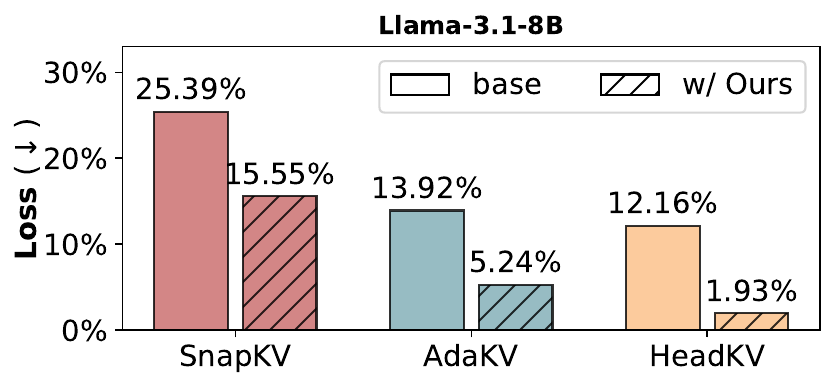}
		\includegraphics[width=0.32\textwidth]{./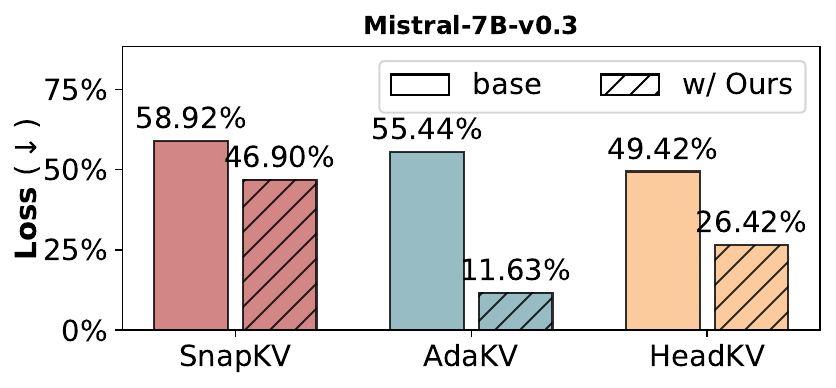}
		\includegraphics[width=0.32\textwidth]{./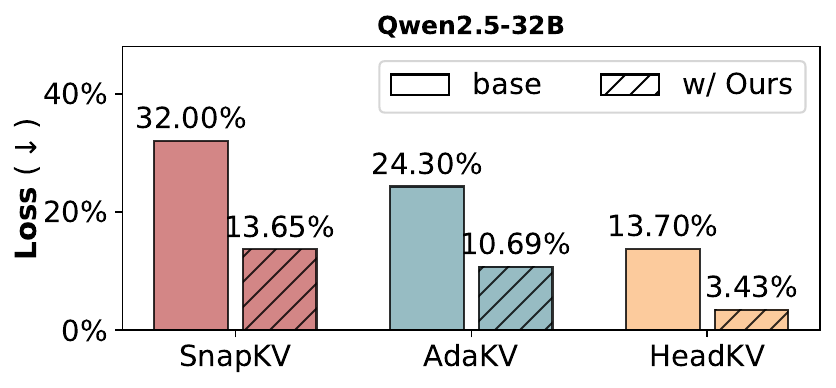}
		\caption{Average loss on 13 Ruler datasets.}
	\end{subfigure}
	\begin{subfigure}[b]{\linewidth}
		\centering
		\includegraphics[width=0.32\linewidth]{./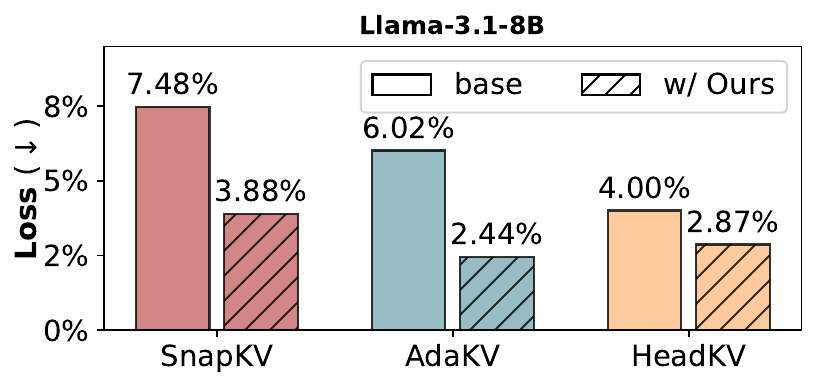}
		\includegraphics[width=0.32\textwidth]{./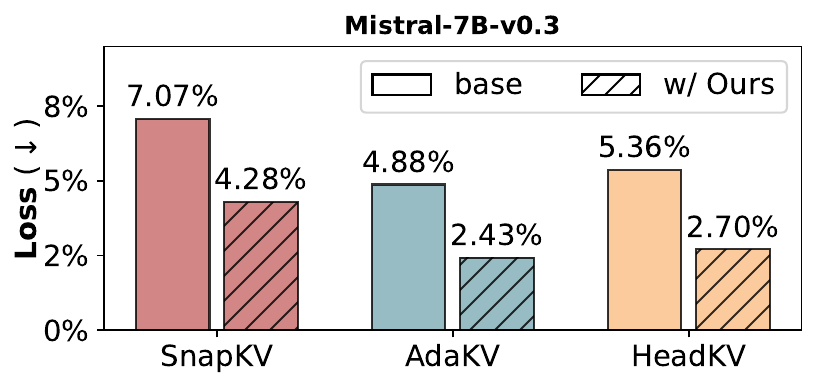}
		\includegraphics[width=0.32\textwidth]{./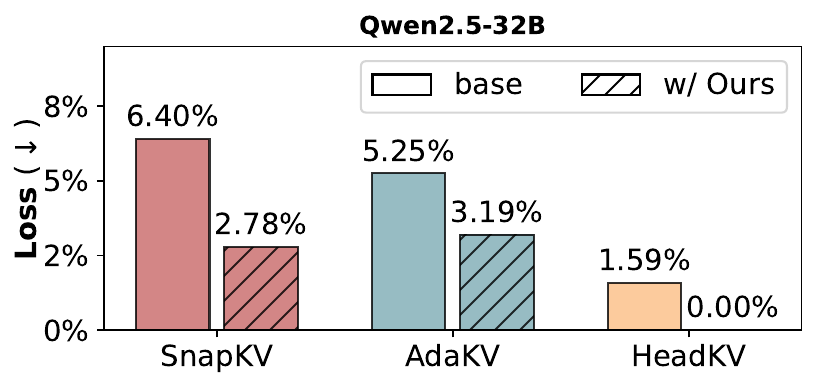}
		\caption{Average loss on 16 Longbench datasets.}
	\end{subfigure}
	\caption{Our algorithm reduces the loss of three existing cache eviction methods by more than \textit{half} on average. (shown at 40\% cache size; see experiments for other sizes).}
	\label{fig:loss_summ}
\end{figure*}

\section{Introduction}

Large language models (LLMs) using transformer architecture have excelled in many tasks, likes  chatbots \citep{achiam2023gpt, yi2024survey} and intelligent agents \citep{Wang_2024}. However, the quadratic computational cost inherent in the transformer’s self-attention mechanism poses significant challenges for practical deployment.
To mitigate this, LLMs often use a Key-Value (KV) cache, which stores intermediate results from the self-attention mechanism. Each KV cache entry corresponds to the KV states of a past token, thus allowing for bypassing the recomputation  during autoregressive generation. However, as sequence lengths increase, the number of cache entries expands dramatically.
This not only leads to considerable GPU memory overhead but also significantly increases I/O latency, hindering the  real-world deployment \citep{sun2024triforce,zhang2026efficientinferencelargevisionlanguage}.

Recent research has identified that only a subset of KV cache entries substantially contribute to the output of the self-attention mechanism \citep{h2o,liu2024scissorhands,quest}. As a result, many methods, known as {\it cache eviction}, have been developed to reduce the KV cache size to fit within a given budget by evicting non-critical entries during inference. These methods effectively save GPU memory and improve subsequent decoding speed.
Notably,  H2O \citep{h2o} and Scissorhands \citep{liu2024scissorhands} observe a power-law distribution of attention weights: a small fraction of KV cache entries consistently dominates the majority of attention weights, aligning closely with the concept of cache entry criticality during inference. 
These methods introduce frameworks that leverage accumulated attention weights to identify and preserve critical cache entries.
Building on this, subsequent works \citep{adnan2024keyformer,SnapKV,ada, headkv} have refined attention weight accumulation and added operations like pooling and budget allocation to better preserve key information.
However, while these methods generally assume that entries with higher attention weights---determined by the similarity between key states in the KV cache and the target query state---are critical, the identification and characterization of ``critical cache entries'' remain unformalized.
This assumption raises two key questions:
\begin{center}
\colorbox{lightgray!20}{\parbox{0.98\linewidth}{
\begin{enumerate}
	\item  {\it What criteria determine the critical KV cache?}
	\item  {\it Is reliance on attention weights alone sufficient for identifying critical cache entries?}
\end{enumerate}
}}	
\end{center}

In this paper, we define the problem of critical entry identification in cache eviction from the perspective of output perturbation. This approach is grounded in the key insight that KV cache eviction loss is driven by changes in the attention output. Our primary objective, therefore, is to minimize this perturbation when replacing full KV cache with only critical entries. To formalize this, we introduce a theoretical framework that bounds the worst-case perturbation to guide  practical optimization. Specifically, to quantify this perturbation, we employ the simple $L_1$ distance and derive its upper bound\footnote{We choose the $L_1$ distance for its simplicity, though  our framework supports other metrics. For example, employing the $L_2$ distance yields similar gains (see Appendix~\ref{apdx:distance}). }, corresponding to the worst-case perturbation. Our analysis reveals that this upper bound is influenced by both the attention weights and the value states projected through the parameter matrix.
Based on these insights, we propose a perturbation-constrained selection algorithm designed to minimize this derived upper bound. It goes beyond mere reliance on attention weights, underscoring the significance of previously overlooked value states and the pretrained parameter matrix.

We integrate our algorithm into three state-of-the-art (SOTA) cache eviction methods, SnapKV~\citep{SnapKV}, AdaKV~\citep{ada} and HeadKV~\citep{headkv}, replacing their reliance on solely attention-weight-based strategies. \footnote{The code for our proposed algorithm is publicly available at \url{https://github.com/FFY0/DefensiveKV}. Notably, the repository also hosts a further enhanced version of our approach, as detailed in follow-up study \cite{feng2026defensivekv}. }
Comprehensive evaluations on 29 datasets from Ruler and LongBench, as summarized in Figure~\ref{fig:loss_summ}, demonstrate that our method serves as a universal enhancement, substantially improving post-eviction generation quality.
Further empirical analysis confirms and elucidates the practical benefits of our algorithm: (1) It effectively reduces output perturbation in over 92\% of the Llama model's attention heads. (2) Its advantages accumulate across layers, significantly lowering the perturbation in final-layer hidden states. (3) It consistently performs well across various cache sizes, robustly mitigating quality loss under different resource constraints in practical applications. Our contributions can be summarized as follows:

\begin{enumerate}
	\item We highlight that current cache eviction methods neglect the crucial problem of identifying critical KV cache entries. To address this, we propose using output perturbation as a criterion for determining criticality. Our analysis shows that attention weights alone are insufficient; the value states projected by the parameter matrix are also essential.
	\item  Building on the constraint of worst-case output perturbation, we propose a novel critical entry selection algorithm as a universal enhancement. 
	When integrated with three SOTA eviction techniques, it reduces compression loss by more than half on average, as validated across three distinct LLMs on 29 datasets from  Ruler and LongBench benchmarks (Figure~\ref{fig:loss_summ}).

	\item  Further empirical analysis examines and confirms the benefits of our perturbation-constrained selection algorithm. This analysis also highlights the significant potential for optimizing critical cache selection from the theoretical perspective of output perturbation.
\end{enumerate}

\section{Related Works}

\textbf{Perturbation-based analysis} has achieved remarkable success in neural network interpretability and pruning. For example, Catformer~\citep{pmlr-v139-davis21a} and Admin~\citep{liu-etal-2020-understanding} utilize output perturbation analysis to create more stable network architectures and enhance training methods. Similarly, pruning techniques~\citep{learning, sparsegpt}, with Wanda~\citep{wanda} as a representative, aim to identify neurons whose removal minimally impacts output, thereby reducing network parameters.
In this paper, we present the first analysis of output perturbations aimed at developing more effective selection metrics for cache eviction in efficient LLM inference.

\textbf{KV cache eviction} aims to retain only critical KV cache entries while evicting non-essential ones to reduce cache size, facilitating efficient long-sequence inference in LLMs. Early methods \citep{streamingllm}, which preserved recent entries in a sliding window, risked losing important information in long sequences. Techniques like H2O \citep{h2o} and Scissorhands \citep{liu2024scissorhands} used accumulated attention scores to identify key entries, aiming to retain crucial context. Subsequent works refined these methods \citep{ ge2024modeltellsdiscardadaptive,adnan2024keyformer,ge2024model,SnapKV}, with SnapKV \citep{SnapKV} achieving the SOTA performance through introducing observation window-based attention weight accumulation and pooling operations. However, these methods are largely empirical, relying solely on attention weights to identify critical entries.
 Our paper introduces a novel perturbation-constrained selection algorithm based on in-depth analysis from an output perturbation perspective. This algorithm seamlessly integrates into existing cache eviction methods without altering underlying accumulation processes. 
Additionally, recent advances in budget allocation optimization \citep{yang2024pyramidinfer,pyramidkv,headkv,duo,zhang2025lighttransfer}—such as AdaKV [\citep{ada}], which adaptively allocating budgets based on head characteristics, and HeadKV~\citep{headkv}, which uses fine-grained offline profiling to guide allocation—are fundamentally orthogonal to our approach.
For comprehensive demonstration, we integrate our algorithm with these three representative cache-eviction lines—SnapKV, AdaKV, and HeadKV—and observe substantial gains across all three.

\section{Critical KV Cache Entry Selection}
\label{sec:mainCritical}

For critical cache entry selection, we aim to choose cache entries that  represent the entire KV cache during self-attention computation, producing an output that is a close approximation, if not identical.
Base on this insight, we formalize the problem of identifying critical cache entries from the perspective of output perturbation (Definition \ref{def:problem}) in Section~\ref{subsec:def}. Subsequently, in Section~\ref{subsec:how}, we formalize the output perturbation and derive its upper bound.
Guided by this bound, we further propose a two-stage greedy algorithm to constrain worst-case perturbation , analyze its theoretical properties , and integrate it with SOTA methods.


\subsection{Preliminaries}
\label{subsec:pre}
LLMs utilizing the multi-head self-attention mechanism operate with an autoregressive generation approach. In this setup, each decoding step leverages the most recently generated token to predict the next one. To illustrate this process, we focus on a single attention head as an example.
Let $X \in \mathbb{R}^{n \times d}$ denote the embedding matrix for all tokens in the sequence, with $x = X_{-1,:} \in \mathbb{R}^{1 \times d}$ representing the embedding vector of the most recent token, which serves as input at the current time step.
The parameter matrices, denoted by $W^Q$, $W^K$, and $W^V \in \mathbb{R}^{d \times d_h}$ are used to map the token embeddings into their respective Query, Key, and Value states with head dimension $d_h$ as follows:

{\small
\begin{equation}
	q = xW^Q ; K = XW^K; V = XW^V
\end{equation}
}

During the decoding phase, the Key and Value states of previously generated tokens (represented by $X$) are stored in the KV cache, allowing for the elimination of redundant computation.
Accordingly, the query $q$, derived from the most recent token $x$, attends to the cached Key $K$ to compute the attention weights $A$. These weights are then applied to the cached Value $V$, producing an intermediate output.
This intermediate result is subsequently transformed into the final output $o$ of the self-attention mechanism by the output parameter matrix $W^O \in \mathbb{R}^{d_h \times d}$:

{
\small
\begin{equation}
	\label{eqn:o}
	o = A V W^O, \: \text{where} \: A = \text{softmax}\left(qK^T/\sqrt{d}\right)
\end{equation}
}

\subsection{What criteria determine the critical KV cache?}
\label{subsec:def}
Recent research has demonstrated only a small portion of critical KV cache entries do substantially contribute to  attention output~\citep{h2o}.
This insight presents promising opportunities to reduce inference costs by evicting a large number of non-critical KV cache entries~\citep{SnapKV,pyramidkv,ada,ge2024modeltellsdiscardadaptive}
However, the primary challenge is accurately identifying critical KV cache entries.
Ideally, from a high-level perspective, the set of critical KV cache entries should completely represent the entire cache, ensuring for given query state, the selected entries yield the same attention output as the full set of KV pairs. In practice, the number of selected critical cache entries will be constrained by a predefined budget, which is closely tied to the computational resources available in downstream deployments. 
 Consequently, our goal is to minimize the output perturbation caused by this approximation, allowing us to reformulate  as follows.
\begin{definition}[Critical KV Cache Identification Problem]
	\label{def:problem}
Given a critical cache budget $b$, the task is to select $b$ critical KV cache entries  $\langle \hat{K}, \hat{V}\rangle$ from a total of $n$ cache entries $\langle K , V \rangle$, with the goal of minimizing the perturbation in the attention output $o$. By using the $L_1$ distance $\mathcal{L}$ for quantification, the objective is formalized as:
	$
		\argmin_{\langle\hat{K}, \hat{V}\rangle} \: \mathcal{L} =  \lVert o - \hat{o} \rVert_1
	$
	, where $\hat{o}$ represents the attention output produced by the selected $\langle\hat{K}, \hat{V}\rangle$.
\end{definition}

\subsection{Are attention weights sufficient for identifying critical cache entries?}
\label{subsec:how}
According to Definition \ref{def:problem}, the goal of identifying critical KV cache entries is to minimize the perturbation $\mathcal{L} = \lVert o -\hat{o} \rVert_1$.
To achieve this, we can employ an additive masking $ \mathcal{M} $ to simulate the removal of non-critical cache entries' contributions to the final output $\hat{o}$, thereby altering $\hat{o}$.

{
\small
	\begin{align}
        \hat{o} &= A' V W^O, \: A' = \text{softmax}\left(\mathcal{M}+qK^T/\sqrt{d} \right) \\
        &\text{where} \: \mathcal{M}_{i} =
        \begin{cases}
            -\infty 		&  \text{if  $K_i \: \text{and} \: V_i$ {are non-critical} }\\
            0 &\text{otherwise.}\\
        \end{cases} 
    \end{align}
}

Thus, the perturbation $\mathcal{L}$ can be further expressed as:
$
	\mathcal{L} = \lVert (A-A')VW^O\rVert_1
$
\begin{theorem}
	\label{thm:mask_rewrite}
	By introducing a mask $\mathcal{N}\in \mathbb{R}^{n}$ applied through element-wise multiplication denoted by $\odot$,  we can establish the relation between $A'$ and $A$ as follows:
	 {\small
 	\begin{align}
        A' &= \frac{\mathcal{N} \odot A}{\sum_{i=1}^{n} \mathcal{N}_i A_i} \quad \text{where} \: \mathcal{N}_{i} =
		\begin{cases}
			0 		&  \text{if  $K_i , V_i$ {is non-critical} } \\
			1 &\text{otherwise.}\\
		\end{cases} \\
		&\text{and} \sum\nolimits_{i=1}^{n} \mathcal{N}_i = b
\end{align}}
\begin{proof}
	See Appendix~\ref{apdx:proof3.2} for details.  
\end{proof}  
\end{theorem}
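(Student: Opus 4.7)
The plan is to verify the identity by directly expanding the softmax defining $A'$ and observing that the additive $\{-\infty,0\}$ mask $\mathcal{M}$ inside softmax becomes a multiplicative $\{0,1\}$ mask outside, which then reduces to masking $A$ itself followed by renormalization. Let $s_i = qK_i^T/\sqrt{d}$ denote the pre-softmax logits, so that $A_i = \exp(s_i)/Z$ with $Z = \sum_{j=1}^{n}\exp(s_j)$, and
\[
A'_i \;=\; \frac{\exp(\mathcal{M}_i + s_i)}{\sum_{j=1}^{n}\exp(\mathcal{M}_j + s_j)}.
\]

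First I would split by the value of $\mathcal{M}_i$. When $\mathcal{M}_i = -\infty$ (non-critical), the convention $\exp(-\infty) = 0$ gives $A'_i = 0$, which matches the right-hand side because $\mathcal{N}_i = 0$ in that case. When $\mathcal{M}_i = 0$ (critical), we have $\exp(\mathcal{M}_i + s_i) = \exp(s_i)$, so summing only the surviving logits yields
\[
A'_i \;=\; \frac{\exp(s_i)}{\sum_{j:\,\mathcal{N}_j=1}\exp(s_j)}.
\]
Dividing numerator and denominator by $Z$ rewrites this as $A_i$ over $\sum_{j}\mathcal{N}_j A_j$. Combining the two cases into a single expression using the indicator $\mathcal{N}_i$ gives $A'_i = \mathcal{N}_i A_i / \sum_{j=1}^{n}\mathcal{N}_j A_j$, which is exactly the claimed identity in vector form with $\odot$ denoting entry-wise multiplication.

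The budget constraint $\sum_{i=1}^{n}\mathcal{N}_i = b$ is inherited directly from the construction of $\mathcal{M}$, since $\mathcal{N}_i = 1$ iff $\mathcal{M}_i = 0$ iff the $i$-th entry is critical, and by assumption exactly $b$ entries are retained. I would also briefly check that the denominator $\sum_j \mathcal{N}_j A_j$ is strictly positive whenever $b \ge 1$, so the expression is well defined; this holds because each $A_j > 0$ under the softmax, and at least one $\mathcal{N}_j = 1$.

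No substantial obstacle is expected here: the statement is essentially the standard observation that masked softmax equals renormalization over the surviving support. The only mildly delicate point is the handling of the $-\infty$ entries, which I would treat as a limit $\mathcal{M}_i \to -\infty$ if strictness is desired, but in the standard computational convention used throughout the paper it can be taken as a definition. The rest of the argument is a one-line algebraic manipulation after pulling the common factor $1/Z$ through numerator and denominator.
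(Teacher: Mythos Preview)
Your proposal is correct and takes essentially the same approach as the paper: both recognize that the additive $\{-\infty,0\}$ mask inside softmax becomes a multiplicative $\{0,1\}$ mask on $\exp(\cdot)$, then divide numerator and denominator by the full normalizer $Z=\sum_j \exp(s_j)$ to express everything in terms of $A$. Your case-by-case presentation and explicit well-definedness check are slightly more careful than the paper's vectorized computation, but the underlying argument is identical.
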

Theorem \ref{thm:mask_rewrite} utilizes a multiplicative mask $\mathcal{N}$ to  quantifies how their selection impacts the attention weights.
However, directly minimizing $\mathcal{L}$  is challenging due to complex matrix operations it requires. Thus we turn to establish an upper bound $\theta$, as shown in Theorem \ref{thm:bound}.
\begin{theorem}
	\label{thm:bound}
	The output perturbation $\mathcal{L}$ can be bounded by $\theta$:
	{\small
		\begin{align}
		\mathcal{L} \leq \theta =  C -  \left( 2- \frac{1}{\sum\nolimits_{i=1}^{n} \mathcal{N}_i A_{i}} \right) \sum\nolimits_{i=1}^{n}  \mathcal{N}_i A_i \lVert \boldsymbol{\mathcal{V}}_{i,:} \rVert_1  ,
	\end{align}
	}
	where $C$ denotes the $\sum\nolimits_{i=1}^{n} A_i \lVert \boldsymbol{\mathcal{V}}_{i,:} \rVert_1$ and $\boldsymbol{\mathcal{V}} \in \mathbb{R}^{n \times d} = VW^O$ denotes all projected values states through parameter matrix $W^O$.
	\begin{proof}  
		See Appendix~\ref{apdx:proof3.3} for details.  
	\end{proof}  
\end{theorem}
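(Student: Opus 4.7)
The plan is to work directly with the identity $A' = (\mathcal{N}\odot A)/Z$ from Theorem~\ref{thm:mask_rewrite}, where I abbreviate $Z = \sum_{i=1}^{n}\mathcal{N}_i A_i$, and then apply the triangle inequality twice to reduce the matrix-norm perturbation to a scalar sum that I can split according to the value of $\mathcal{N}_i$.

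First I would substitute $\boldsymbol{\mathcal{V}} = VW^O$ to rewrite $\mathcal{L} = \lVert (A - A')\boldsymbol{\mathcal{V}}\rVert_1$, and note that the $j$-th coordinate of the $1 \times d$ row vector $(A-A')\boldsymbol{\mathcal{V}}$ equals $\sum_i (A_i - A'_i)\boldsymbol{\mathcal{V}}_{i,j}$. Taking the $L_1$ norm over $j$ and pulling the absolute value inside the sum over $i$ via the triangle inequality, I then swap the summation order to obtain the clean scalar bound
\begin{equation*}
\mathcal{L} \;\leq\; \sum_{i=1}^{n} |A_i - A'_i|\,\lVert \boldsymbol{\mathcal{V}}_{i,:}\rVert_1 .
\end{equation*}

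Next I would compute $|A_i - A'_i|$ explicitly by splitting on the mask. For non-critical indices ($\mathcal{N}_i = 0$) Theorem~\ref{thm:mask_rewrite} gives $A'_i = 0$, hence $|A_i - A'_i| = A_i$. For critical indices ($\mathcal{N}_i = 1$) it gives $A'_i = A_i / Z$, and because $Z = \sum_i \mathcal{N}_i A_i \leq \sum_i A_i = 1$ (this monotonicity observation is the one place where I must be careful with signs), the difference $A_i - A_i/Z$ is non-positive, so $|A_i - A'_i| = A_i(1-Z)/Z$. Let $S = \sum_{i=1}^{n}\mathcal{N}_i A_i \lVert \boldsymbol{\mathcal{V}}_{i,:}\rVert_1$; then the non-critical contribution is $C - S$ and the critical contribution is $\frac{1-Z}{Z}\,S$.

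Finally I would combine these two contributions and simplify: $(C - S) + \tfrac{1-Z}{Z}S = C - 2S + S/Z = C - (2 - 1/Z)\,S$, which is exactly the expression $\theta$ in the statement. The main obstacle I anticipate is not the triangle-inequality step but the sign bookkeeping in the second step: one must justify $Z \leq 1$ (so that $1-Z \geq 0$) in order to legitimately drop the absolute values and collapse the two cases into the single closed-form $(2 - 1/Z)$ coefficient; all remaining work is routine algebraic rearrangement.
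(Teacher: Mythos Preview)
Your proposal is correct and follows essentially the same approach as the paper's proof: both substitute $\boldsymbol{\mathcal{V}}=VW^O$, apply the triangle inequality to obtain $\sum_i |A_i-A'_i|\,\lVert\boldsymbol{\mathcal{V}}_{i,:}\rVert_1$, split on $\mathcal{N}_i\in\{0,1\}$, and regroup into the closed form $C-(2-1/Z)S$. If anything, you are slightly more explicit than the paper in flagging that $Z\le 1$ is what justifies dropping the absolute value on the critical indices; the paper simply writes the positive sign there without comment.
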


 Since $\theta$ depends on both attention weights and projected value states, prior methods relying solely on attention weights are inherently suboptimal.

\begin{figure}[tb]  
	\begin{minipage}[t]{0.48\textwidth}  
		\begin{algorithm}[H]  
			\small  
			\caption{Perturbation-Constrained Selection }  
			\label{alg:selection}  
			\textbf{Input}: Budgets $b$, Query State $q$, Cache Entries $K,V$, Parameter Matrix $W^O$, Hyper Parameter $\alpha$ = 0.25\\
			\textbf{Output}: Critical Cache Entries $\hat{K},\hat{V}$  \\
			\begin{algorithmic}[1] 
				\STATE initialize  empty cache  $\hat{K},\hat{V}$   
				\STATE $A = \text{softmax}(qK^T)$; $\boldsymbol{\mathcal{V}} = VW^O$  
				\STATE $\boldsymbol{\mathcal{A}} = (A + \epsilon) \odot ( L_1 \: \text{norm of each rows in}\: \boldsymbol{\mathcal{V}})$  
				\STATE $b' = b \times \alpha$;  $b'' = b - b'$  
\STATE \textbf{for all} {$K_i,V_i \in K,V$ that $A_i \in \text{Top}_k(A,b')$} 
\STATE \quad add $K_i, V_i$ to $\hat{K},\hat{V}$   \hfill \textbf{Stage 1} 
\STATE \quad remove $\boldsymbol{\mathcal{A}}_i, K_i, V_i$ from $\boldsymbol{\mathcal{A}},K,V$
				
\STATE \textbf{for all} {$K_i,V_i \in K,V$  that $\boldsymbol{\mathcal{A}}_i \in \text{Top}_k(\boldsymbol{\mathcal{A}},b'')$}: 
\STATE \quad add $K_i, V_i$ to $\hat{K},\hat{V}$  \hfill \textbf{Stage 2} 

				\RETURN  Critical Cache Entries $\hat{K},\hat{V}$  
			\end{algorithmic}  
		\end{algorithm}  
	\end{minipage}  
	\hfill  
	\begin{minipage}[t]{0.48\textwidth}
		\begin{algorithm}[H]  
			\small  
			\caption{Observation Win Based Eviction.}  
			\label{alg:cache_eviction}  
			\textbf{Input}: All Query States $Q \in \mathbb{R}^{n \times d_{h}}$, KV Cache Entries $K,V  \in \mathbb{R}^{n \times d_{h}}$, Window Size $n'$\\
			\textbf{Output}: Critical Cache Entries $\hat{K},\hat{V}$ \\ 
			\begin{algorithmic}[1] 
				\STATE  allocate budget $b$ across heads \#AdaKV,HeadKV   
				\STATE  $\hat{Q} = Q[-n':,:] \:$ 	
				\STATE  $A = \text{softmax}(\hat{Q}K^T) \:$; $\bar{A} = A.\text{mean}(dim = 0)$  
				\STATE $\bar{A} =  \text{maxpooling}(\bar{A}) $ \# SnapKV   
				\IF{ using regular selection}  
				\STATE  select $b$ critical entries $\hat{K},\hat{V}$ by $\text{Top}_k(\bar{A}', b)$  
				\ELSIF{ using our selection}  
				\STATE select $b$ critical entries $\hat{K},\hat{V}$ by  Algorithm~\ref{alg:selection}  
				\ENDIF\\
				\RETURN Critical Cache Entries $\hat{K},\hat{V}$  
			\end{algorithmic}  
		\end{algorithm}  
	\end{minipage}  
\end{figure}  

\subsection{Identify critical cache entries by constraining worst-case perturbation.}

\label{sc:alg}
Drawing on optimization strategies in machine learning,  we propose lowering the upper bound of perturbation, effectively constraining the worst-case perturbation and thereby reducing actual perturbations for identifying critical cache entries.
However, minimizing the upper bound $\theta$ still remains non-trivial.
 To balance both the complexity and  effectiveness, we introduce a two-stage greedy selection Algorithm \ref{alg:selection}, specifically designed to lower the perturbation upper bound for critical cache entry identification.

In this algorithm, the total budget \(b\) is divided into two portions based on a hyperparameter \(\alpha\). In the first stage, a fraction of the budget, \(b' = b \times \alpha\), is allocated to prioritize KV cache entries with high attention weights. In the second stage, the remaining budget, \(b'' = b-b'\), is used to consider both the norms of the projected value states and the attention weights \footnote{A small  $\epsilon$ (1E-4) is added to mitigate information loss from sparse attention weights during multiplication.}. This two-stage selection employs a Top-K operation to effectively constrain the worst-case perturbation. To substantiate the effectiveness of our proposed algorithm, we provide a theoretical analysis in the following section.

\subsection{Theoretical analysis of Algorithm~\ref{alg:selection}}
\label{sc:alg_analy}

Our proposed algorithm consists of two stages, which work collaboratively to select critical cache entries.  Under the guarantee provided by Assumption~\ref{asp:power_law}, the selection in stage 1 ensures that stage 2 adheres to the constraints on perturbations, as formalized in Theorem \ref{thm:target}. Let  $\mathcal{N}'$ and $\mathcal{N}''$	represent the selections from the stage 1 and 2, respectively, satisfying: $\sum\nolimits_{i=1}^{n} \mathcal{N}'_i =b'$ and  $\sum\nolimits_{i=1}^{n} \mathcal{N}''_i =b''$.Thus,  the overall selection is $\mathcal{N} = \mathcal{N}' +  \mathcal{N}''$.
\begin{assumption}
	\label{asp:power_law}
	In the first stage, a portion of the overall budget $b' =  b \times \alpha$ is sufficient to collect the cache entries corresponding to the highest attention weights, ensuring their cumulative attention weights $\sigma $ exceed half of the total, i.e., $ \sigma =\sum\nolimits_{i=1}^{n}  \mathcal{N}'_i A_i = \sum \text{Top}_k(A,  b') > 0.5$.
\end{assumption}

In this paper, we set $\alpha$ in Assumption~\ref{asp:power_law} to a fixed value 0.5 based on two key considerations. First, as verified in Appendix \ref{apdx:check_asp}, allocating 50\% of the total budget is sufficient to capture enough attention weight in over 99\% of attention heads, thereby satisfying Assumption \ref{asp:power_law} across various settings. This is attributed to the power-law distribution of attention weights \citep{h2o}, where a small fraction of cache entries accounts for the majority. Second, this choice is both robust and easy to apply across different cache budgets and models. While using different $\alpha$ values for specific models, budgets, or attention heads could yield finer optimization, it would also introduce significant search overhead and complicate deployment. Thus, we defer such granular adjustments to future work.  Subsequent experiments and visual analyses further confirm that setting $\alpha$ to 0.5 is a simple yet effective choice.~\footnote{As shown in Section~\ref{sec:ana}, our algorithm's performance is robust to the choice of hyperparameter $\alpha$.}
\begin{theorem}
	\label{thm:target}
	Given the stage 1 selection $\mathcal{N}'_i$, the objective $\mathcal{N}''_i$ of stage 2  is to minimize an upper bound $\hat{\theta}$ of the output perturbation $\mathcal{L}$, using the remaining budget $b'' = b - b'$.
	{\small
			\begin{align}
			\argmin_{\mathcal{N}''_i}\hat{\theta} \:  \text{where} \: \hat{\theta} =   C' - \left(2 - \frac{1}{\sigma}\right)\sum\nolimits_{i=1}^{n}&  \mathcal{N}''_i A_i \lVert \boldsymbol{\mathcal{V}}_{i,:}  \rVert_1 \notag \\ \text{s.t.}  \: \sum\nolimits_{i=1}^{n} \mathcal{N}''_i = b'',
			C' =   C -  \left(2 - \frac{1}{\sigma}\right)& \sum\nolimits_{i=1}^{n} \mathcal{N}'_i A_i \lVert \boldsymbol{\mathcal{V}}_{i,:} \rVert_1.
		\end{align}
	}
	\begin{proof}  
		See Appendix~\ref{apdx:proof3.5} for details.  
	\end{proof}
\end{theorem}
Theorem  \ref{thm:target}  demonstrates that our second stage selection directly minimizes an upper bound of output perturbation for identifying critical cache. Unlike traditional strategies that rely solely on attention weights, the second stage of our algorithm jointly leverages both the attention weights and the value states projected through the parameter matrix, to directly constrain the worst-case output perturbation.

\subsection{Integrating into SOTA cache eviction methods}
\label{sc:integ}
We showcase the effectiveness of our algorithm by integrating it into existing cache eviction methods that rely on accumulated attention weights for selection.
Current SOTA  cache eviction workflow is established by SnapKV~\citep{SnapKV}, which introduces an observation window mechanism to stably accumulate attention weights and employs the max pooling operations to avoid missing information. Subsequent research~\citep{ada} highlights the uneven distribution of critical cache entries across different heads, prompting the development of budget allocation strategies.
For example, AdaKV~\citep{ada} improves upon SnapKV by dynamically detecting variations in critical KV cache entries at runtime, enabling budget scheduling and enhancing output quality. Other methods, such as HeadKV~\citep{headkv}, further refine budget scheduling, albeit at the cost of offline training. Despite their differences in budget allocation, these SOTA methodsall use the same underlying mechanism for KV cache selection. Consequently, they can be unified under the framework of Algorithm \ref{alg:cache_eviction}, which consists of two main components: budget allocation across heads (line 1) and an observation window with a pooling mechanism for attention weight accumulation (lines 2–5). Our algorithm integrates seamlessly into this framework by replacing the original selection process, which is based solely on attention weights (lines 5–9).

\begin{table*}[t!]
	\centering
	\caption{Detail Results on Ruler Benchmark with 40\% Cache Size}
	\label{tab:ruler}
	\resizebox{0.95\linewidth}{!}{%
		\begin{tabular}{
				@{}l@{}|l>{\hspace{-0.9em}}c>{\hspace{-0.9em}}c>{\hspace{-0.9em}}c>{\hspace{-0.9em}}c>{\hspace{-0.9em}}c>{\hspace{-0.9em}}c>{\hspace{-0.9em}}c>{\hspace{-0.9em}}c>{\hspace{-0.9em}}c>{\hspace{-0.9em}}c>{\hspace{-0.9em}}c>{\hspace{-0.9em}}c>{\hspace{-0.9em}}c>{\hspace{-0.9em}}c@{}
			}
			\toprule
			\multicolumn{1}{l}{} & \makecell{ }  & \rotatebox[origin=c]{-0}{\makecell{CWE}} & \rotatebox[origin=c]{-0}{\makecell{FWE}} & \rotatebox[origin=c]{-0}{\makecell{NIAH\\{\scriptsize Multikey1 }}}   & \rotatebox[origin=c]{-0}{\makecell{NIAH\\{\scriptsize Multikey2 }}}   & \rotatebox[origin=c]{-0}{\makecell{NIAH\\{\scriptsize Multikey3 }}}    & \rotatebox[origin=c]{-0}{\makecell{NIAH\\{\scriptsize Multiquery }}}   & \rotatebox[origin=c]{-0}{\makecell{NIAH\\{\scriptsize Multivalue }}}   & \rotatebox[origin=c]{-0}{\makecell{NIAH\\ {\scriptsize Single1 }}}      & \rotatebox[origin=c]{-0}{\makecell{NIAH\\ {\scriptsize Single2 }}} & \rotatebox[origin=c]{-0}{\makecell{NIAH\\{\scriptsize Single3 }}} & \rotatebox[origin=c]{-0}{\makecell{QA1}} & \rotatebox[origin=c]{-0}{\makecell{QA2}} & \rotatebox[origin=c]{-0}{\makecell{VT}} & \rotatebox[origin=c]{-0}{\makecell{Avg. Score  {\scriptsize $\downarrow$ loss} }} \\
			
			\midrule
			
			\multirow{8}{*}{\rotatebox[origin=c]{90}{\makecell{Llama-3.1-8B}}}
			& Full Cache          & 44.90                                     & 95.33                                     & 100.00                                    & 100.00         & 100.00                                    & 98.25          & 99.75                                     & 100.00          & 100.00                                    & 100.00         & 84.00                                     & 62.00          & 99.40                                     & 91.05  {\scriptsize $\downarrow 00.0\%$}       \\
			\cline{2-16}
			& SnapKV              & 20.30                                     & 90.00                                     & 93.00                                     & 36.00          & 31.00                                     & 92.75          & 87.50                                     & \textbf{100.00} & 97.00                                     & 46.00          & 41.00                                     & 52.00          & 96.60                                     & 67.93     {\scriptsize $\downarrow 25.4\%$}       \\
			& \textit{\footnotesize w/ ours}    & \textbf{33.90}                            & \textbf{92.67}                           & \textbf{100.00}                           & \textbf{50.00} & \textbf{32.00}                            & \textbf{99.00} & \textbf{99.00}                            & \textbf{100.00} & \textbf{100.00}                           & \textbf{80.00} & \textbf{63.00}                            & \textbf{53.00} & \textbf{97.00}                            & \textbf{76.89  {\scriptsize $\downarrow 15.6\%$}} \\
			\cline{2-16}
			& AdaKV               & 26.60                                     & 92.00                                     & 98.00                                     & 66.00          & \textbf{71.00}                            & 97.50          & 98.25                                     & 99.00           & \textbf{100.00}                           & 75.00          & 43.00                                     & 54.00          & 98.60                                     & 78.38 {\scriptsize $\downarrow 13.9\%$}         \\
			& \textit{\footnotesize w/ ours}      & \textbf{52.40}                            & \textbf{94.67}                            & \textbf{100.00}                           & \textbf{93.00} & 65.00                                     & \textbf{99.25} & \textbf{98.75}                            & \textbf{100.00} & \textbf{100.00}                           & \textbf{97.00} & \textbf{64.00}                            & \textbf{58.00} & \textbf{99.60}                            & \textbf{86.28 {\scriptsize $\downarrow 5.2\%$}} \\
			\cline{2-16}
			& HeadKV              & 17.10                                     & 92.67                                     & 99.00                                     & 66.00          & 75.00                                     & 97.00          & 93.00                                     & \textbf{100.00} & 98.00                                     & 94.00          & 56.00                                     & 54.00          & 98.00                                     & 79.98    {\scriptsize $\downarrow 12.2\%$}      \\
			& \textit{\footnotesize w/ ours}     & \textbf{56.50}                            & \textbf{93.33}                            & \textbf{100.00}                           & \textbf{93.00} & \textbf{90.00}                            & \textbf{99.50} & \textbf{99.00}                            & \textbf{100.00} & \textbf{100.00}                           & \textbf{99.00} & \textbf{72.00}                            & \textbf{59.00} & \textbf{99.40}                            & \textbf{89.29 {\scriptsize $\downarrow 1.9\%$} } \\
			
			\midrule
			\multirow{8}{*}{\rotatebox[origin=c]{90}{\makecell{Mistral-7B}}}
			& Full Cache       & 30.00          & 95.67          & 94.00          & 69.00          & 31.00          & 94.25          & 95.00          & 99.00          & 100.00          & 100.00         & 60.00          & 59.00          & 90.60          & 78.27    {\scriptsize $\downarrow 00.0\%$}        \\
			\cline{2-16}
			& SnapKV           & 32.20          & 91.67          & 16.00          & \textbf{11.00} & 4.00           & 11.25          & 8.00           & 65.00          & 17.00           & \textbf{2.00}  & 34.00          & 51.00          & 74.80          & 32.15 {\scriptsize $\downarrow 58.9\%$}        \\
			& \textit{\footnotesize w/ ours}  & \textbf{48.50} & \textbf{94.67} & \textbf{31.00} & 7.00           & \textbf{5.00}  & \textbf{26.00} & \textbf{24.25} & \textbf{74.00} & \textbf{48.00}  & \textbf{2.00}  & \textbf{45.00} & \textbf{53.00} & \textbf{81.80} & \textbf{41.56 {\scriptsize $ \downarrow 46.9\% $}} \\
			\cline{2-16}
			& AdaKV            & 25.30          & 92.33          & 25.00          & 14.00          & \textbf{8.00}  & 20.00          & 14.25          & 44.00          & 34.00           & 6.00           & 44.00          & 55.00          & 71.60          & 34.88   {\scriptsize $\downarrow 55.4\%$}        \\
			& \textit{\footnotesize w/ ours}   & \textbf{40.50} & \textbf{95.33} & \textbf{89.00} & \textbf{27.00} & \textbf{8.00}  & \textbf{85.00} & \textbf{95.00} & \textbf{99.00} & \textbf{100.00} & \textbf{54.00} & \textbf{60.00} & \textbf{57.00} & \textbf{89.40} & \textbf{69.17 {\scriptsize $\downarrow 11.6\%$} } \\
			\cline{2-16}
			& HeadKV           & 30.60          & 92.67          & 24.00          & 27.00          & \textbf{17.00} & 16.25          & 16.50          & 70.00          & 36.00           & 4.00           & 45.00          & 53.00          & \textbf{82.60} & 39.59    {\scriptsize $\downarrow 49.4\%$}        \\
			& \textit{\footnotesize w/ ours}  & \textbf{49.50} & \textbf{95.00} & \textbf{53.00} & \textbf{31.00} & \textbf{17.00} & \textbf{60.50} & \textbf{62.25} & \textbf{84.00} & \textbf{82.00}  & \textbf{25.00} & \textbf{53.00} & \textbf{55.00} & 81.40          & \textbf{57.59  {\scriptsize $\downarrow 26.4\%$}  } \\
			
			\midrule
			\multirow{8}{*}{\rotatebox[origin=c]{90}{\makecell{Qwen2.5-32B}}}
			& Full Cache       & 90.60          & 96.00          & 99.00          & 90.00          & 90.00          & 100.00          & 99.25          & 100.00          & 100.00          & 100.00         & 82.00          & 74.00          & 100.00          & 93.91 {\scriptsize $\downarrow 00.0\%$}         \\
			\cline{2-16}
			& SnapKV           & 87.90          & 92.00          & 63.00          & 23.00          & 9.00           & 71.75           & 61.75          & \textbf{100.00} & 94.00           & 14.00          & 50.00          & 64.00          & 99.80           & 63.86 {\scriptsize $\downarrow 32.0\%$}         \\
			& \textit{\footnotesize w/ ours} & \textbf{88.60} & \textbf{93.33} & \textbf{98.00} & \textbf{35.00} & \textbf{16.00} & \textbf{99.75}  & \textbf{99.50} & \textbf{100.00} & \textbf{100.00} & \textbf{88.00} & \textbf{67.00} & \textbf{69.00} & \textbf{100.00} & \textbf{81.09 {\scriptsize $\downarrow 13.7\%$} } \\
			\cline{2-16}
			& AdaKV            & 88.90          & 93.33          & 81.00          & 34.00          & 21.00          & 87.50           & 83.50          & \textbf{100.00} & 97.00           & 19.00          & 55.00          & 64.00          & 100.00          & 71.09  {\scriptsize $\downarrow 24.3\%$}        \\
			& \textit{\footnotesize w/ ours}  & \textbf{89.50} & \textbf{94.33} & \textbf{98.00} & \textbf{54.00} & \textbf{34.00} & \textbf{100.00} & \textbf{99.50} & \textbf{100.00} & \textbf{100.00} & \textbf{91.00} & \textbf{64.00} & \textbf{66.00} & \textbf{100.00} & \textbf{83.87 {\scriptsize $\downarrow 10.7\%$} } \\
			\cline{2-16}
			& HeadKV           & 89.30          & 94.67          & 92.00          & 46.00          & 45.00          & 97.25           & 96.25          & \textbf{100.00} & \textbf{100.00} & 59.00          & 68.00          & 66.00          & \textbf{100.00} & 81.04  {\scriptsize $\downarrow 13.7\%$}        \\
			& \textit{\footnotesize w/ ours} & \textbf{89.90} & \textbf{95.33} & \textbf{99.00} & \textbf{85.00} & \textbf{71.00} & \textbf{100.00} & \textbf{98.75} & \textbf{100.00} & \textbf{100.00} & \textbf{99.00} & \textbf{72.00} & \textbf{69.00} & \textbf{100.00} & \textbf{90.69 {\scriptsize $\downarrow 3.4\%$} } \\
			\toprule
		\end{tabular}%
	}
\end{table*}

\begin{figure*}[t!]
	\begin{minipage}{0.9\linewidth}
		\centering
		\includegraphics[width=0.9\textwidth]{./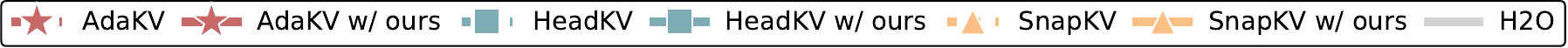}
	\end{minipage}
	\centering
	\begin{subfigure}[b]{0.19\linewidth}
		\centering
		\includegraphics[width=\linewidth]{./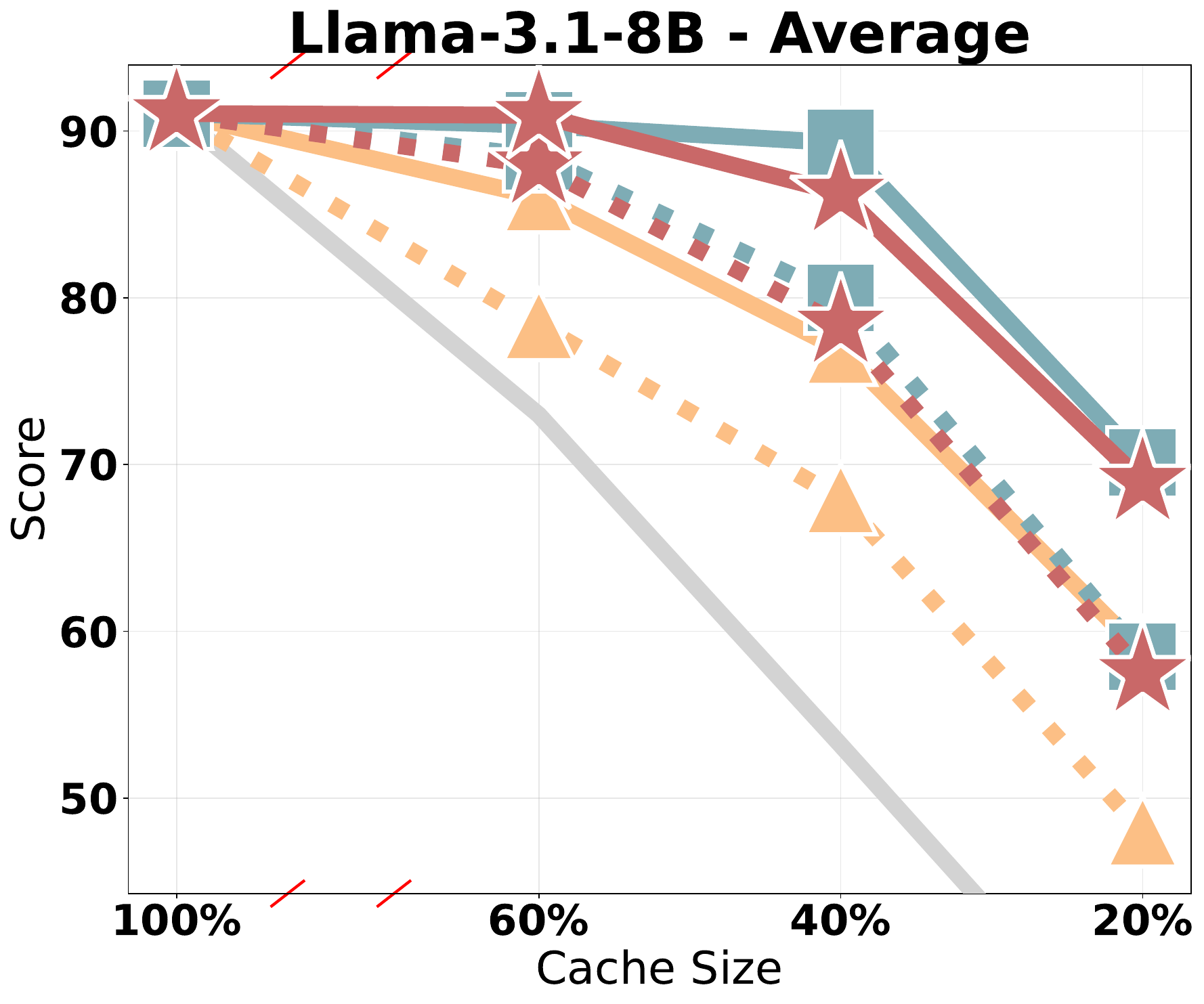}
		
	\end{subfigure}
	\begin{subfigure}[b]{0.19\linewidth}
		\centering
		\includegraphics[width=\textwidth]{./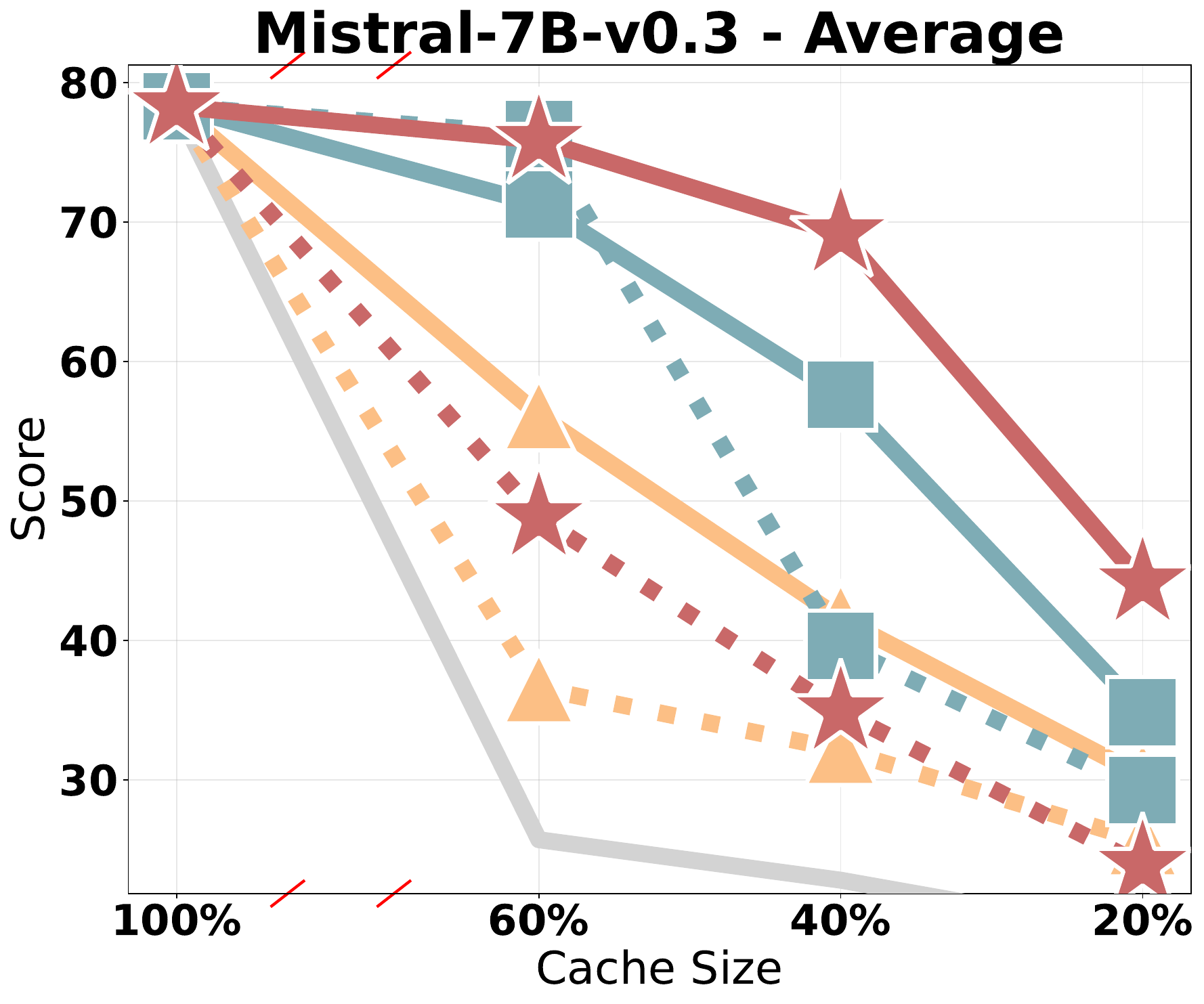}
		
	\end{subfigure}
	\begin{subfigure}[b]{0.19\linewidth}
		\centering
		\includegraphics[width=\textwidth]{./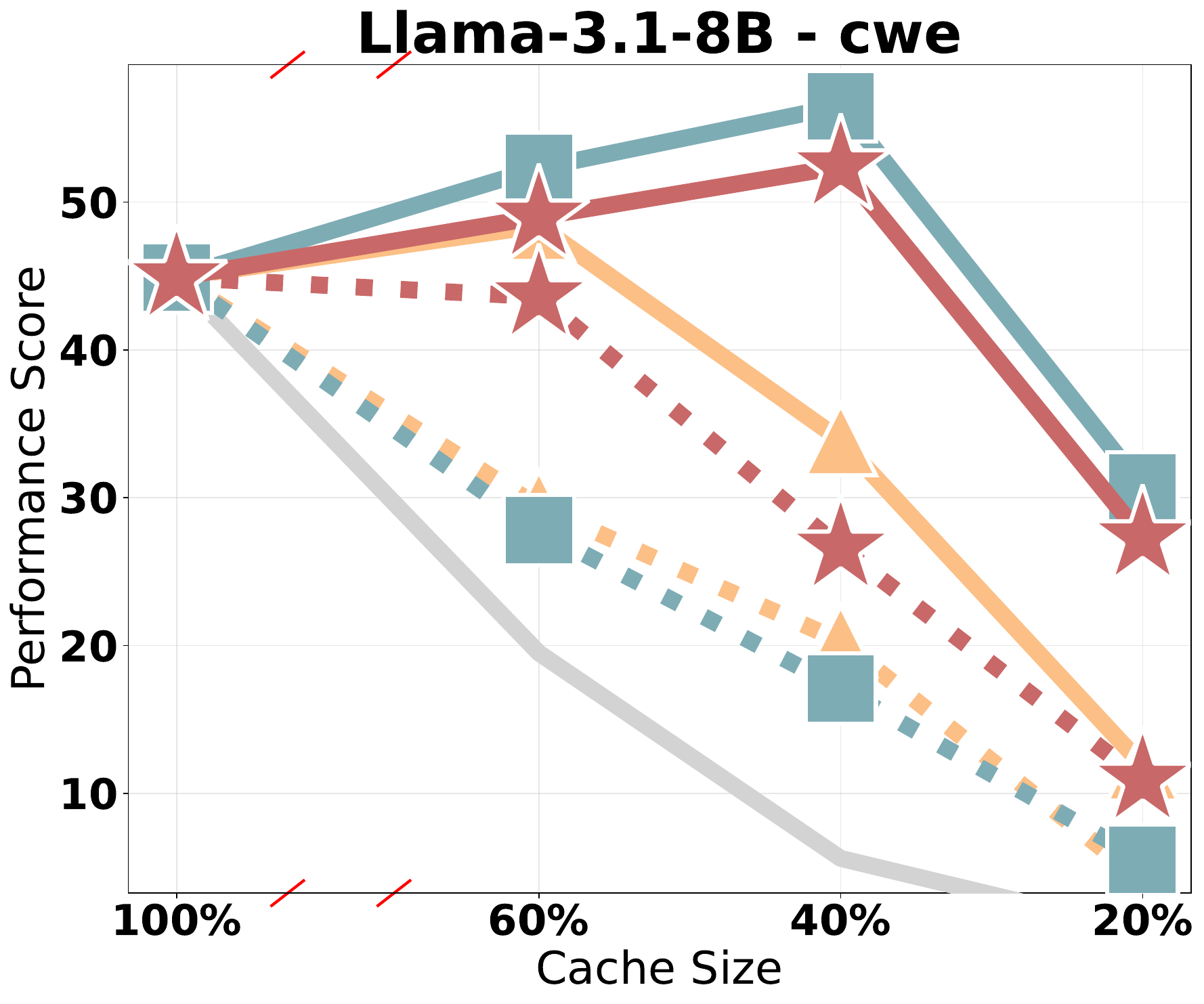}
	\end{subfigure}
	\begin{subfigure}[b]{0.19\linewidth}
		\centering
		\includegraphics[width=\textwidth]{./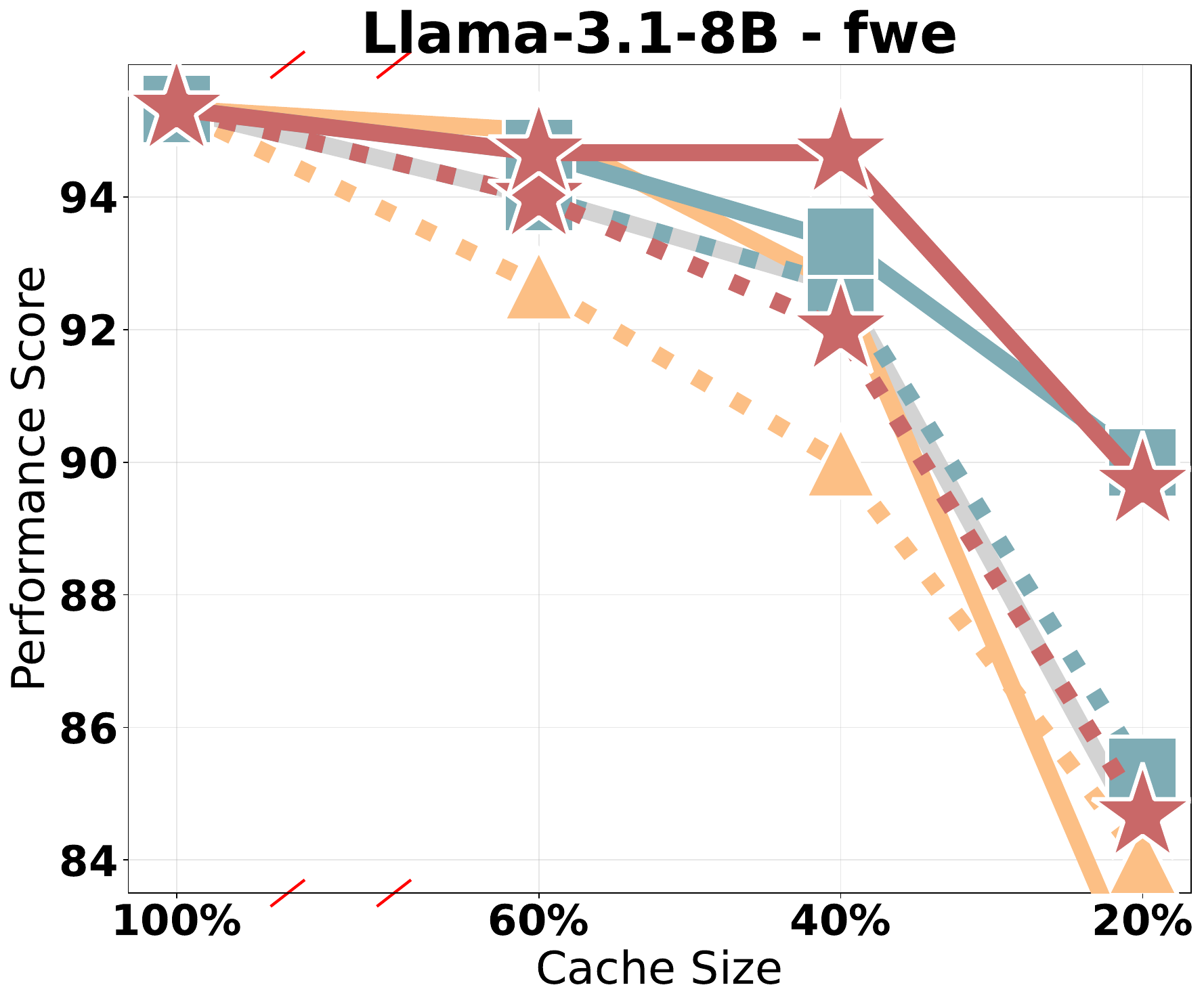}
	\end{subfigure}
	\begin{subfigure}[b]{0.19\linewidth}
		\centering
		\includegraphics[width=\textwidth]{./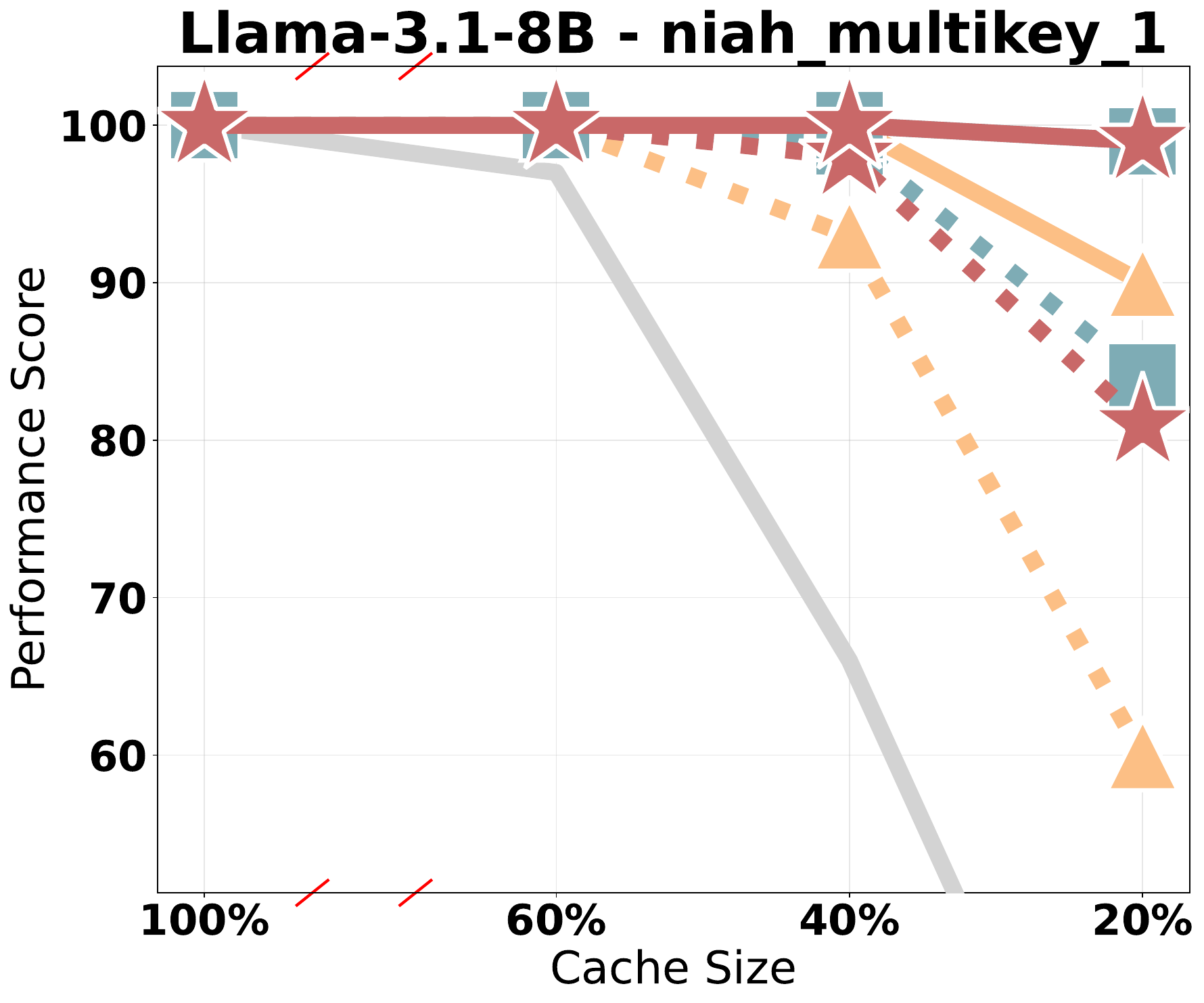}
	\end{subfigure}	
	\begin{subfigure}[b]{0.19\linewidth}
		\centering
		\includegraphics[width=\textwidth]{./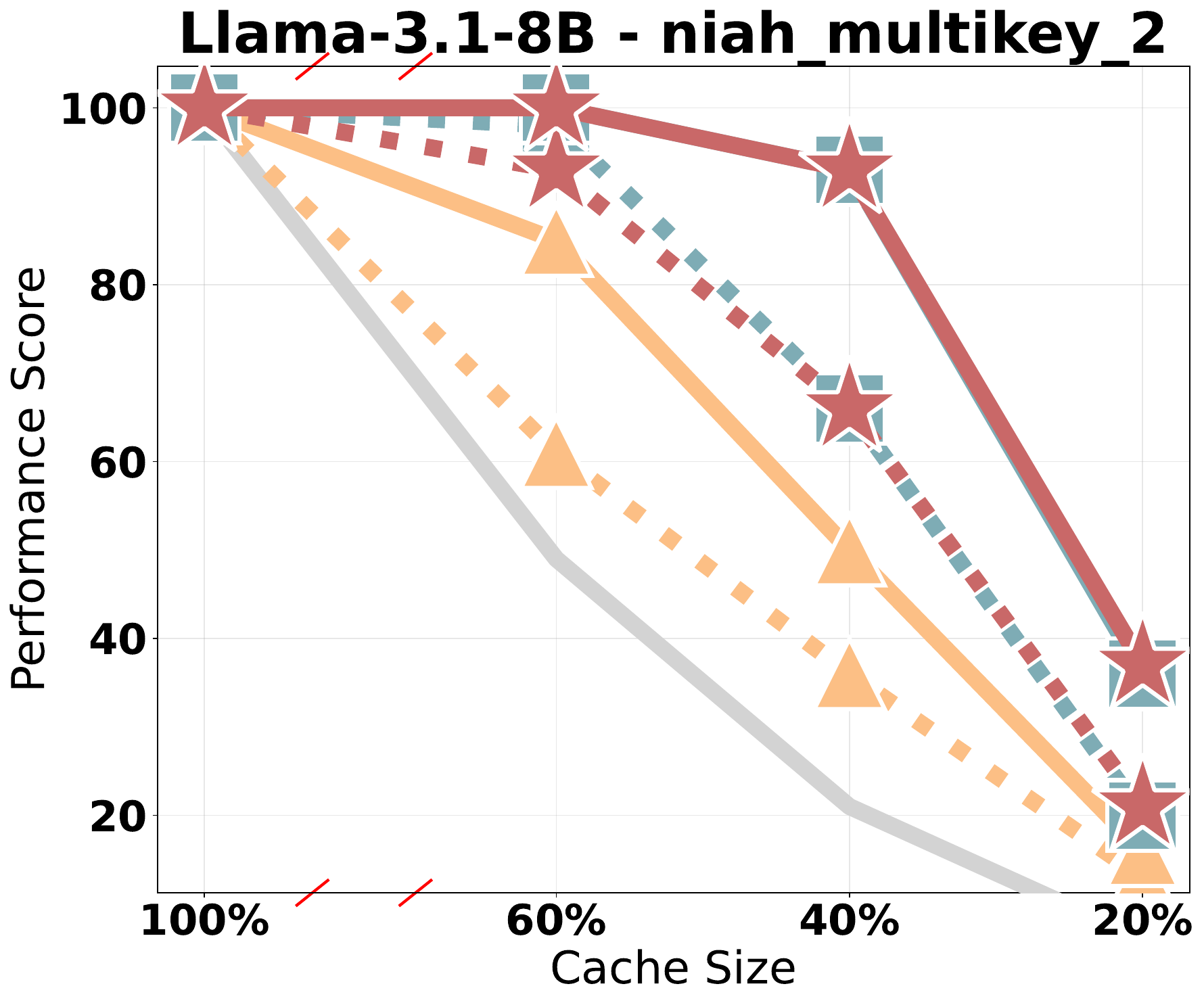}
	\end{subfigure}
	\begin{subfigure}[b]{0.19\linewidth}
		\centering
		\includegraphics[width=\textwidth]{./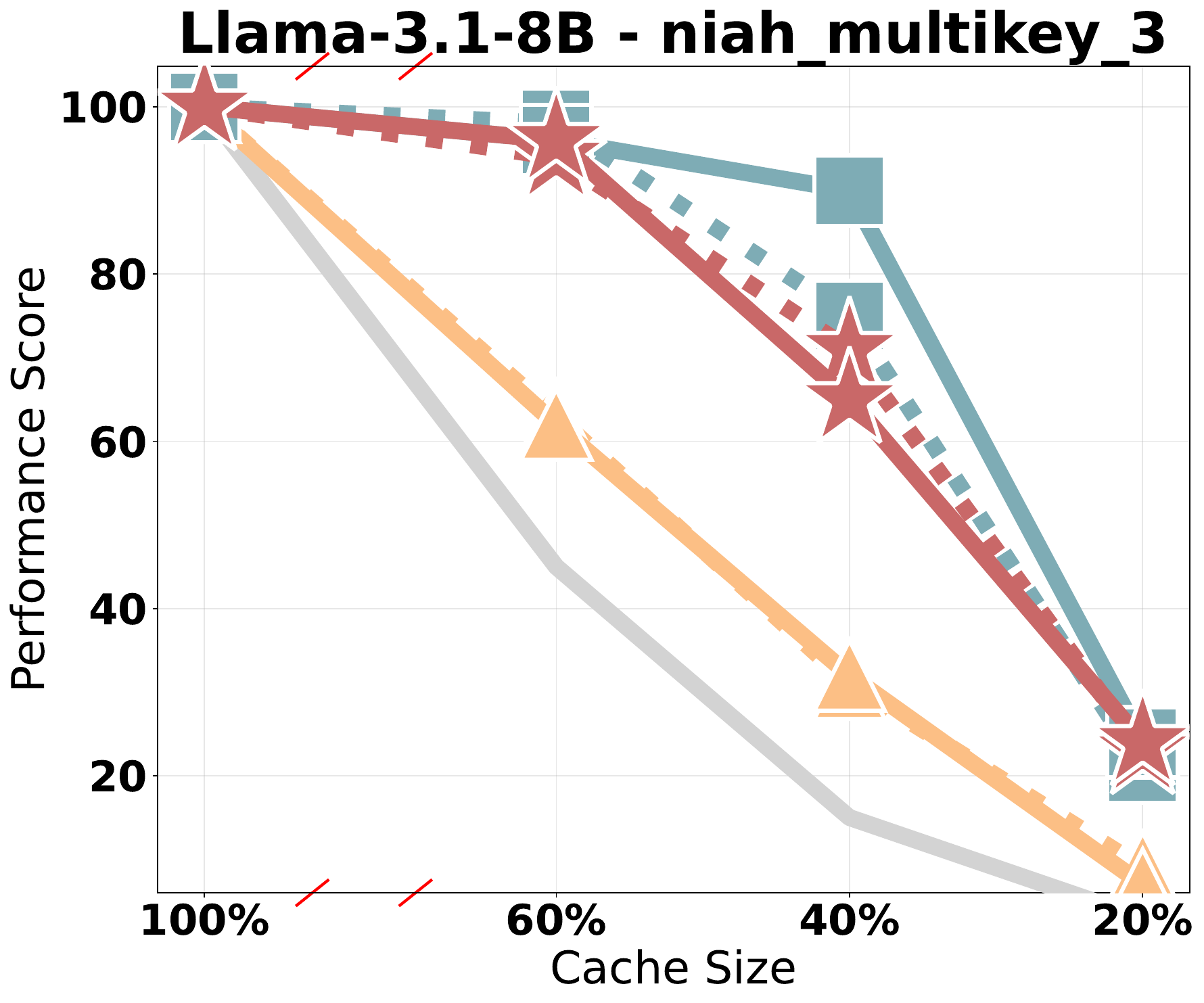}
	\end{subfigure}
	\begin{subfigure}[b]{0.19\linewidth}
		\centering
		\includegraphics[width=\textwidth]{./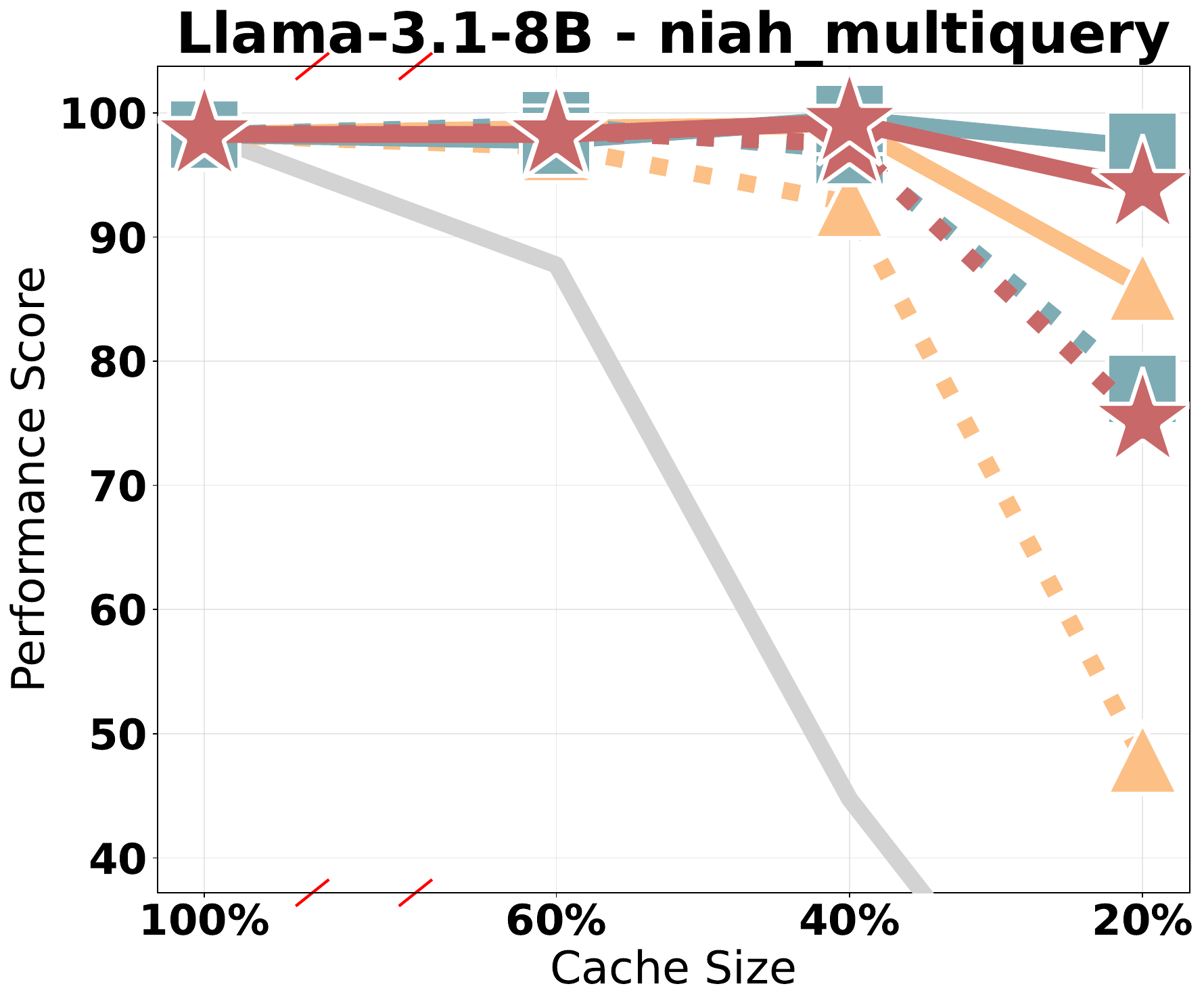}
	\end{subfigure}
	\begin{subfigure}[b]{0.19\linewidth}
		\centering
		\includegraphics[width=\textwidth]{./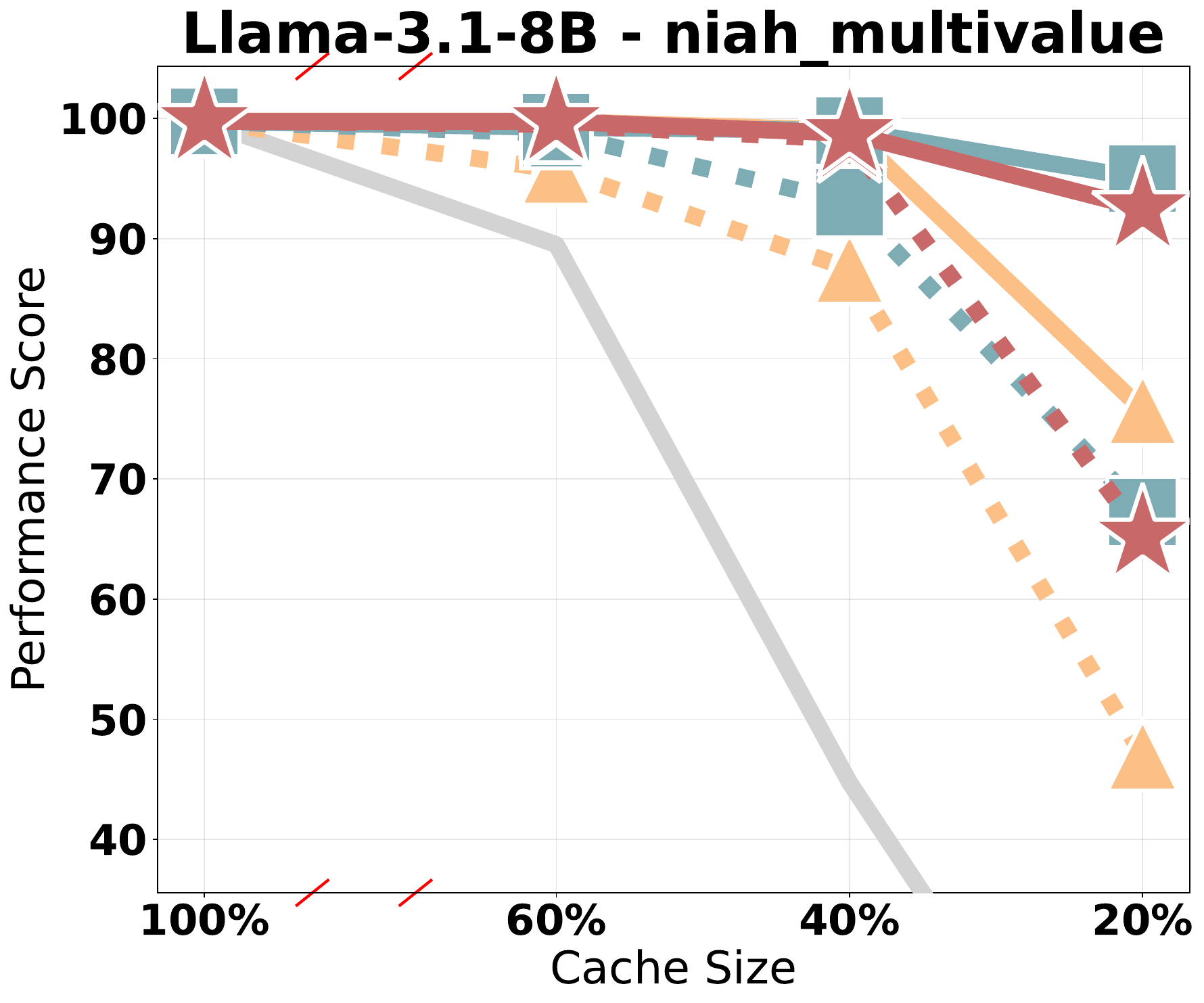}
	\end{subfigure}
	\begin{subfigure}[b]{0.19\linewidth}
		\centering
		\includegraphics[width=\textwidth]{./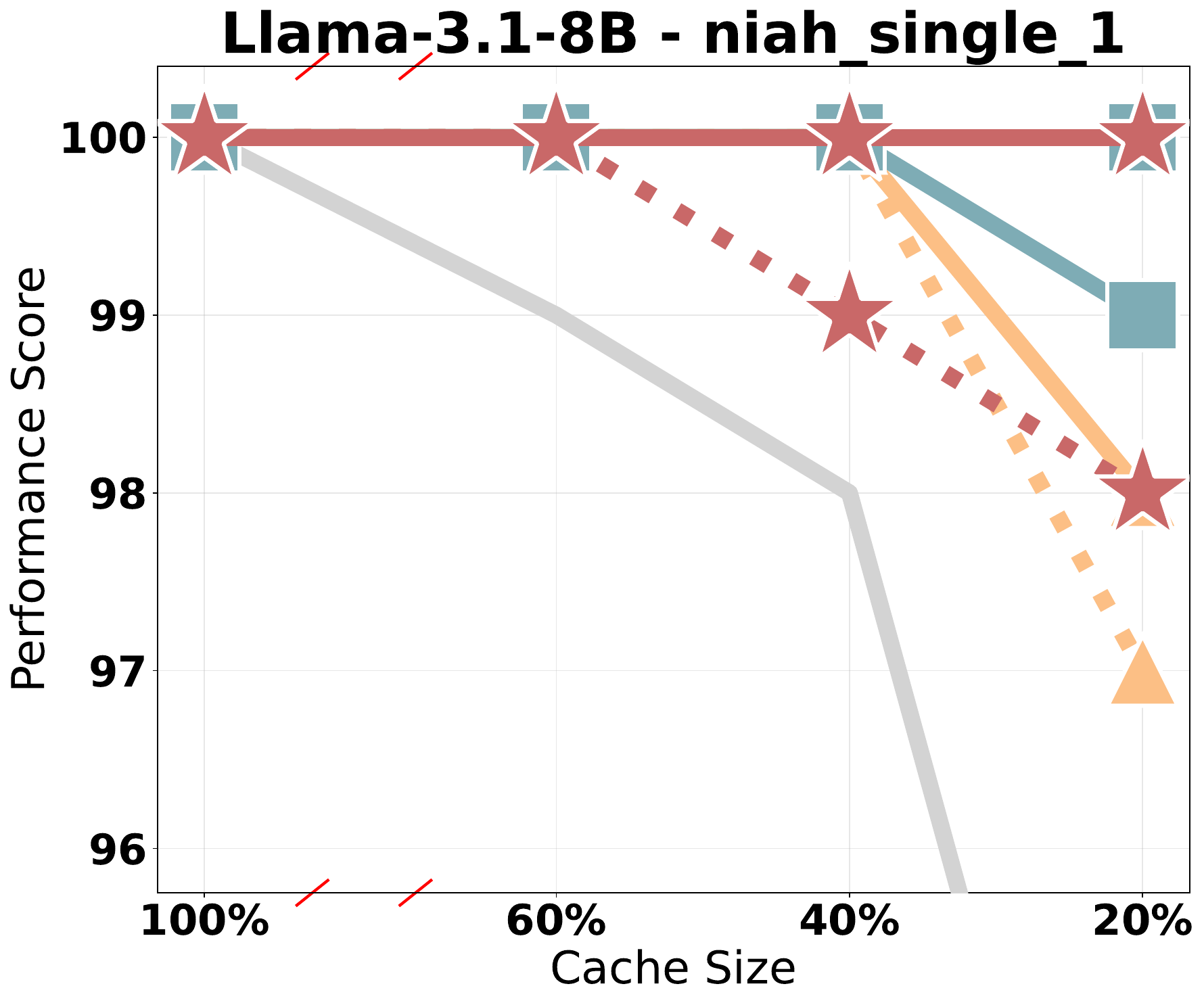}
	\end{subfigure}
	\begin{subfigure}[b]{0.19\linewidth}
		\centering
		\includegraphics[width=\textwidth]{./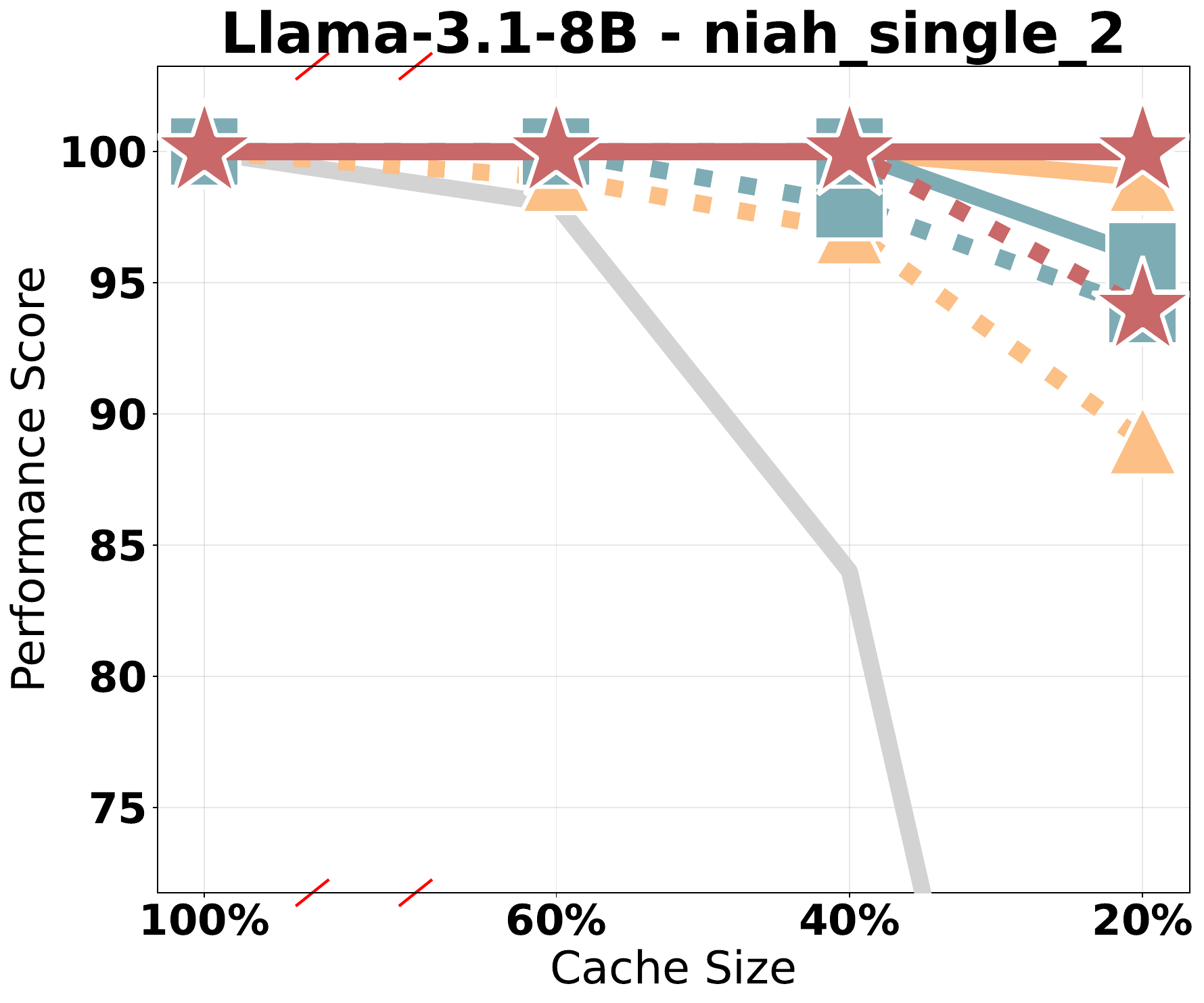}
	\end{subfigure}
	\begin{subfigure}[b]{0.19\linewidth}
		\centering
		\includegraphics[width=\textwidth]{./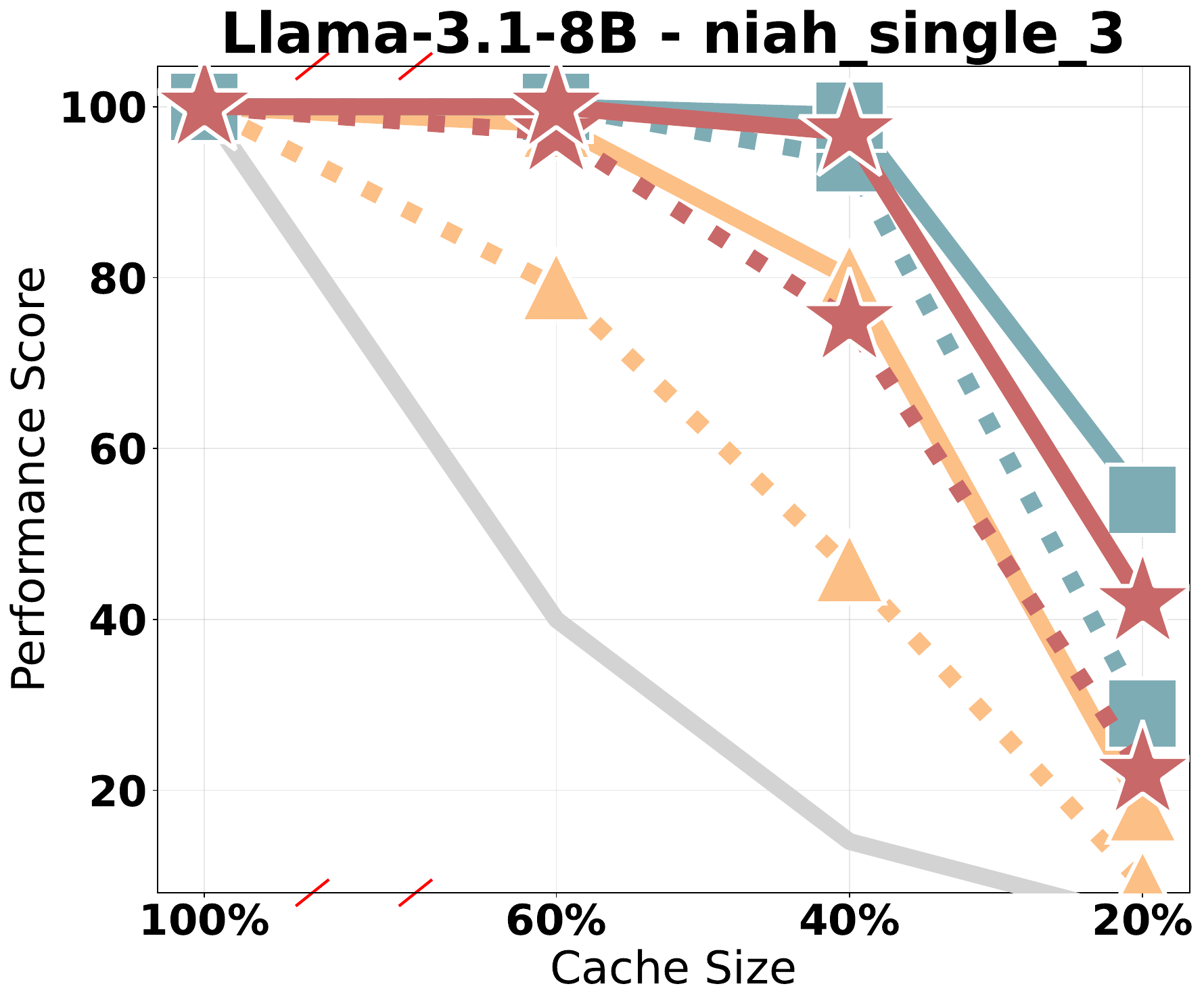}
	\end{subfigure}
	\begin{subfigure}[b]{0.19\linewidth}
		\centering
		\includegraphics[width=\textwidth]{./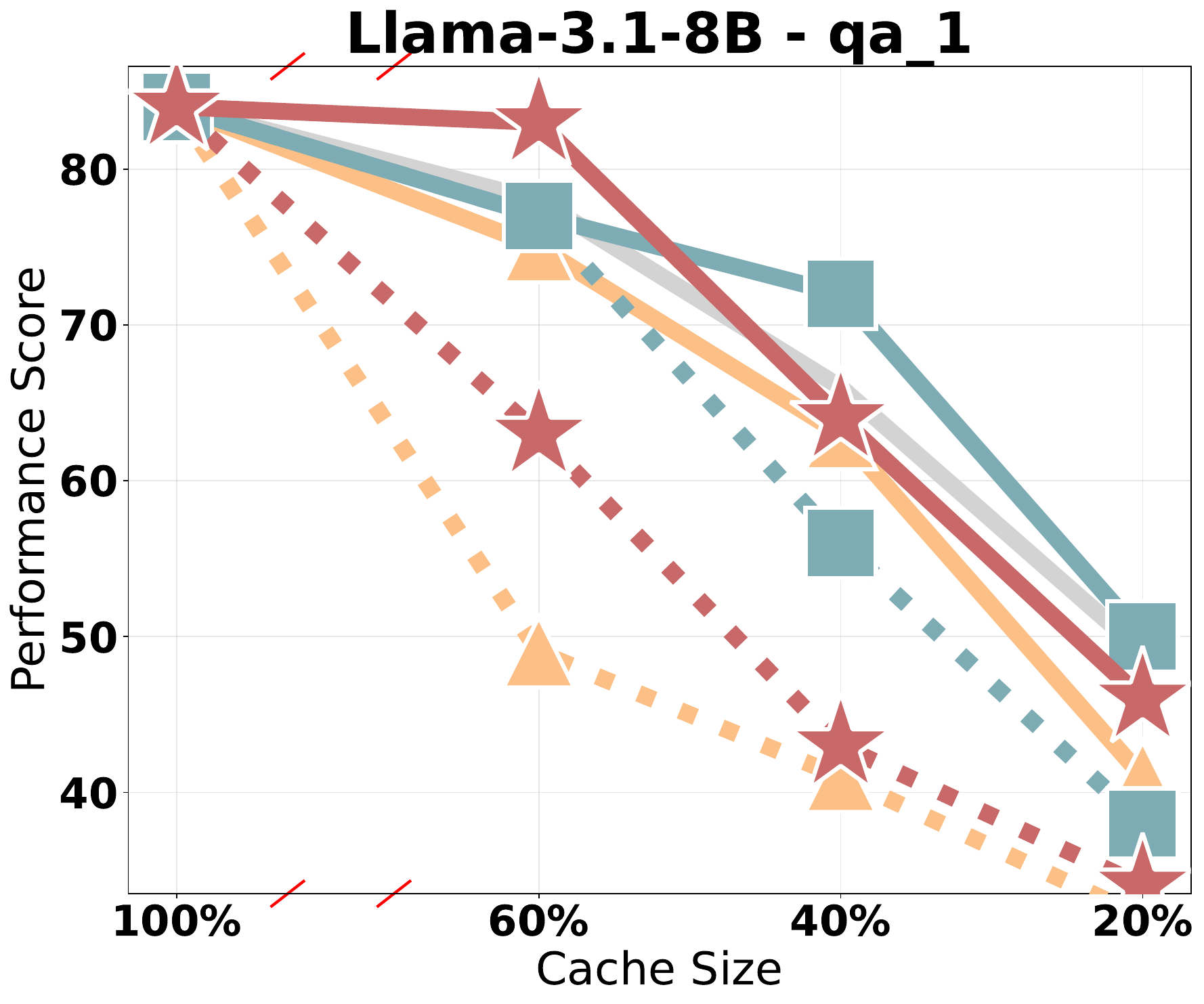}
	\end{subfigure}
	\begin{subfigure}[b]{0.19\linewidth}
		\centering
		\includegraphics[width=\textwidth]{./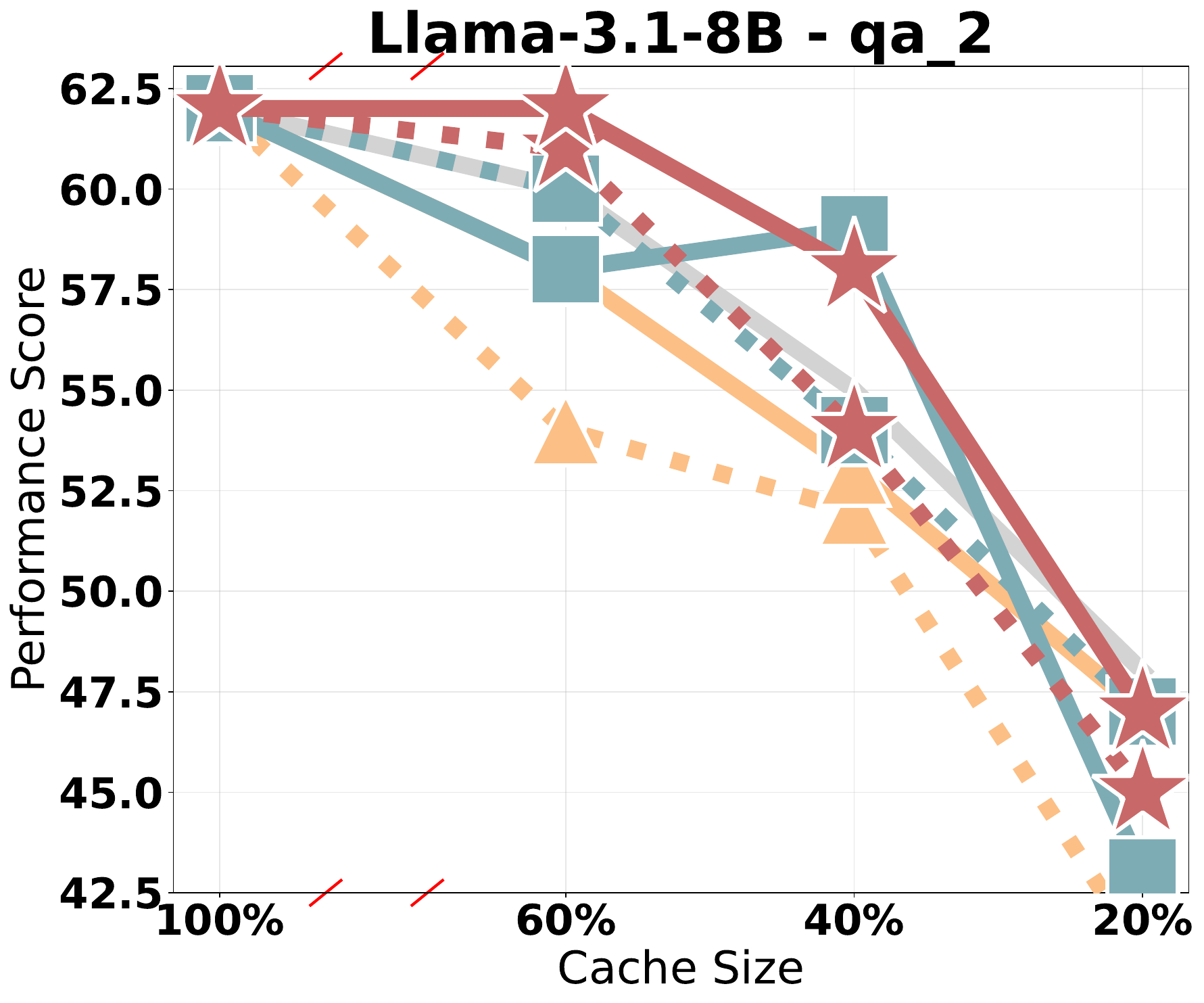}
	\end{subfigure}
	\begin{subfigure}[b]{0.19\linewidth}
		\centering
		\includegraphics[width=\textwidth]{./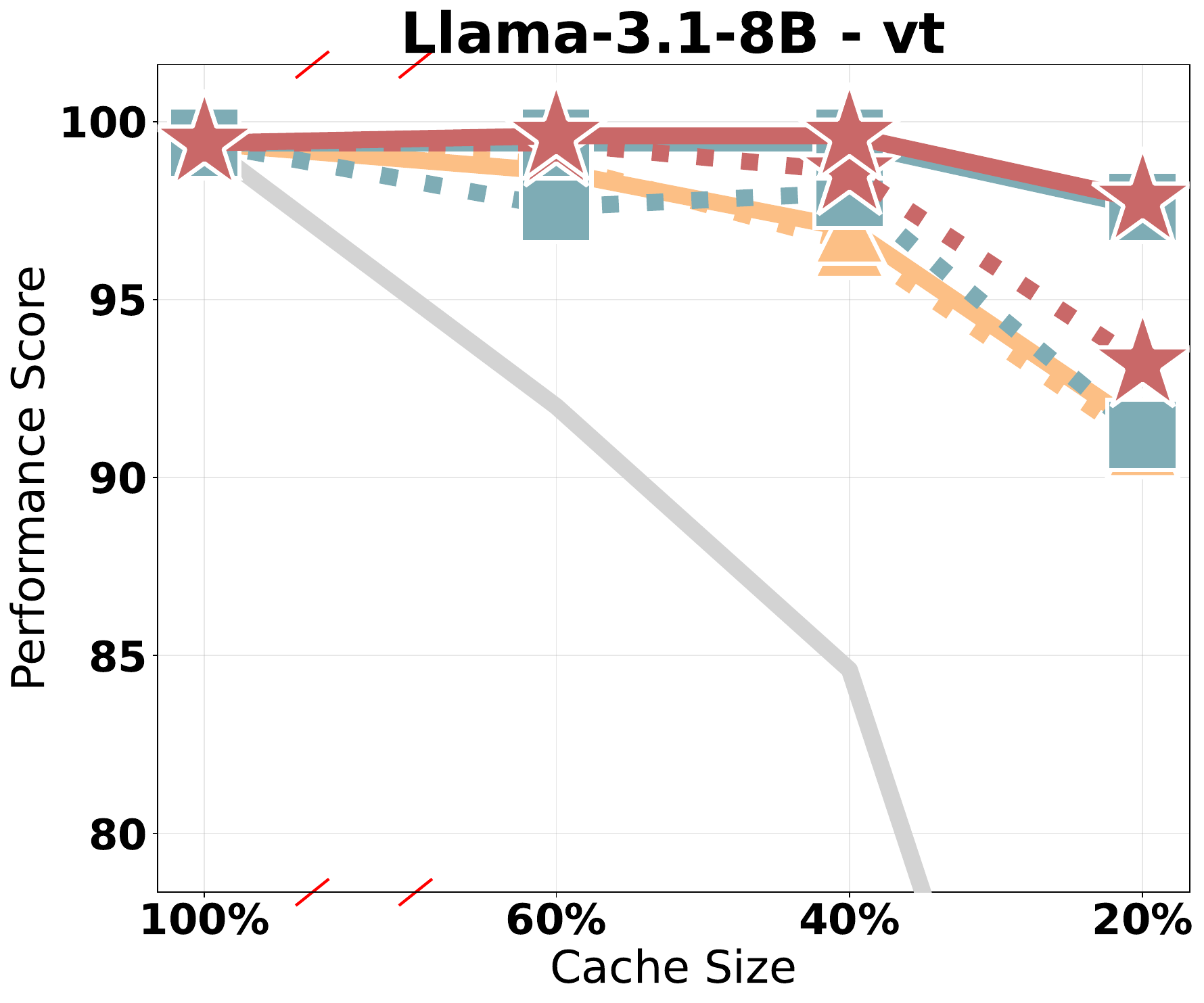}
	\end{subfigure}
	\caption{Performance on Ruler Tasks with Varying Cache Sizes}
	\label{fig:ruler}
\end{figure*}

\section{Experiments}
\label{sec:exp}

\subsection{Settings}

\textbf{Models.}
We select three advanced open-source LLMs for evaluation: Llama-3.1-8B-Instruct (Llama-3.1-8B) \citep{dubey2024llama}, Mistral-7B-Instruct-v0.3 (Mistral-7B) \citep{jiang2023mistral}, and Qwen-2.5-32B \citep{qwen2.5}. These models span different model families and parameter scales, demonstrating the broad applicability of our algorithm.

\textbf{Compression scenario.}
Following~\citep{ada, kvpress}, the context is compressed independently before question is introduced. This setting better simulates practical scenarios where the question is unavailable during context compression. It therefore provides a more realistic evaluation of cache eviction methods. For the simple compression setting, where the context and question are compressed together, please refer to Appendix~\ref{apdx:longbench_regular}.

\textbf{Baselines.}
We integrated our algorithm with three cache eviction methods—SnapKV \citep{SnapKV}, AdaKV \citep{ada} and HeadKV \citep{headkv}. These respectively represent the SOTA cache eviction in non–budget-allocation, adaptive budget allocation, and offline budget allocation.  By comparing the quality before and after integration, we demonstrate the improvements our algorithm brings to these methods. \footnote{See Appendix \ref{apdx:additional_baselines} for additional baseline comparisons with PyramidKV~\citep{pyramidkv} and DuoAttention~\citep{duo}, and Appendix \ref{apdx:low_budget_10} for results under low cache budgets.}
We set $\alpha=0.5$ in Algorithm \ref{alg:selection}  for all experiments; see Section \ref{sec:ana} for robustness analysis. SnapKV and AdaKV used their original settings: max-pooling kernel size 7 and observation window 32 \citep{ada}. All methods were implemented with FlashAttention-2. 
We also include the performance of H2O \citep{h2o} for reference. Since it requires global attention weights—unsupported by FlashAttention-2—it triggers OOM. Following \citep{xiao2024duoattention}, we simulate H2O by observing the last 256 tokens’ attention weights.

\begin{table*}[t!]
	\centering
	\small
	\caption{Task Domain Scores on LongBench.}
	\label{tab:longbench}
	\resizebox{\textwidth}{!}{%
		\begin{tabular}{@{}l>{\hspace{-1.1em}}l>{\hspace{-1.3em}}l>{\hspace{-0.6em}}c>{\hspace{-1.3em}}c >{\hspace{-0.4em}} c>{\hspace{-1.2em}}c>{\hspace{-0.4em}} c>{\hspace{-1.2em}}c>{\hspace{-0.4em}} c>{\hspace{-1.2em}}c >{\hspace{-0.4em}}c>{\hspace{-1.2em}}c>{\hspace{-0.4em}}c>{\hspace{-1.8em}}c@{}}
			\toprule
			& \multirow{2}{*}{Domain} &  \multirow{2}{*}{\makecell{Full\\Cache}} & \multicolumn{2}{c}{\small \makecell{SnapKV {\scriptsize $20\%b$}}} & \multicolumn{2}{c}{\small \makecell{SnapKV {\scriptsize $40\%b$}}} & \multicolumn{2}{c}{\small \makecell{AdaKV {\scriptsize $20\%b$} }} & \multicolumn{2}{c}{\small \makecell{AdaKV {\scriptsize $40\%b$}}} & \multicolumn{2}{c}{\small \makecell{HeadKV {\scriptsize $20\%b$}}} & \multicolumn{2}{c}{\small \makecell{HeadKV {\scriptsize $40\%b$} }}\\
			\cmidrule(lr){4-5}\cmidrule(lr){6-7}\cmidrule(lr){8-9}\cmidrule(lr){10-11}\cmidrule(lr){12-13}\cmidrule(lr){14-15}
			& &  & \small{base}   & \small{w/ ours}   & \small{base}  & \small{w/ ours}   & \small{base}  & \small{w/ ours}  & \small{base}  & \small{w/ ours} & \small{base}  & \small{w/ ours}  & \small{base}  & \small{w/ ours}  \\
			
			\toprule

			\multirow{6}{*}{\small\rotatebox[origin=c]{90}{\makecell{ Llama-3.1-8B }}}
			& Single\-Doc. QA & 43.10 & 28.78 & \textbf{30.43} & 35.27 & \textbf{38.27} & 31.39 & \textbf{32.74} & 36.63 & \textbf{39.24} & 31.53          & \textbf{33.60}          & 39.96          & \textbf{40.61}          \\
			& Multi\-Doc. QA  & 46.49 & 33.51 & \textbf{35.87} & 40.50 & \textbf{43.17} & 34.90 & \textbf{35.31} & 41.36 & \textbf{45.11} & 34.97          & \textbf{36.33}          & 43.10          & \textbf{44.33}          \\
			& Summarization   & 28.97 & 23.82 & \textbf{24.64} & 26.11 & \textbf{27.15} & 24.29 & \textbf{24.98} & 26.66 & \textbf{27.31} & 24.49          & \textbf{25.28}          & 27.06          & \textbf{27.24}          \\
			& Few\-shot       & 69.45 & 61.95 & \textbf{63.04} & 65.10 & \textbf{66.87} & 63.70 & \textbf{64.74} & 66.43 & \textbf{68.19} & 62.79          & \textbf{65.41}          & 65.64          & \textbf{68.06}          \\
			& Synthetic       & 53.73 & 48.19 & \textbf{52.16} & 53.17 & \textbf{54.22} & 50.39 & \textbf{52.30} & 53.00 & \textbf{53.60} & 49.32          & \textbf{52.18}          & 52.89          & \textbf{54.04}          \\
			& Code            & 57.86 & 60.05 & \textbf{60.75} & 60.49 & \textbf{60.91} & 61.14 & \textbf{61.16} & 60.30 & \textbf{60.60} & \textbf{61.14} & 58.85          & \textbf{61.34} & 57.89          \\
			
			\hline
			& Avg.  Score            & 49.20 & 41.29 & \textbf{42.99} & 45.52 & \textbf{47.29} & 42.87 & \textbf{43.77} & 46.24 & \textbf{48.00} & 42.64          & \textbf{43.99} & 47.23          & \textbf{47.79} \\
			& Avg.  Loss {\scriptsize $\downarrow$} &  0.00 {\scriptsize $\%$} &  16.1 {\scriptsize $\%$} &  \textbf{12.6 {\scriptsize $\%$}} &  7.5 {\scriptsize $\%$} &  \textbf{3.9 {\scriptsize $\%$}} &  12.9 {\scriptsize $\%$} &  \textbf{11.0 {\scriptsize $\%$}} &  6.0 {\scriptsize $\%$} &  \textbf{2.4 {\scriptsize $\%$}} &  13.3 {\scriptsize $\%$} &  \textbf{10.6 {\scriptsize $\%$}} &  4.0 {\scriptsize $\%$} &  \textbf{2.9 {\scriptsize $\%$}} \\

			\hline
			\hline
			
			\multirow{6}{*}{\small\rotatebox[origin=c]{90}{\makecell{Mistral-7B  }}}
			& Single-Doc. QA & 38.37 & 25.13 & \textbf{28.24} & 31.64          & \textbf{34.33} & 27.25 & \textbf{29.06} & 33.42          & \textbf{36.06} & 27.94          & \textbf{30.95}          & 33.90          & \textbf{36.21}          \\
			& Multi-Doc. QA  & 39.40 & 29.64 & \textbf{32.07} & 33.57          & \textbf{36.61} & 31.71 & \textbf{33.04} & 35.97          & \textbf{38.00} & 31.58          & \textbf{34.47}          & 35.28          & \textbf{37.66}          \\
			& Summarization  & 28.76 & 24.18 & \textbf{24.61} & 26.24          & \textbf{26.94} & 24.18 & \textbf{24.83} & 26.24          & \textbf{27.06} & 24.49          & \textbf{25.63}          & 26.76          & \textbf{27.72}          \\
			& Few-shot       & 70.33 & 63.89 & \textbf{65.54} & 67.95          & \textbf{68.57} & 66.00 & \textbf{67.08} & 68.67          & \textbf{69.79} & 64.73          & \textbf{67.58}          & 67.46          & \textbf{69.78}          \\
			& Synthetic      & 52.50 & 45.25 & \textbf{46.53} & 49.75          & \textbf{51.05} & 48.00 & \textbf{49.25} & 50.75          & \textbf{50.84} & 47.79          & \textbf{48.02}          & \textbf{50.50} & 50.20          \\
			& Code           & 61.25 & 61.40 & \textbf{61.94} & \textbf{63.41} & 62.06          & 62.55 & \textbf{62.56} & \textbf{63.35} & 62.68          & \textbf{61.96} & 61.76          & \textbf{63.16} & 61.56          \\
			\hline
			& Avg. Score           & 47.38 & 40.11 & \textbf{41.77} & 44.03          & \textbf{45.35} & 41.78 & \textbf{42.85} & 45.07          & \textbf{46.23} & 41.61          & \textbf{43.46} & 44.84          & \textbf{46.10} \\
			& Avg.  Loss {\scriptsize $\downarrow$} &  0.0 {\scriptsize $\%$} &  15.3 {\scriptsize $\%$} &  \textbf{11.8 {\scriptsize $\%$}} &  7.1 {\scriptsize $\%$} &  \textbf{4.3 {\scriptsize $\%$}} &  11.8 {\scriptsize $\%$} &  \textbf{9.6 {\scriptsize $\%$}} &  4.9 {\scriptsize $\%$} &  \textbf{2.4 {\scriptsize $\%$}} &  12.2 {\scriptsize $\%$} &  \textbf{8.3 {\scriptsize $\%$}} &  5.4 {\scriptsize $\%$} &  \textbf{2.7 {\scriptsize $\%$}} \\
			
			\hline
			\hline

			\multirow{6}{*}{\small\rotatebox[origin=c]{90}{\makecell{Qwen2.5-32B}}}  
			& Single-Doc. QA & 43.23 & 26.65          & \textbf{28.12} & 32.22          & \textbf{37.18} & 26.62          & \textbf{29.00} & 32.37          & \textbf{35.70} & 29.9  & \textbf{32.07} & 38.74          & \textbf{40.49} \\
			& Multi-Doc. QA  & 54.03 & 42.00          & \textbf{46.88} & 50.90          & \textbf{54.55} & 42.36          & \textbf{47.40} & 52.75          & \textbf{53.75} & 46.35 & \textbf{50.60} & 55.13          & \textbf{55.33} \\
			& Summarization  & 27.40 & 23.22          & \textbf{24.23} & 25.19          & \textbf{26.03} & 23.21          & \textbf{24.06} & 24.95          & \textbf{25.97} & 23.88 & \textbf{24.34} & 26.10          & \textbf{26.36} \\
			& Few-shot       & 68.97 & 65.43          & \textbf{66.86} & 67.43          & \textbf{67.80} & 66.56          & \textbf{67.44} & 68.18          & \textbf{68.65} & 65.96 & \textbf{68.53} & 68.37          & \textbf{69.27} \\
			& Synthetic      & 56.25 & 44.21          & \textbf{53.00} & 55.25          & \textbf{55.75} & 46.63          & \textbf{52.75} & 54.63          & \textbf{55.25} & 50.88 & \textbf{53.88} & \textbf{55.09} & 55.04          \\
			& Code           & 41.93 & \textbf{45.04} & {44.72}        & \textbf{44.89} & {43.75}        & \textbf{46.30} & {45.43}        & \textbf{46.21} & {44.88}        & 44.83 & \textbf{46.27} & 44.88          & \textbf{46.54} \\
			\hline
			& Avg.  Score         & 48.58 & 40.65          & \textbf{43.36} & 45.47          & \textbf{47.23} & 41.38          & \textbf{43.75} & 46.03          & \textbf{47.03} & 43.11 & \textbf{45.43} & 47.81          & \textbf{48.59} \\
			& Avg.  Loss {\scriptsize $\downarrow$} &  0.0 {\scriptsize $\%$} &  16.3 {\scriptsize $\%$} &  \textbf{10.8 {\scriptsize $\%$}} &  6.4 {\scriptsize $\%$} &  \textbf{2.8 {\scriptsize $\%$}} &  14.8 {\scriptsize $\%$} &  \textbf{9.9 {\scriptsize $\%$}} &  5.3 {\scriptsize $\%$} &  \textbf{3.2 {\scriptsize $\%$}} &  11.3 {\scriptsize $\%$} &  \textbf{6.5 {\scriptsize $\%$}} &  1.6 {\scriptsize $\%$} &  \textbf{0.0 {\scriptsize $\%$}} \\

			\hline  
		\end{tabular}%
	}
\end{table*}


\subsection{Ruler Benchmark}

The Ruler benchmark~\citep{hsieh2024ruler}  comprises 13 synthetic tasks for evaluating long-context capabilities—a challenging testbed for cache eviction. It includes two Word Extraction variants (CWE and FWE)—eight Needle-In-A-Haystack variations (NIAH), as well as Question Answering(QA) and Variable Tracking (VT), with each scored out of 100. 
To match the Mistral model’s 32K context window and control cost, we set the ruler length to 32K and sampled 100 instances per task.

Table~\ref{tab:ruler} reports task-wise scores at a 40\% cache size. 
SnapKV, AdaKV, and HeadKV all degrade under cache eviction, but each sees substantial gains when augmented with our algorithm.
For instance, on Qwen2.5-32B, our algorithm yields consistent gains across all 13 tasks for all three eviction method. 
Quantitatively, our algorithm increases the average score of SnapKV from 63.86 to 81.09 and AdaKV from 71.09 to 83.87. When applied to HeadKV, the average score climbs from 81.04 to 90.69, mitigating the loss relative to the full cache from 13.7\% to just 3.4\%.
Similar trends hold for the Llama and Mistral models, with an average improvement of 14.6 points across all cases.

Figure~\ref{fig:ruler} further offers a comprehensive view across different cache sizes for both the Llama and Mistral models. Results for Qwen2.5-32B are omitted due to the prohibitive cost of evaluating a 32B-scale model across multiple cache sizes. More results for the Mistral are provided in Appendix~\ref{apdx:ruler_mistral}. On both the Llama and Mistral models, our method consistently and significantly improves all base methods. For instance, with Llama model at a 60\% cache size, AdaKV achieves an average score of 87.92. When enhanced with our algorithm, this score increases to 90.94—almost matching the full-cache performance of 91.04. The benefits of our algorithm are even more pronounced at small cache sizes. at 40\% and 20\%, our algorithm increases AdaKV’s average scores from 78.38 to 86.28 and from 57.48 to 68.94, respectively.  Similarly, on Mistral, our algorithm boosts AdaKV from 48.80 to 75.85 at 60\% size, and from 34.88 to 69.17 at 40\% size.  These results highlight the effectiveness of our algorithm as a universal enhancement to a wide range of existing KV cache eviction methods.

\begin{figure*}[t]
	\centering
	\begin{minipage}[b]{0.5\linewidth}
		\centering
		\begin{subfigure}[b]{0.48\linewidth}
			\includegraphics[width=0.8\textwidth]{./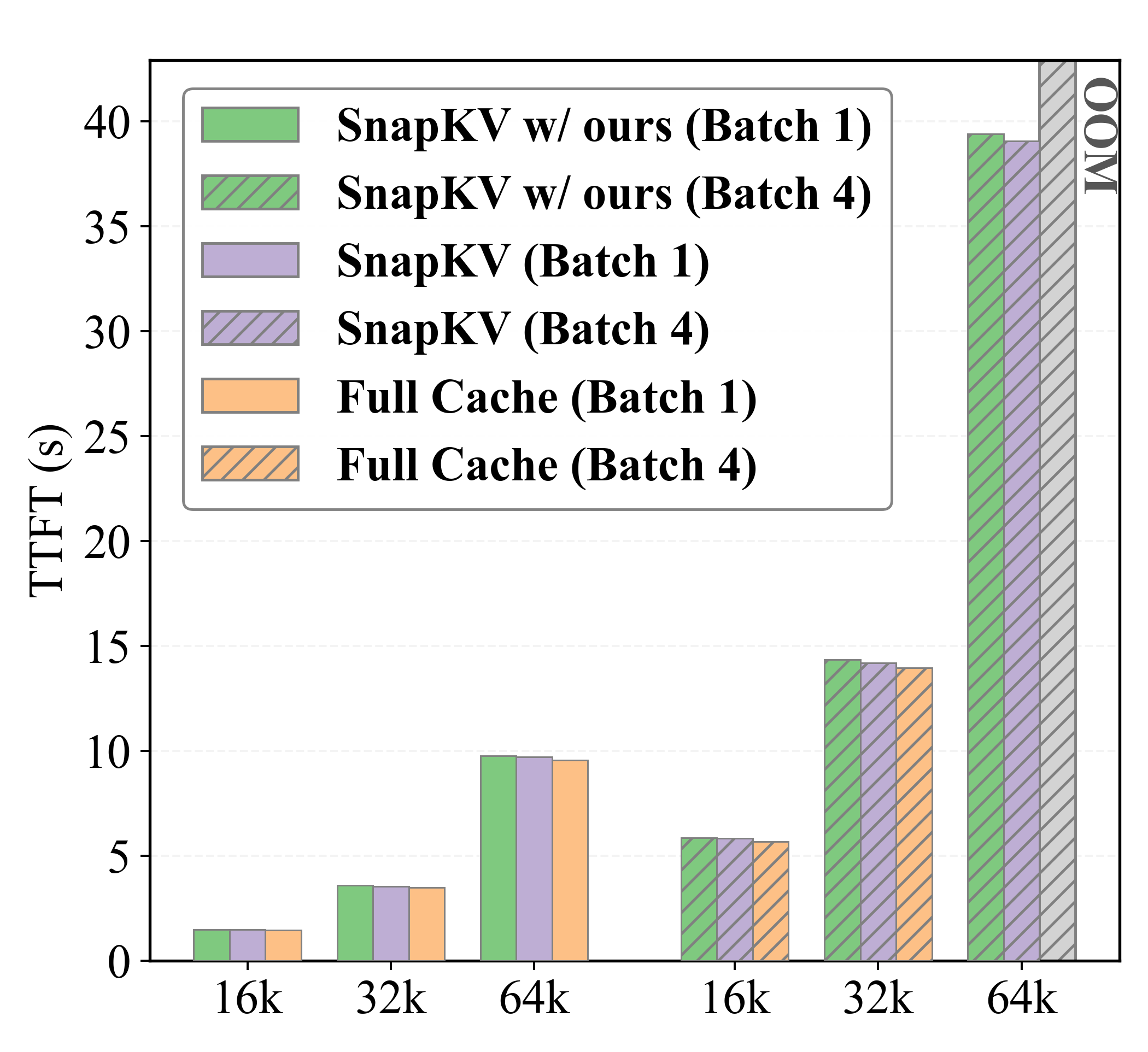}
			\caption{Prefilling}
			\label{fig:eff_prefilling}
		\end{subfigure}
		\begin{subfigure}[b]{0.48\linewidth}
			\includegraphics[width=0.8\textwidth]{./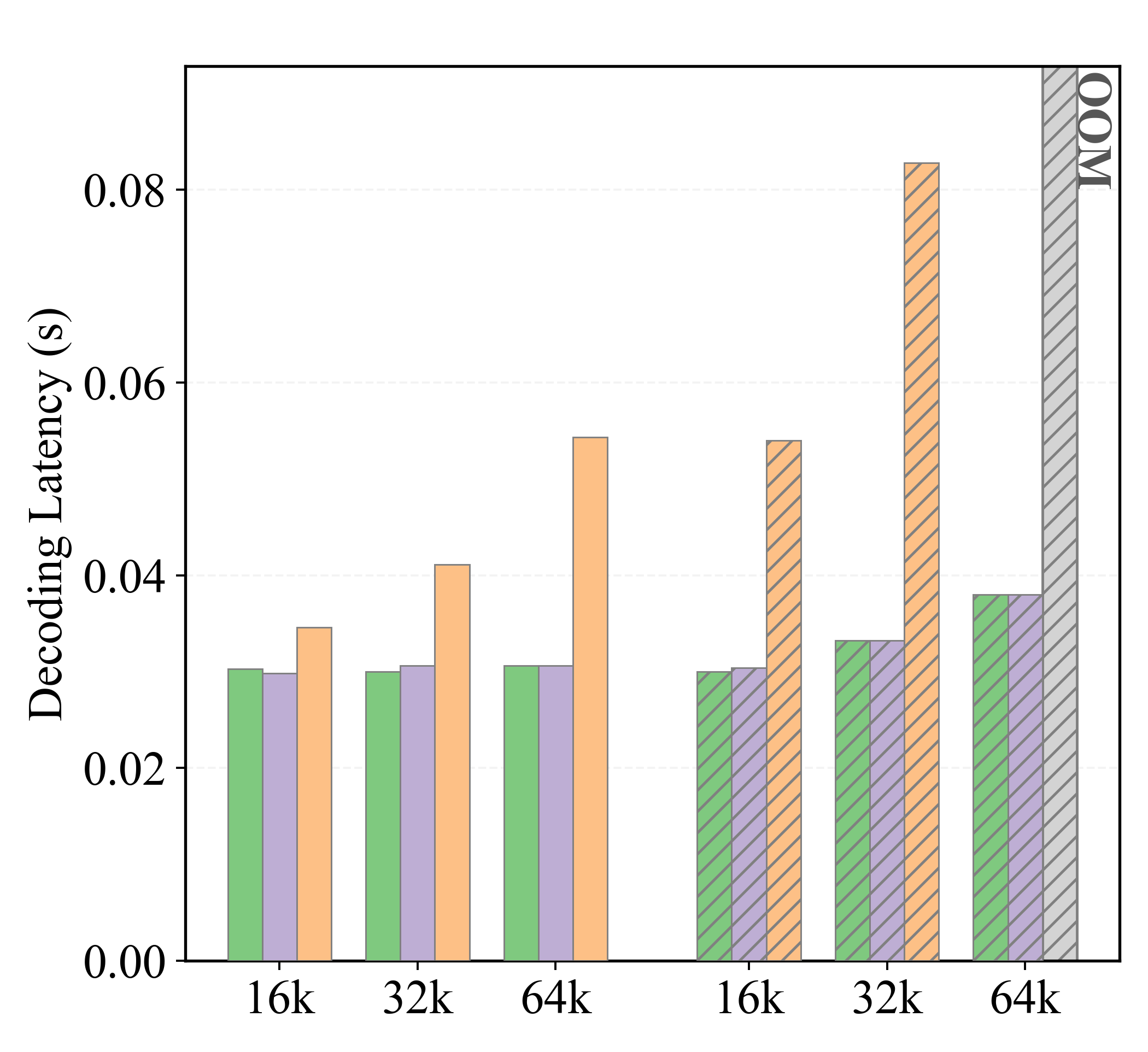}
			\caption{Decoding}
		\end{subfigure}
		\caption{\centering Efficiency. (all use FlashAttention-2).} 
	\end{minipage}		
	\begin{minipage}[b]{0.48\linewidth}
		\begin{minipage}{0.99\linewidth}
			\centering
			\includegraphics[width=0.8\linewidth]{./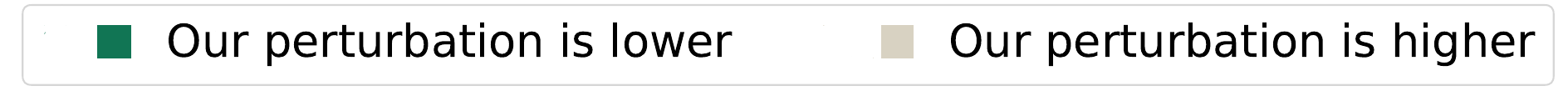}
		\end{minipage}
		\centering
		\begin{subfigure}[b]{0.4\linewidth}
			\centering
			\includegraphics[width=0.8\textwidth]{./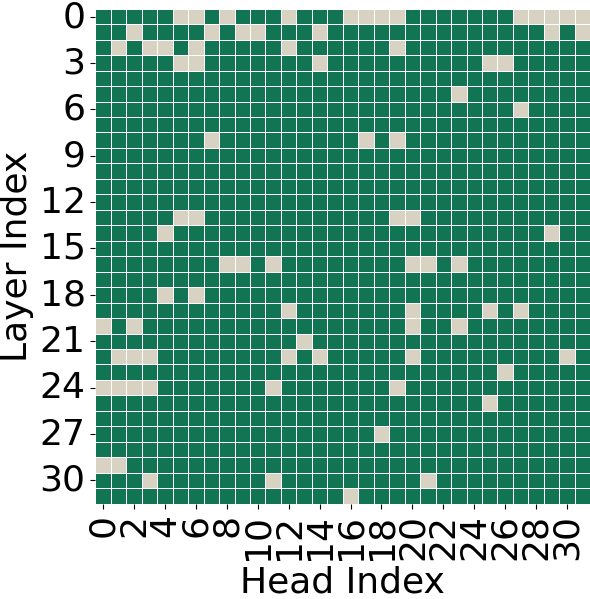}
			\caption{Llama-3.1-8B}
		\end{subfigure}
		\begin{subfigure}[b]{0.4\linewidth}
			\centering
			\includegraphics[width=0.8\textwidth]{./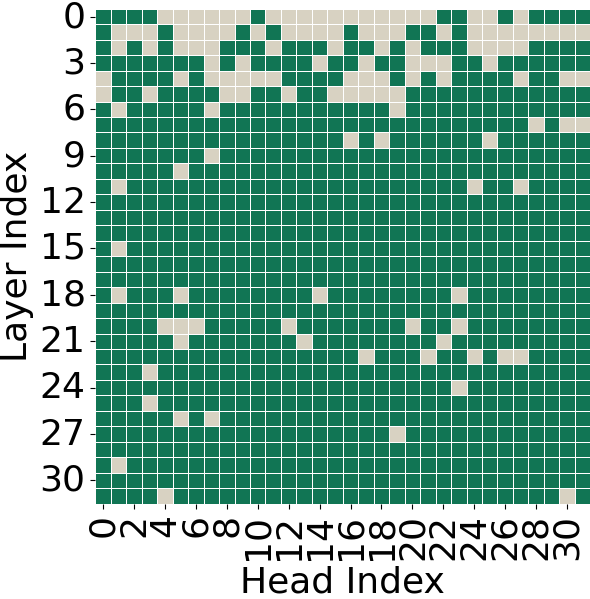}
			\caption{Mistral-7B}
		\end{subfigure}
		\caption{ Perturbation reduction across heads.} 
		\label{fig:head_wise}
		
	\end{minipage}
	\begin{minipage}{0.48\linewidth}
		\begin{subfigure}[b]{0.48\linewidth}
			\includegraphics[width=0.9\textwidth]{./NewFigures/analyze/snap/gen1/layer/multi_news_wqFalse_SnapKVPress_ws32_ks7_reduction_across_layers_budget0.8_gen1.png}
			\caption{SnapKV}
		\end{subfigure}
		\begin{subfigure}[b]{0.48\linewidth}
			\includegraphics[width=0.9\textwidth]{./NewFigures/analyze/ada/gen1/layer/multi_news_wqFalse_AdaSnapKVPress_ws32_ks7_a0.2_reduction_across_layers_budget0.8_gen1.png}
			\caption{AdaKV}
		\end{subfigure}
		\caption{Perturbation reduction across layers.} 
		\label{fig:layer_wise}
	\end{minipage}
	\begin{minipage}{0.48\linewidth}
		\begin{subfigure}[b]{0.48\linewidth}
			\includegraphics[width=0.9\textwidth]{./NewFigures/analyze/snap/gen1/budget/multi_news_wqFalse_SnapKVPress_ws32_ks7_gen1_reduction_across_budgets.png}
			\caption{SnapKV}
		\end{subfigure}
		\begin{subfigure}[b]{0.48\linewidth}
			\includegraphics[width=0.9\textwidth]{./NewFigures/analyze/ada/gen1/budget/multi_news_wqFalse_AdaSnapKVPress_ws32_ks7_a0.2_gen1_reduction_across_budgets.png}
			\caption{AdaKV}
		\end{subfigure}
		\caption{Perturbation reduction across budgets.} 
		\label{fig:budget-wise}
	\end{minipage}
\end{figure*}

\subsection{LongBench Evaluation}

We also incorperate the real-world benchmark LongBench, consisting of 16 datasets across six task domains: single-document QA, multi-document QA , summarization, few-shot learning, synthetic, and code. We report average scores for each domain using standard evaluation metrics.


 As shown in Table \ref{tab:longbench}, our algorithm achieves improvements across most evaluation cases. 
 In the five widely used long-dependency task domains (single-document QA, multi-document QA , summarization, few-shot learning, synthetic), cache eviction degrades the performance by disrupting historical information. Our algorithm markedly mitigates this loss: across 90 test cases—covering five long-dependency domains, three models (Llama-3.1-8B, Mistral-7B, Qwen2.5-32B), and three compression methods (SnapKV, AdaKV, HeadKV) at two cache sizes—we observe improvements in 88 cases. This 97.8\% success rate highlights the breadth and robustness of our method. Numerically, the effect is clear: for instance in Multi-Doc QA with Llama-3.1-8B, applying AdaKV at 40\% cache reduces the score from 46.49 to 41.36; adding our algorithm raises it to 45.11. On Mistral-7B, compression lowers the score from 39.40 to 35.97, while our algorithm restores it to 38.00. By contrast, the code domain is naturally insensitive to cache eviction. At a 20\% cache size, performance can even surpass that of the full cache—a phenomenon reported in prior work~\cite{ada,SnapKV,pyramidkv}. This arises because code-related tend to rely less on long-range dependencies; compressing the KV cache can paradoxically improve accuracy by filtering out historical context. Consequently, the code domain is generally not considered a suitable indicator.

 With respect to the average performance loss compared to full cache case, our method substantially reduces this degradation from previous cache eviction methods. For instance, under a 40\% cache size on the Llama model, integrating our algorithm with AdaKV reduces the average loss from 6.0\% to 2.4\%. Similar gains are observed for the Mistral and Qwen models, where our algorithm decreases the losses from 4.9\% to 2.4\% and from 5.3\% to 3.2\%, respectively. These results affirm that our algorithm provides a robust and general solution for mitigating the quality loss caused by KV cache eviction across diverse real-world applications.

\subsection{Efficiency Evaluation}
\label{sc:effi}
Cache eviction is typically applied after the prefill phase to reduce the KV-cache footprint and speed up decoding. Follow the
 common practice~\citep{ada,headkv}, we evaluate efficiency using time-to-first-token (TTFT) for the prefill phase (including eviction process) and single-step latency for decoding, measured on a single 80GB A100 GPU with Llama-3.1-8B and a 40\% cache size. Our method introduces only minor TTFT overhead from the perturbation constraints algorithm, due to the computation of $|VW^O|$. This operation is linear in complexity and has negligible impact. As shown in Figure~\ref{fig:eff_prefilling}, at a 32K context length, TTFT increases by only 0.06s for batch size 1 (3.54 $\rightarrow$ 3.60) and 0.16s for batch size 4 (14.20 $\rightarrow$ 14.36, or 0.04s per request). For decoding, all cache eviction methods demonstrate same efficiency and outperform the full cache baseline. For batch size 4 and 32K context, SnapKV (with or without our algorithm) achieves 0.0332s, representing a 2.49$\times$ speedup over the full cache time of 0.0828s. Thus, our algorithm substantially enhances existing cache eviction quality while maintaining nearly identical computational efficiency.

\subsection{Analysis of Practical Output Perturbation}
\label{sc:ana}

We further investigate whether  constraining the theoretical perturbation upper bound effectively reduces the practical output perturbation. 
Using 200 samples from MultiNews (20\% cache compression), we visualize attention output perturbations for the first decoded token.
\textbf{a. Head-wise Analysis:} Our method significantly reduces average perturbation on 92\% (Llama-3.1-8B) and 86\% (Mistral-7B) of attention heads (Figure \ref{fig:head_wise}). \textbf{b. Layer-wise Analysis:} Figure \ref{fig:layer_wise} shows how our algorithm progressively reduces perturbation across layers, leading to substantial decreases in the final layer, which directly impacts the generated token vocabulary distribution.
\textbf{c. Budget-wise Analysis:} Figure \ref{fig:budget-wise} demonstrates consistent reductions across cache ratios (2.5\%--40\%), highlighting robustness to varying budget constraints. These results verify that Algorithm~\ref{alg:selection}'s theoretical constraints effectively minimize practical perturbation. By aligning post-eviction hidden states with the full cache, our approach preserves generation quality.

\section{Conclusion}

In this paper, we pinpoint a key limitation in current cache eviction methods: the reliance on intuitive heuristics of using attention weights to select critical cache entries. For the first time, we formalize the problem of critical cache entry selection from the perspective of output perturbation and provide a theoretical analysis. Furthermore, we propose a novel algorithm based on constraining output perturbation in the worst-case for critical cache selection, which is then integrated into existing SOTA cache eviction methods. Comprehensive evaluations using 29 datasets from Ruler and Longbench demonstrate that our algorithm significantly improves existing cache eviction methods. Further empirical analysis confirms that our method achieves lower practical output perturbation than attention-only methods across various settings, explaining the observed gains.
Our work offers a new perspective for advancing cache eviction area, highlighting its significant benefits and future potential.

\section*{Acknowledgements}
This work was supported by the National Natural Science Foundation of China (NSFC) under Grants 62472400 and 62271465, the National Key R\&D Program of China under Grant 2025YFC3408300, and the Suzhou Basic Research Program under Grant SYG202338.

\section*{Impact Statement}
This paper presents work whose goal is to advance the field of machine learning. There are many potential societal consequences of our work, none of which we feel must be specifically highlighted here.

\bibliography{ours_icml26}
\bibliographystyle{icml2026}

\newpage
\appendix
\onecolumn
\definecolor{question_color}{RGB}{0,100,0}

\newpage

\appendix

{

}

\section{Theoretical Proofs}
\label{apdx:proof}
\subsection{Proof for Theorem \ref{thm:mask_rewrite}}
\label{apdx:proof3.2}
\begin{theorem_nonum}
	By introducing a mask $\mathcal{N}\in \mathbb{R}^{n}$ applied through element-wise multiplication denoted by $\odot$,  we can establish the relation between $A'$ and $A$ as follows:
	
	{\small
		\begin{align}
			A' &= \frac{\mathcal{N} \odot A}{\sum_{i=1}^{n} \mathcal{N}_i A_i} \quad \text{where} \: \mathcal{N}_{i}  &=
			\begin{cases}
				0 		&  \text{if  $K_i , V_i$ {is non-critical} } \\
				1 &\text{otherwise.}\\
			\end{cases} \notag
			\text{and} \sum\nolimits_{i=1}^{n} \mathcal{N}_i = b
	\end{align}}
\end{theorem_nonum}
\begin{proof}
	Let $a = qK^T/\sqrt{d}$, we can express the attention weights $A'$ under critical cache entries as:
	{
		\small
		\begin{align}
			A' &= \frac{exp(\mathcal{M}+ a)}{ \sum_{i=1}^{n} exp(\mathcal{M}+ a)_i} \\
			&= \frac{\mathcal{N} \odot exp(a)}{\sum_{i=1}^{n} \mathcal{N}_i  exp(a)_i } \notag \\
			&= \mathcal{N} \odot \frac{ exp(a)}{\sum_{i=1}^{n} exp(a)_i} \frac{\sum_{i=1}^{n} exp(a)_i}{\sum_{i=1}^{n} \mathcal{N}_i  exp(a)_i } \notag
		\end{align}
	}
	Considering $A = \frac{exp(a)}{\sum_{i=1}^{n} exp(a)_i}$, thus $\sum_{i=1}^{n} \mathcal{N}_i A_i = \frac{\sum_{i=1}^{n} \mathcal{N}_i exp(a)_i}{\sum_{i=1}^{n} exp(a)_i}$. Therefore, $A' = \frac{\mathcal{N} \odot A}{\sum_{i=1}^{n} \mathcal{N}_i A_i}$.
\end{proof}
\subsection{Proof for Theorem~\ref{thm:bound}}
\label{apdx:proof3.3}
\begin{theorem_nonum}
	The output perturbation $\mathcal{L}$ can be bounded by $\theta$:
	{\small
		\begin{align}
			\mathcal{L} \leq \theta =  C -  \left( 2- \frac{1}{\sum\nolimits_{i=1}^{n} \mathcal{N}_i A_{i}} \right) \sum\nolimits_{i=1}^{n}  \mathcal{N}_i A_i \lVert \boldsymbol{\mathcal{V}}_{i,:} \rVert_1  ,
		\end{align}
	}
	where $C$ denotes the $\sum\nolimits_{i=1}^{n} A_i \lVert \boldsymbol{\mathcal{V}}_{i,:} \rVert_1$ and $\boldsymbol{\mathcal{V}} \in \mathbb{R}^{n \times d} = VW^O$ denotes all projected values states through parameter matrix $W^O$.
\end{theorem_nonum}
\begin{proof}
	Let $\boldsymbol{\mathcal{V}} \in \mathbb{R}^{n \times d} = VW^O$ denote all projected value states, thus:
	{\small
		\begin{align}
			\mathcal{L} &= \lVert \left(A - \frac{\mathcal{N} \odot A}{\sum\nolimits_{i=1}^{n} \mathcal{N}_i A_{i}}\right)\boldsymbol{\mathcal{V}}\rVert_1 \\
			&= \lVert \sum\nolimits_{i=1}^{n} \left(A_i -  \frac{\mathcal{N}_i A_i}{\sum\nolimits_{i=1}^{n} \mathcal{N}_i A_{i}} \right) \boldsymbol{\mathcal{V}}_{i,:}\rVert_1 \notag \\
			\leq \theta &= \sum\nolimits_{i=1}^{n} \lVert \left(A_i -  \frac{\mathcal{N}_i A_i}{\sum\nolimits_{i=1}^{n} \mathcal{N}_i A_{i}}\right) \boldsymbol{\mathcal{V}}_{i,:}\rVert_1 \\
			&= \sum\nolimits_{i=1}^{n} \lvert A_i -  \frac{\mathcal{N}_i A_i}{\sum\nolimits_{i=1}^{n} \mathcal{N}_i A_{i}} \rvert \times \lVert  \boldsymbol{\mathcal{V}}_{i,:} \rVert_1 \notag
		\end{align}
	}
	Given that the multiplicative mask $\mathcal{N}$ is either $0$ or $1$, the index set $i \in [1,n]$ can be split into  $I_0$ and $I_1$, according to its value. Thus:
	{\small
		\begin{align}
			\theta =  \sum\nolimits_{i \in I_0} A_i \lVert \boldsymbol{\mathcal{V}}_{i,:} \rVert_1 + \sum\nolimits_{i \in I_1} \left(  \frac{ A_i}{\sum\nolimits_{i=1}^{n} \mathcal{N}_i A_{i}} - A_i \right) \lVert \boldsymbol{\mathcal{V}}_{i,:} \rVert_1
		\end{align}
	}
	Let $C$ represent $\sum\nolimits_{i=1}^{n} A_i \lVert \boldsymbol{\mathcal{V}}_{i,:} \rVert_1$, a constant independent of the selection of critical entries. We can express $\sum\nolimits_{i \in I_0} A_i \lVert \boldsymbol{\mathcal{V}}_{i,:}\rVert_1$ as $C - \sum\nolimits_{i \in I_1} A_i \lVert \boldsymbol{\mathcal{V}}_{i,:} \rVert_1$. Thus:
	{\small
		\begin{align}
			\mathcal{L} &\leq \theta = C + \sum\nolimits_{i \in I_1}  \left(  \frac{ A_i}{\sum\nolimits_{i=1}^{n} \mathcal{N}_i A_{i}} - 2A_i \right) \lVert \boldsymbol{\mathcal{V}}_{i,:} \rVert_1 \\
			&= C -  \left(2- \frac{1}{\sum\nolimits_{i=1}^{n} \mathcal{N}_i A_{i}} \right) \sum\nolimits_{i=1}^{n}  \mathcal{N}_i A_i \lVert \boldsymbol{\mathcal{V}}_{i,:} \rVert_1 \notag
		\end{align}
	}
\end{proof}
\subsection{Proof for Theorem~\ref{thm:target}}
\label{apdx:proof3.5}
\begin{theorem_nonum}
	Given the stage 1 selection $\mathcal{N}'_i$, the objective $\mathcal{N}''_i$ of stage 2  is to minimize an upper bound $\hat{\theta}$ of the output perturbation $\mathcal{L}$, using the remaining budget $b'' = b - b'$.
	\begin{align}
		\small
		\argmin_{\mathcal{N}''_i}\hat{\theta} \:  \text{where} \: \hat{\theta} =   C' - \left(2 - \frac{1}{\sigma}\right)&\sum\nolimits_{i=1}^{n}  \mathcal{N}''_i A_i \lVert \boldsymbol{\mathcal{V}}_{i,:}  \rVert_1 \notag \\ \text{subject to}  \: \sum\nolimits_{i=1}^{n} \mathcal{N}''_i = b'',
		C' =   C -  \left(2 - \frac{1}{\sigma}\right) \sum\nolimits_{i=1}^{n} & \mathcal{N}'_i A_i \lVert \boldsymbol{\mathcal{V}}_{i,:} \rVert_1.
	\end{align}	
	\begin{proof}
		
		From Assumption \ref{asp:power_law}, the first stage selection ensures:  $\sum\nolimits_{i=1}^{n}  \mathcal{N}_i A_i >\sum\nolimits_{i=1}^{n}  \mathcal{N}'_i A_i = \sigma > 0.5$, leading to the inequality: $2- \frac{1}{\sum\nolimits_{i=1}^{n} \mathcal{N}_i A_{i}} > 2 - \frac{1}{\sigma} >0$.
		{
			\small
			\begin{align}
				\theta =&   C - \left(2- \frac{1}{\sum\nolimits_{i=1}^{n} \mathcal{N}_i A_{i}} \right) \sum\nolimits_{i=1}^{n}  (\mathcal{N}'_i+\mathcal{N}''_i) A_i \lVert \boldsymbol{\mathcal{V}}_{i,:} \rVert_1  \notag \\
				< & C -  \left(2 - \frac{1}{\sigma}\right)\sum\nolimits_{i=1}^{n}  \mathcal{N}'_i A_i \lVert \boldsymbol{\mathcal{V}}_{i,:} \rVert_1 \notag \\
				& -\left(2 - \frac{1}{\sigma}\right) \sum\nolimits_{i=1}^{n}  \mathcal{N}''_i A_i \lVert \boldsymbol{\mathcal{V}}_{i,:} \rVert_1
			\end{align}
		}
		Let $C' =  C -  \left(2 - \frac{1}{\sigma}\right) \sum\nolimits_{i=1}^{n}  \mathcal{N}'_i A_i \lVert \boldsymbol{\mathcal{V}}_{i,:} \rVert_1$, then we can derive a new upper bound $\hat{\theta}$ for $\mathcal{L}$ factoring by second stage selection $\mathcal{N}''_i$: $ \theta <  C' - \left(2 - \frac{1}{\sigma}\right)\sum\nolimits_{i=1}^{n}  \mathcal{N}''_i A_i \lVert \boldsymbol{\mathcal{V}}_{i,:} \rVert_1 = \hat{\theta} $
		Thus, minimizing $\hat{\theta}$ corresponds to selecting the $b''$ entries with the highest values of $\boldsymbol{\mathcal{A}}_i = A_i \lVert \boldsymbol{\mathcal{V}}{i,:} \rVert_1$, as implemented in the stage 2 selection  (Algorithm \ref{alg:selection}).
	\end{proof}
\end{theorem_nonum}

\section{Extended Experiments}
\label{apdx:extended_experiments}
\subsection{Comparison with More Baselines}
\label{apdx:additional_baselines}

{
To further evaluate the effectiveness and robustness of our proposed method, we incorporate comparisons with additional two baselines: \textbf{PyramidKV}~\cite{pyramidkv} and \textbf{DuoAttention}~\cite{duoattention}. We conduct experiments on the LongBench benchmark across 16 datasets, following the same setting as our main evaluation. The average scores are reported in Table~\ref{tab:add_baseline}.

As shown in the table, our method (applied to AdaKV and HeadKV) consistently outperforms PyramidKV across all settings. More importantly, while DuoAttention demonstrates competitive performance in high-budget scenarios, it lacks stability under stricter compression ratios. For instance, on Llama-3.1-8B with 40\% cache size, DuoAttention achieves a score of 48.17, which is comparable to our method (AdaKV w/ ours: 48.00). However, its performance degrades significantly when the cache budget is limited or when transferred to other LLMs.  Numerically, the contrast is sharp on the Mistral-7B model at a 20\% cache size: DuoAttention drops to a score of 31.13, lagging behind our method (HeadKV w/ ours: 43.46) by over 12 points. Similarly, compared to AdaKV w/ ours (42.85), DuoAttention underperforms by approximately 11.7 points. This distinct performance gap highlights the robustness and effectiveness of our algorithm. 

}

\begin{table}[h]
	\centering
	\caption{Performance comparison with recent baselines on LongBench (Average score of 16 datasets). \textbf{Bold} indicates the best performance.}
	\label{tab:add_baseline}
	\resizebox{0.85\linewidth}{!}{
		\begin{tabular}{lcccc}
			\toprule
			\textbf{Method} & \textbf{Llama 20\% Cache} & \textbf{Llama 40\% Cache} & \textbf{Mistral 20\% Cache} & \textbf{Mistral 40\% Cache} \\
			\midrule
			PyramidKV & 40.22 & 44.20 & 39.77 & 43.30 \\
			DuoAttention & 39.52 & \textbf{48.17} & 31.13 & 43.74 \\
			\midrule
			AdaKV w/ ours & 43.77 & 48.00 & 42.85 & \textbf{46.23} \\
			HeadKV w/ ours & \textbf{43.99} & 47.79 & \textbf{43.46} & 46.10 \\
			\bottomrule
		\end{tabular}
	}
\end{table}

\subsection{Evaluation on Extreme Compression (10\% Budget)}
\label{apdx:low_budget_10}

{

In this section, we extend our evaluation to a more aggressive compression scenario with a 10\% cache budget. This setting poses a significant challenge as it requires the model to discard 90\% of the historical context while maintaining reasoning capabilities. The detailed results across different $\alpha$ values are reported in Table~\ref{tab:10percent_results}. Consistent with our observations at the 20\% budget, the Llama-3.1-8B model remains relatively insensitive to the choice of $\alpha$, maintaining a stable performance around 38.5 points. This suggests that for robust models, even extreme compression does not trigger the failure modes that our method is designed to prevent.However, on the Mistral-7B-v0.3 model, the benefit of our proposed safeguard becomes evident. Without the safeguard (i.e., $\alpha=0$), the performance drops to 35.94. By setting $\alpha=0.5$, our method effectively recovers the performance to 37.94, yielding a 2.0-point improvement. This demonstrates that even under extreme memory constraints, our algorithm provides a crucial safety net for ensuring the quality of KV cache eviction on sensitive architectures.
}
\begin{table}[h]
	\centering
	\caption{Performance on LongBench with an extreme \textbf{10\% cache budget}.}
	\label{tab:10percent_results}
	\resizebox{\linewidth}{!}{
		\begin{tabular}{l|cccccc|c}
			\toprule
			\textbf{Model / Setting} & \textbf{Multi-Doc} & \textbf{Single-Doc} & \textbf{Sum.} & \textbf{Few-Shot} & \textbf{Synthetic} & \textbf{Code} & \textbf{Avg.} \\
			\midrule
			\multicolumn{8}{l}{\textit{Llama-3.1-8B (10\% Cache)}} \\
			\midrule
			AdaKV & 26.44 & 24.75 & 21.75 & 60.74 & 27.91 & \textbf{60.69} & 36.14 \\
			AdaKV w/ ours ($\alpha=0.0$) & 29.95 & 26.80 & 22.74 & 61.07 & \textbf{37.19} & 60.56 & \textbf{38.57} \\
			AdaKV w/ ours ($\alpha=0.3$) & 29.88 & 26.71 & 22.67 & \textbf{61.39} & 35.95 & 60.46 & 38.43 \\
			AdaKV w/ ours ($\alpha=0.5$) & \textbf{30.21} & \textbf{27.05} & \textbf{22.80} & 61.35 & 35.44 & 60.44 & 38.50 \\
			AdaKV w/ ours ($\alpha=0.7$) & 29.31 & 26.83 & 22.54 & 61.17 & 31.79 & 60.48 & 37.75 \\
			\midrule
			\multicolumn{8}{l}{\textit{Mistral-7B-v0.3 (10\% Cache)}} \\
			\midrule
			AdaKV & 27.56 & 22.78 & 22.14 & 62.24 & \textbf{34.00} & \textbf{61.77} & 37.23 \\
			AdaKV w/ ours ($\alpha=0.0$) & 23.31 & 23.29 & 22.27 & \textbf{63.61} & 28.34 & 60.42 & 35.94 \\
			AdaKV w/ ours ($\alpha=0.3$) & 25.08 & 24.49 & \textbf{22.85} & 63.50 & 33.15 & \textbf{60.86} & 37.24 \\
			AdaKV w/ ours ($\alpha=0.5$) & \textbf{28.79} & \textbf{25.33} & 22.72 & 63.35 & 32.64 & 60.57 & \textbf{37.94} \\
			AdaKV w/ ours ($\alpha=0.7$) & 28.70 & 24.65 & 22.60 & 62.38 & 33.00 & 60.51 & 37.62 \\
			\bottomrule
		\end{tabular}
	}
\end{table}

\subsection{Task Domain Analysis of LongBench (Easy Compression Settings)}
\label{apdx:longbench_regular}

Table~\ref{tab:llama_lb_regular} reports domain scores on the LongBench benchmark under an easy compression setting, where both the context and question are simultaneously provided for compression . Because this setup allows cache compression targeted to specific questions, it is considered simple and results in minimal quality degradation, with scores nearly matching the full cache case even in 20\% cache size.  Nevertheless, our enhanced cache eviction method also improves quality across most domains. However, this scenario is not widely applicable in practice, as it fails in multi-turn question answering or real-world contexts where future questions cannot be anticipated. Therefore, we recommend evaluating methods under more challenging compression settings as adopted in our main experiments that better reflect practical use cases.

\subsection{Detail Results of Mistral-7B on Ruler Benchmark}
\label{apdx:ruler_mistral}
Figure~\ref{fig:mistral_ruler} presents the detailed results of the Mistral model on the Ruler benchmark with varying cache sizes. Overall, our algorithm significantly improves the performance of all three baseline methods.

\begin{figure*}[t!]
	\begin{minipage}{\linewidth}
		\centering
		\includegraphics[width=\textwidth]{./Figures/ruler_legend.pdf}
	\end{minipage}
	\centering
	\begin{subfigure}[b]{0.19\linewidth}
		\centering
		\includegraphics[width=\textwidth]{./Figures/ruler_figures/Mistral-7B-Instruct-v0.3_average.pdf}
		
	\end{subfigure}
	\begin{subfigure}[b]{0.19\linewidth}
		\centering
		\includegraphics[width=\textwidth]{./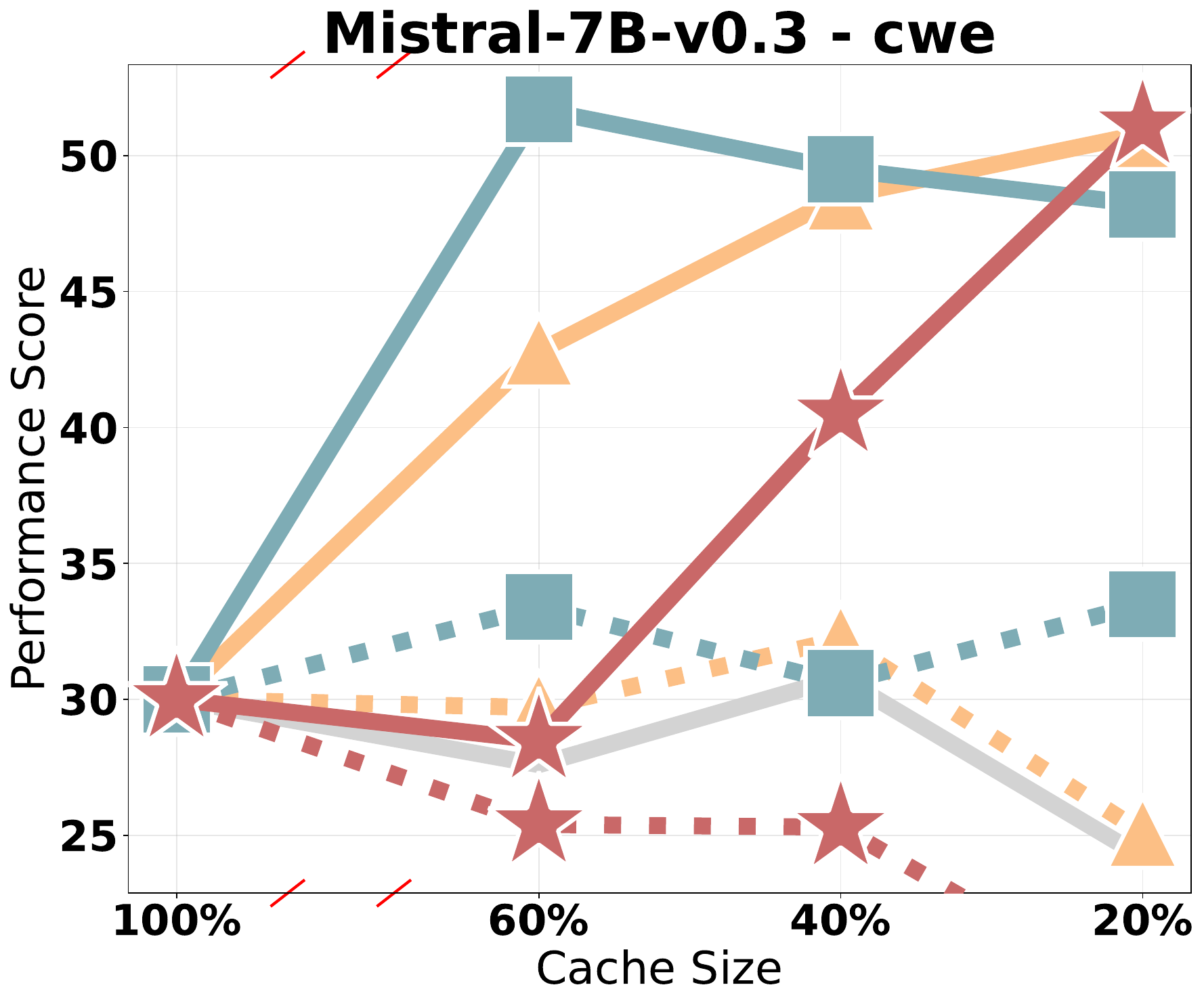}
	\end{subfigure}
	\begin{subfigure}[b]{0.19\linewidth}
		\centering
		\includegraphics[width=\textwidth]{./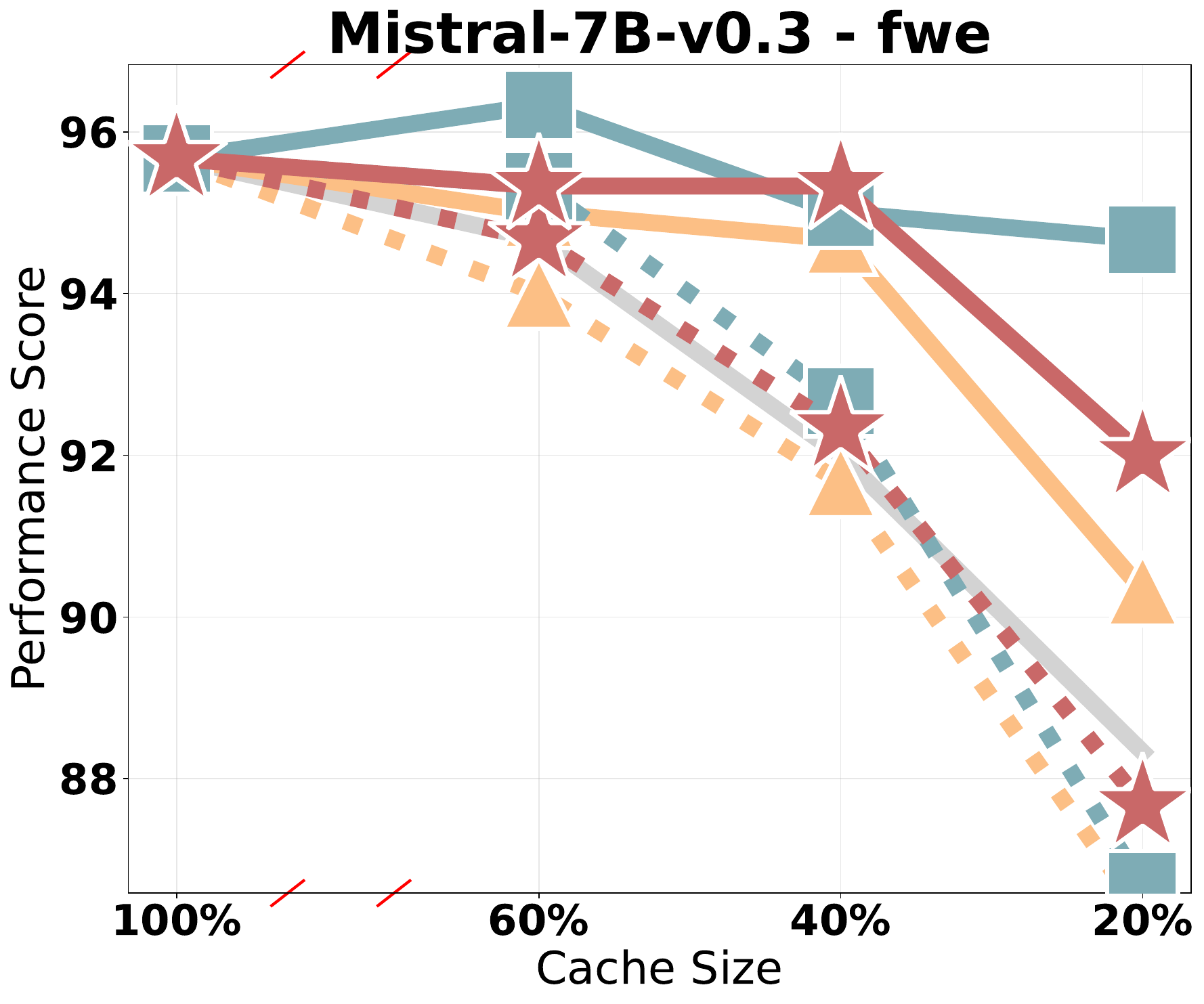}
	\end{subfigure}
	\begin{subfigure}[b]{0.19\linewidth}
		\centering
		\includegraphics[width=\textwidth]{./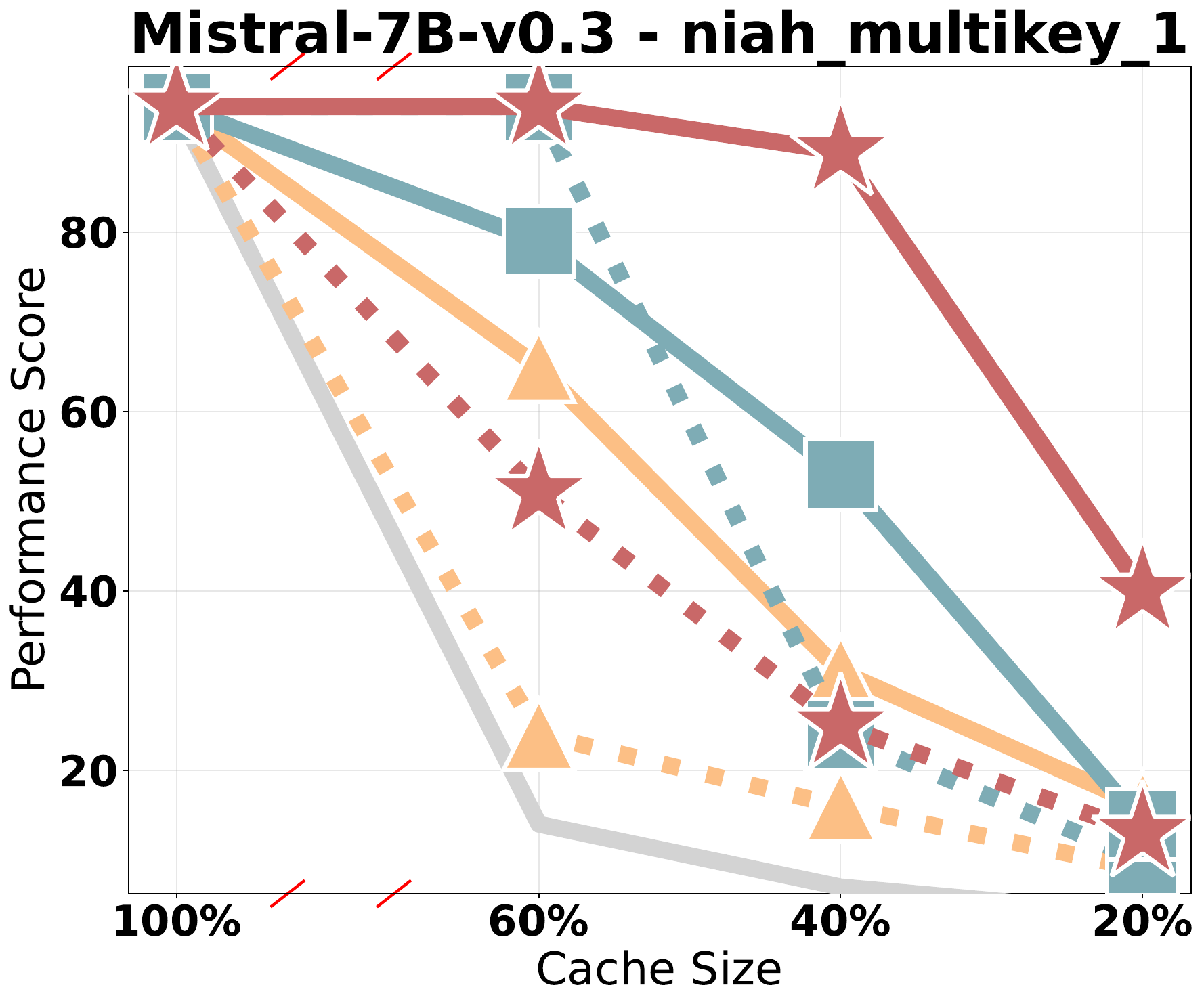}
	\end{subfigure}	
	\begin{subfigure}[b]{0.19\linewidth}
		\centering
		\includegraphics[width=\textwidth]{./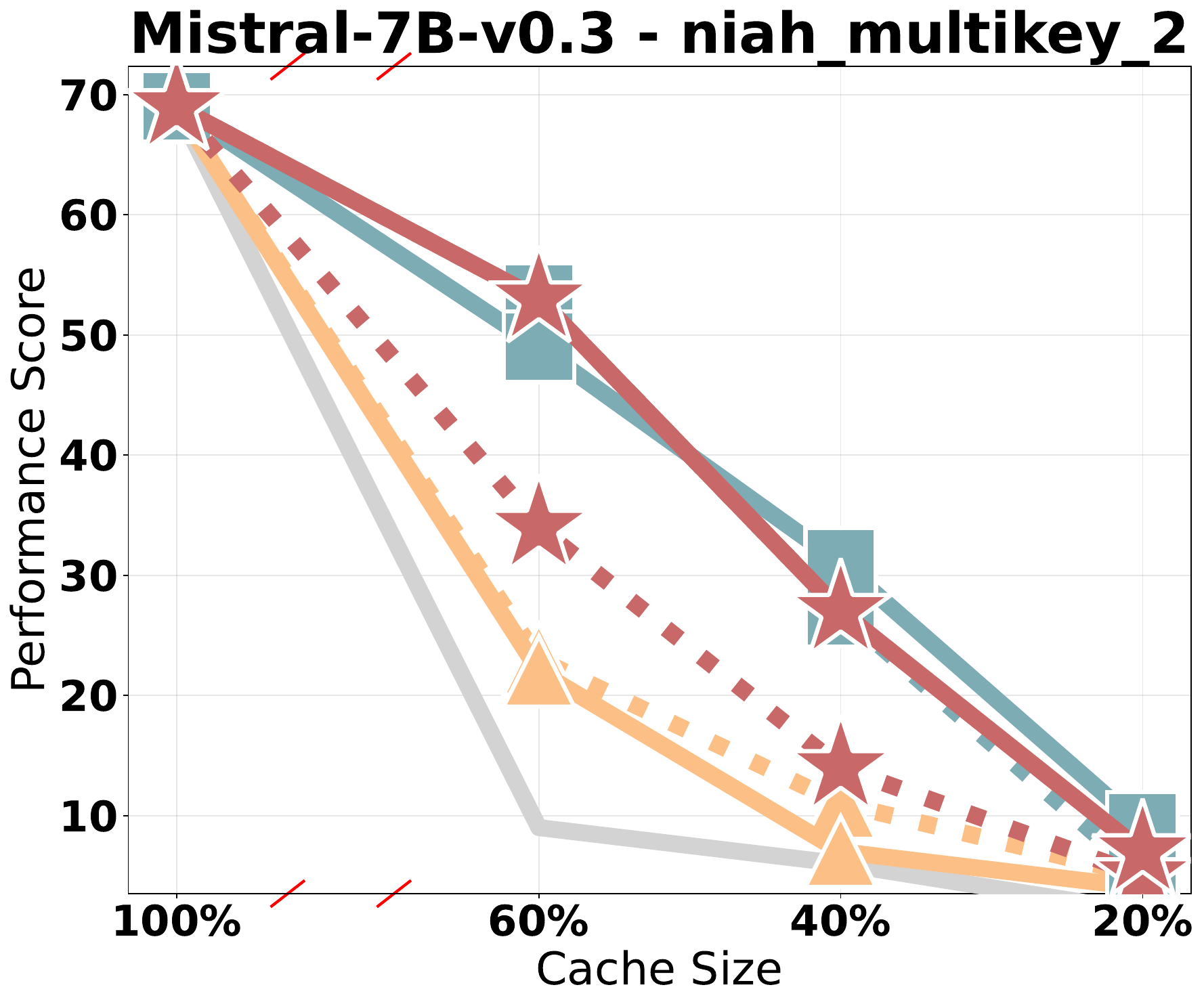}
	\end{subfigure}
	\begin{subfigure}[b]{0.19\linewidth}
		\centering
		\includegraphics[width=\textwidth]{./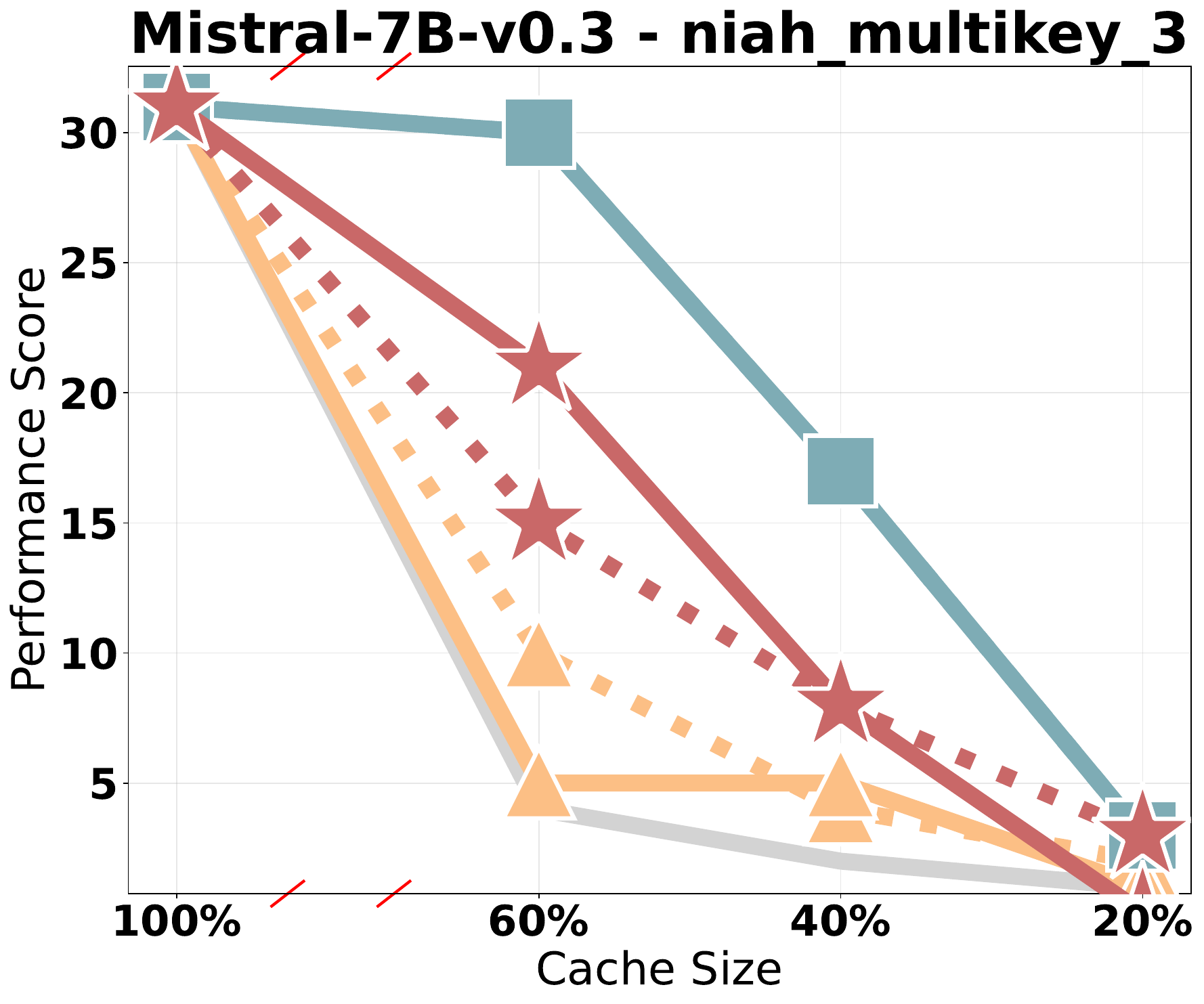}
	\end{subfigure}
	\begin{subfigure}[b]{0.19\linewidth}
		\centering
		\includegraphics[width=\textwidth]{./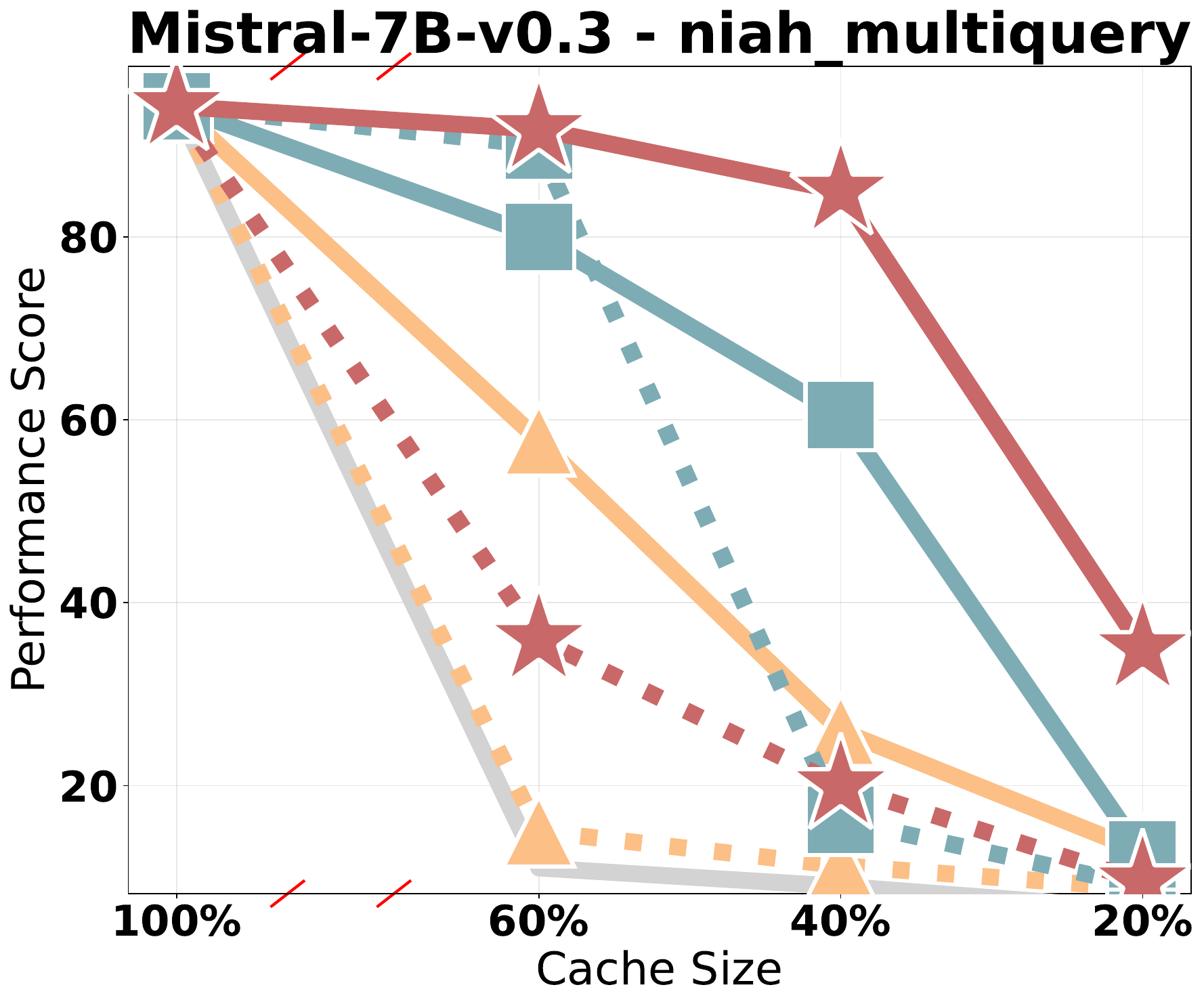}
	\end{subfigure}
	\begin{subfigure}[b]{0.19\linewidth}
		\centering
		\includegraphics[width=\textwidth]{./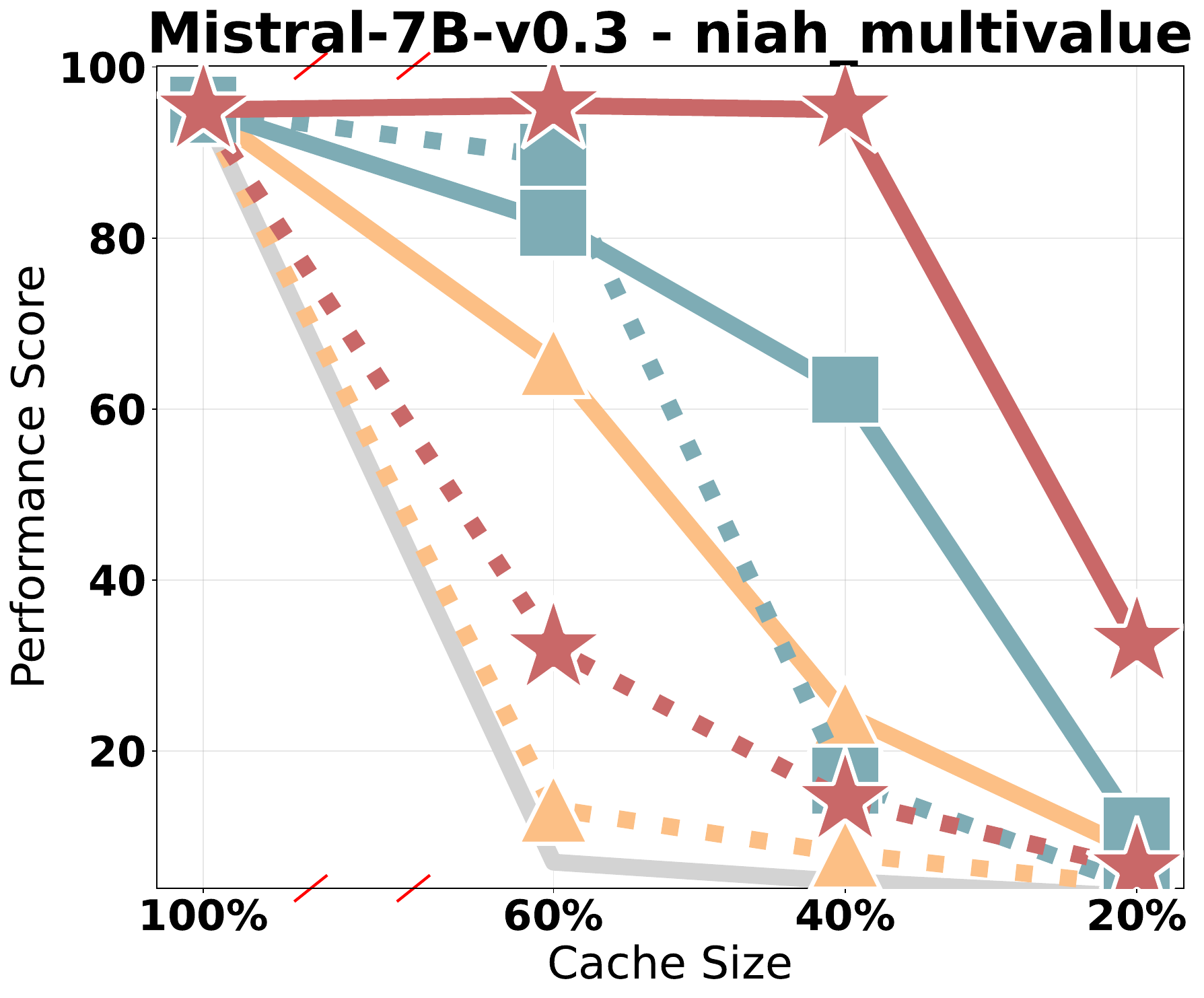}
	\end{subfigure}
	\begin{subfigure}[b]{0.19\linewidth}
		\centering
		\includegraphics[width=\textwidth]{./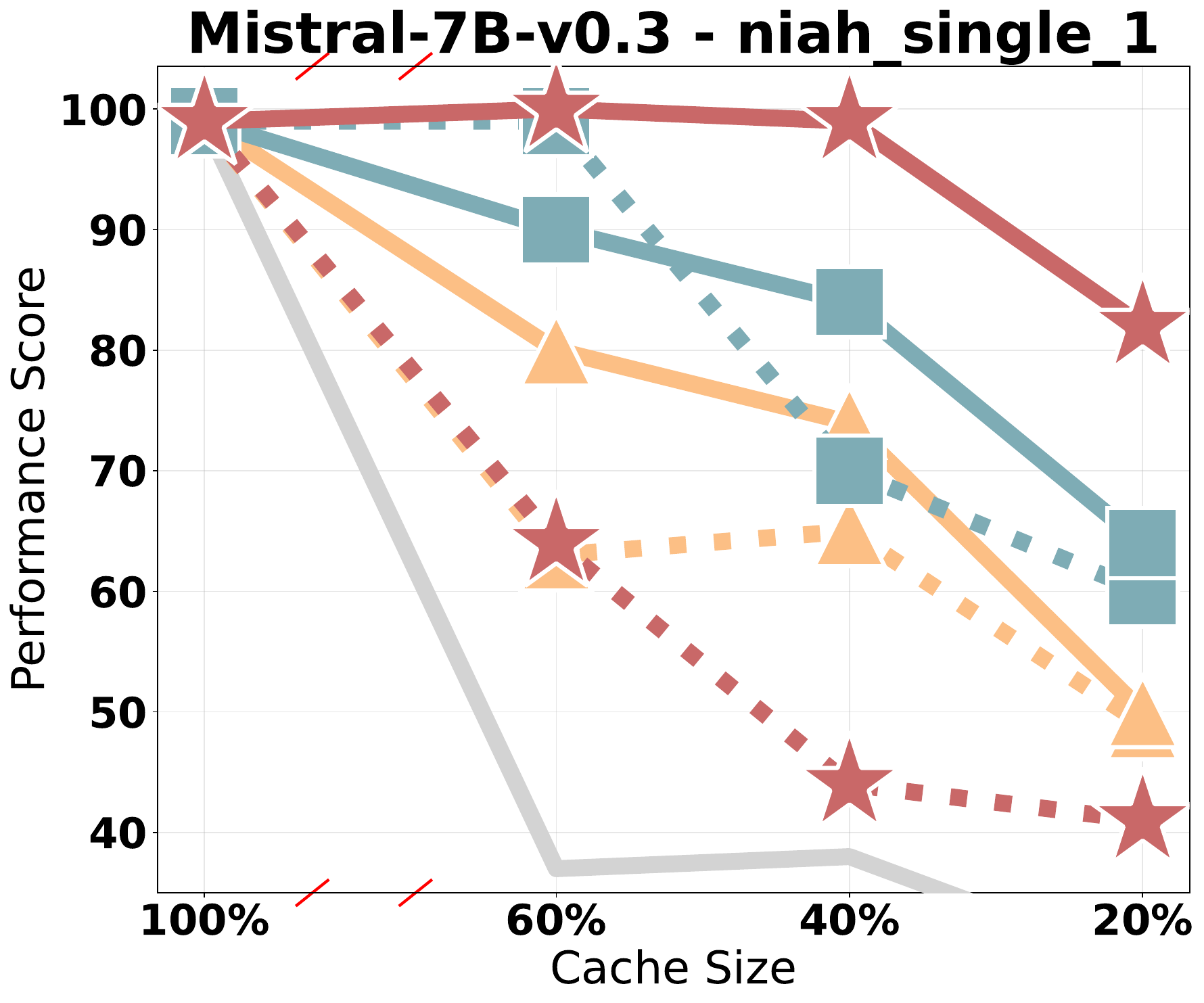}
	\end{subfigure}
	\begin{subfigure}[b]{0.19\linewidth}
		\centering
		\includegraphics[width=\textwidth]{./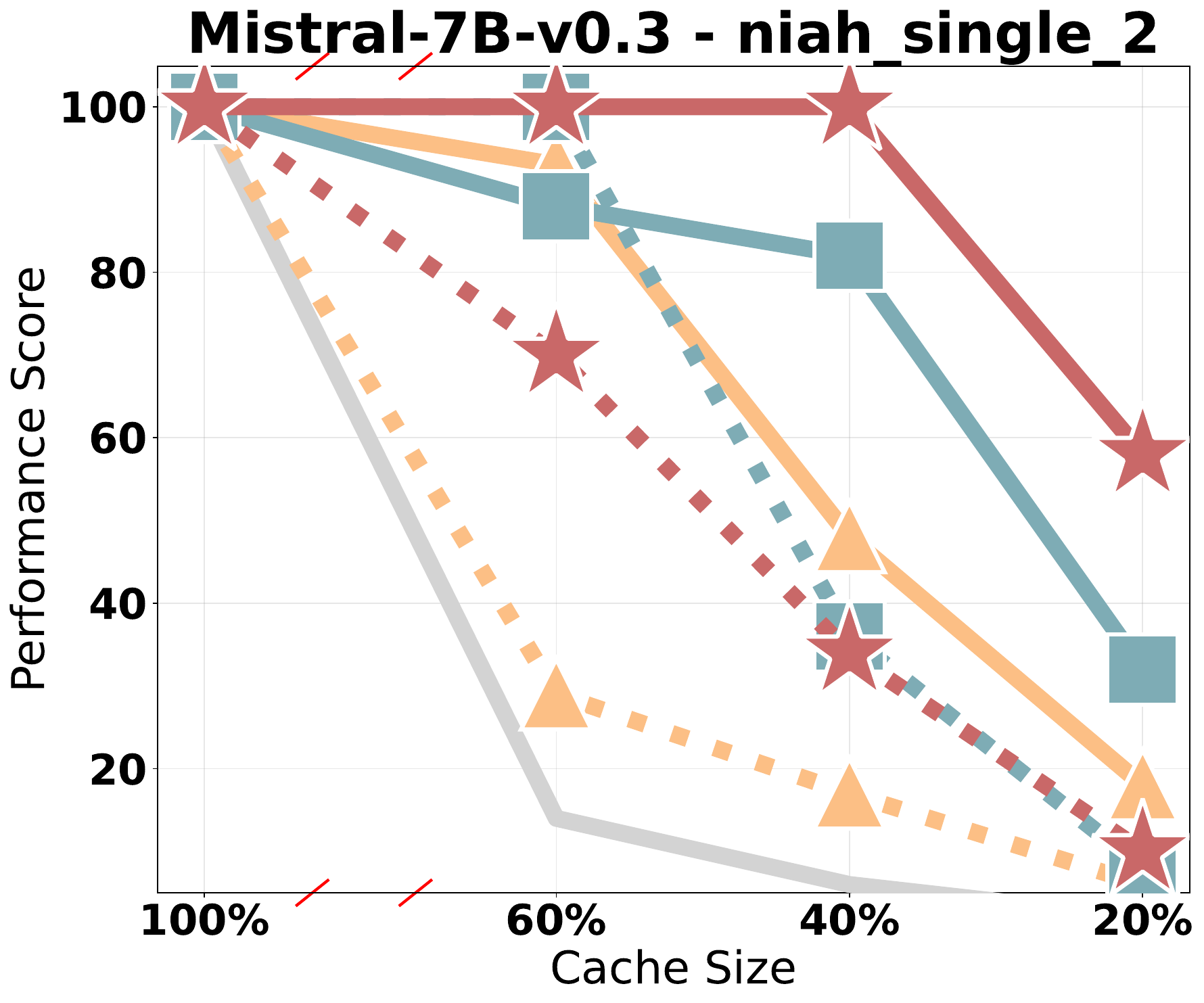}
	\end{subfigure}
	\begin{subfigure}[b]{0.19\linewidth}
		\centering
		\includegraphics[width=\textwidth]{./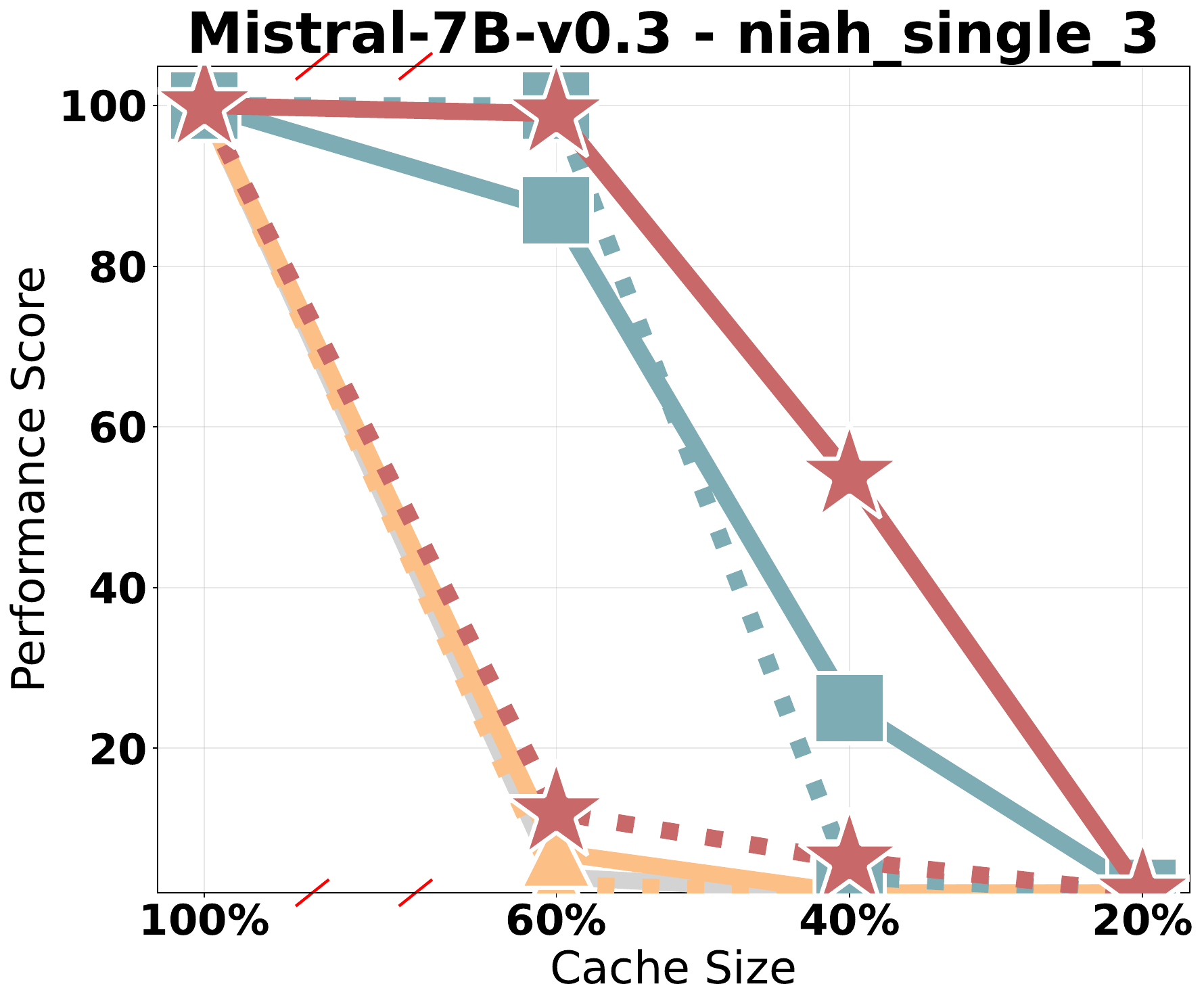}
	\end{subfigure}
	\begin{subfigure}[b]{0.19\linewidth}
		\centering
		\includegraphics[width=\textwidth]{./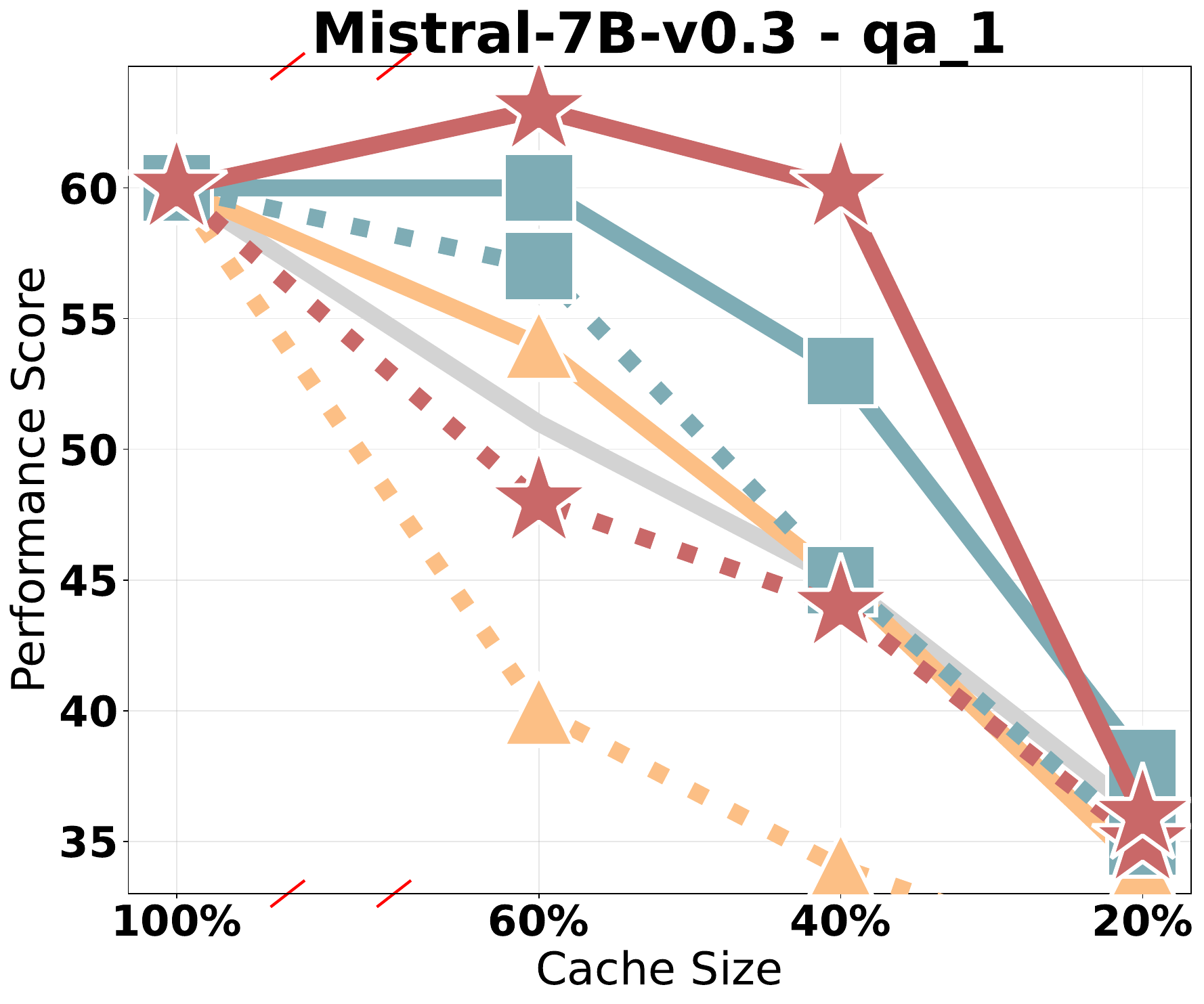}
	\end{subfigure}
	\begin{subfigure}[b]{0.19\linewidth}
		\centering
		\includegraphics[width=\textwidth]{./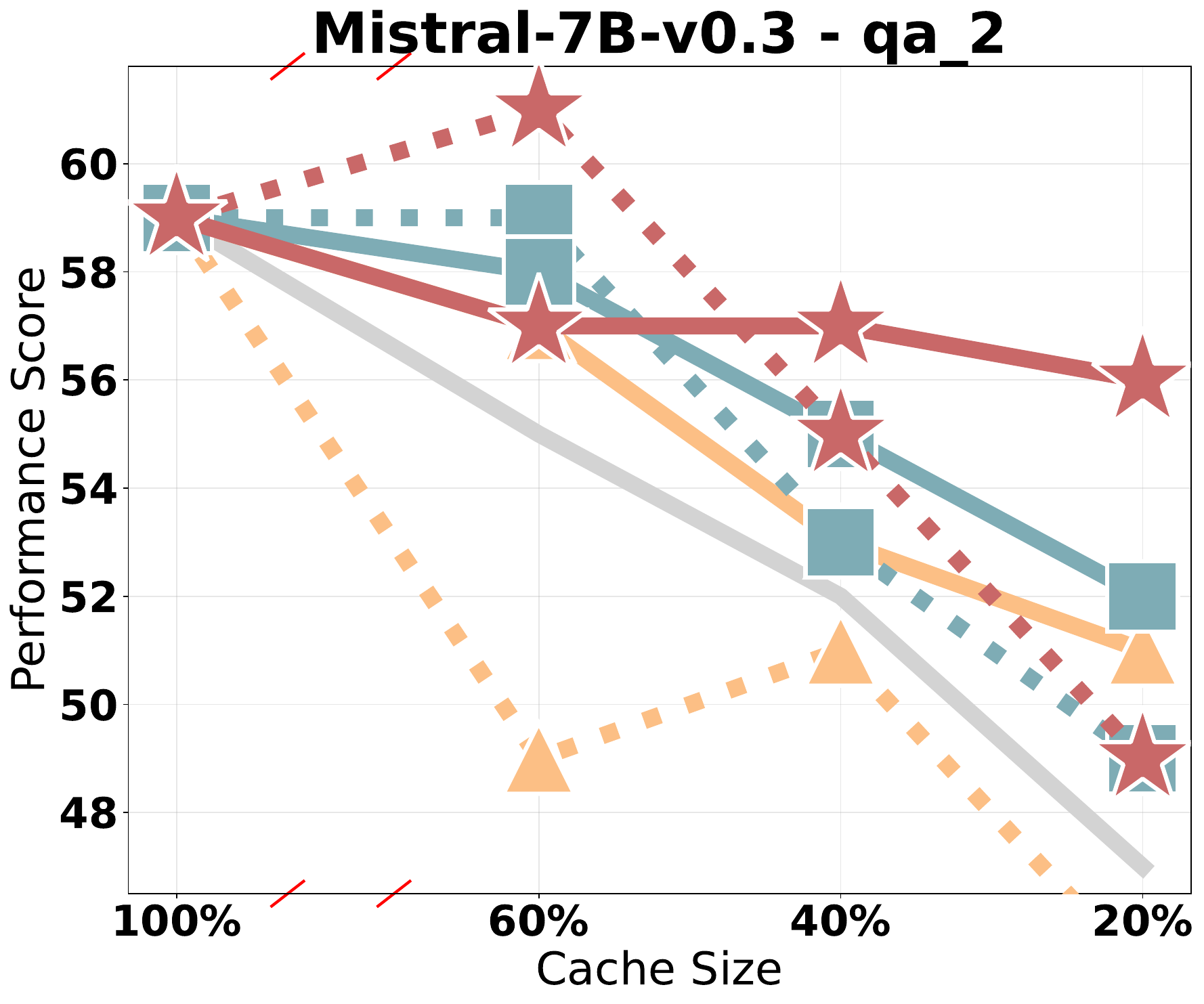}
	\end{subfigure}
	\begin{subfigure}[b]{0.19\linewidth}
		\centering
		\includegraphics[width=\textwidth]{./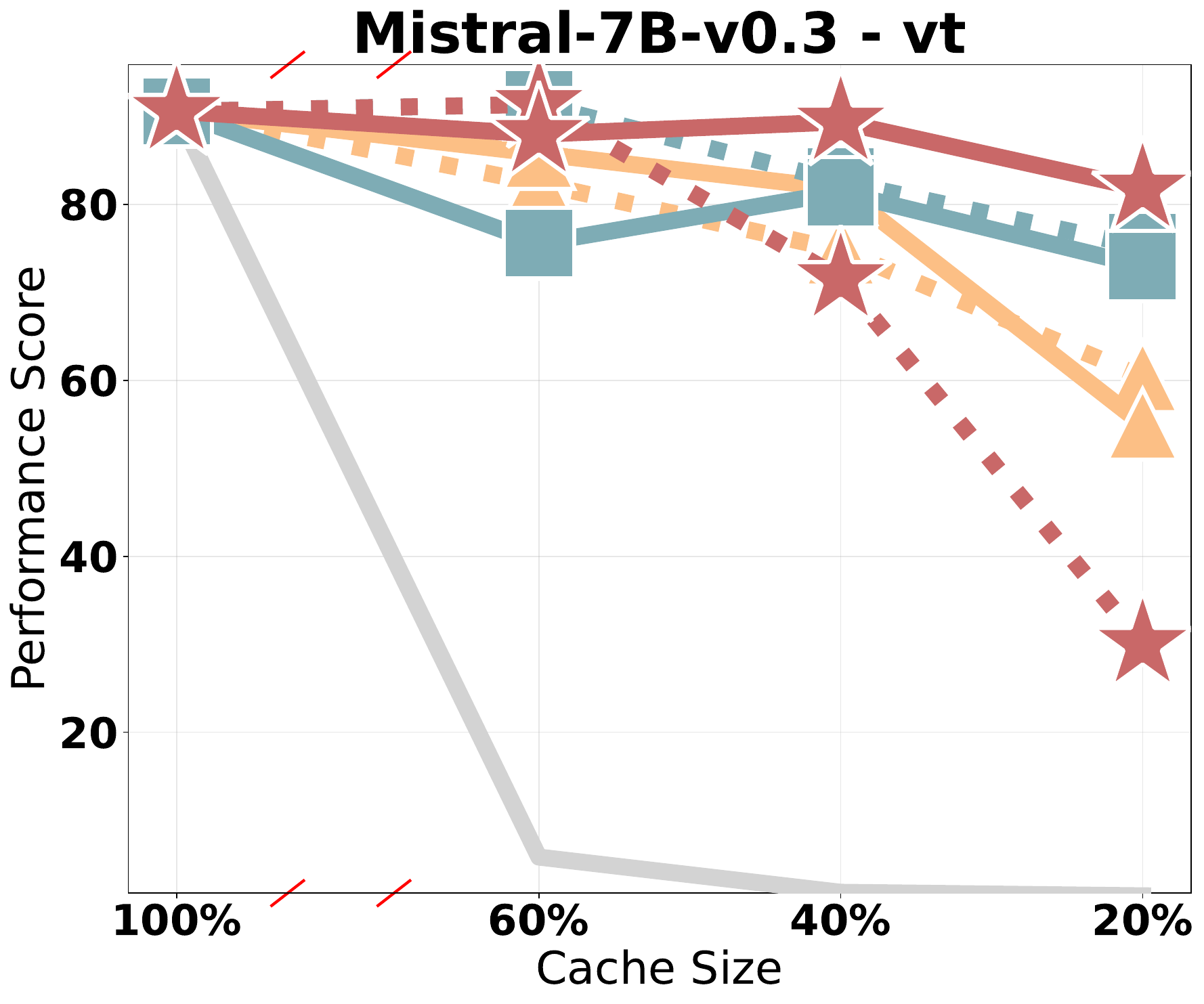}
	\end{subfigure}
	\caption{Performance on Ruler Tasks of Mistral-7B-v0.3 with Varying Cache Sizes }
	\label{fig:mistral_ruler}
	
\end{figure*}

\subsection{Multi-Turn QA Evaluation}
\label{apdx:multi_turn}
{

	To further evaluate the effectiveness of our proposed method in processing long-horizon information and handling future-unknown questions, we conducted experiments on SCBench~\citep{li2025scbenchkvcachecentricanalysis} designed for multi-turn dialogue and long-context compression.
	
	\textbf{Settings.}
	SCBench comprises 4,853 QA turns with an average context length of 227K tokens. This scale significantly exceeds the effective context window of most open-source LLMs, including the Llama-3.1-8B used in our experiments (128K context window). To accommodate these constraints while reserving sufficient capacity for generation in multi-turn scenarios, we selected three representative tasks across different retrieval categories: \textit{Retr.KV} (String Retrieval), \textit{EN.QA} (Semantic Retrieval), and \textit{Math.Find} (Global Information). For these tasks, we truncated the input contexts to 100K tokens to prevent context overflow during subsequent dialogue turns.
	
	\textbf{Results.}
	We compared the performance of the vanilla AdaKV baseline against AdaKV enhanced with our method across three cache budget settings: 80\%, 60\%, and 40\%. The results are summarized in Table~\ref{tab:scbench_results}. Our method demonstrates consistent improvements over the baseline across all tasks. Notably, the performance gain becomes more pronounced as the cache budget decreases. For instance, in the \textit{EN.QA} task at a 40\% compression rate, our method achieves a score of 22.14 compared to the baseline's 15.71, representing a substantial improvement. These results indicate that our method effectively enhances the ability to preserve critical information.
	
}
\begin{table*}[ht]
	\centering
	\caption{Performance comparison on SCBench multi-turn QA tasks.}
	\label{tab:scbench_results}
	\renewcommand{\arraystretch}{1.2} 
	\resizebox{0.7\textwidth}{!}{
		\begin{tabular}{l c cc cc cc}
			\toprule
			\multirow{2}{*}{\textbf{Task (Turns)}} & \multirow{2}{*}{\textbf{Full Cache}} & \multicolumn{2}{c}{\textbf{80\% Cache Budget}} & \multicolumn{2}{c}{\textbf{60\% Cache Budget}} & \multicolumn{2}{c}{\textbf{40\% Cache Budget}} \\
			\cmidrule(lr){3-4} \cmidrule(lr){5-6} \cmidrule(lr){7-8}
			& & AdaKV & \textbf{+ Ours} & AdaKV & \textbf{+ Ours} & AdaKV & \textbf{+ Ours} \\
			\midrule
			Retr.KV (5 turns) & 52.20 & 52.00 & \textbf{52.60} & 41.20 & \textbf{42.00} & 19.40 & \textbf{19.80} \\
			Math.Find (6 turns) & 11.67 & 	11.33 & \textbf{13.67} & 11.67 & \textbf{14.17} & 11.50 & \textbf{16.83} \\
			EN.QA (7 turns) & 22.86 & 22.14 & \textbf{24.29} & 17.86 & \textbf{21.43} & 15.71 & \textbf{22.14} \\
			\bottomrule
		\end{tabular}
	}
\end{table*}

\subsection{Efficiency with Chunked Prefilling}

	In practical deployment, cache eviction policies can be synergistically combined with chunked prefilling to accelerate the pre-filling phase. Following the experimental protocol of DuoAttention~\citep{duoattention}, we implemented chunked prefilling with varying chunk sizes on 128K context sequences to achieve prefilling acceleration based on the AdaKV baseline with 50\% cache size. By  evicting non-essential KV pairs during the sequential processing of chunks, the computational and memory overhead for subsequent chunks is reduced. 
	Table~\ref{tab:prefill_latency} presents the pre-filling latency and corresponding speedup factors across varying chunk sizes. Our method incurs minimal computational overhead compared to the AdaKV baseline. For example, at a chunk size of 30K, our method achieves a latency of 22.31s (1.37x speedup), which is comparable to AdaKV's 22.48s (1.36x speedup). Furthermore, the latency increase remains negligible across all chunk sizes, consistently staying within approximately 0.2 seconds of the baseline. Given the significant quality improvements our method offers over existing baselines, this marginal increase in overhead is negligible.

\begin{table}[t]
	\centering
	\caption{Comparison of 128K context TTFT in seconds.}
	\label{tab:prefill_latency}
	\renewcommand{\arraystretch}{1.1}
	
	\resizebox{\textwidth}{!}{
		\begin{tabular}{l c c c | l c c c}
			\toprule
			
			\textbf{Chunk Size} & \textbf{Full Cache} & \textbf{AdaKV} & \textbf{AdaKV (w/ ours)} & 
			
			\textbf{Chunk Size} & \textbf{Full Cache} & \textbf{AdaKV} & \textbf{AdaKV (w/ ours)} \\
			\midrule

			30k  & 30.57 (1.00x) & 22.31 (1.37x) & 22.48 (1.36x) & 
			90k  & 30.53 (1.00x) & 25.99 (1.17x) & 26.17 (1.17x) \\

			50k  & 30.54 (1.00x) & 23.46 (1.30x) & 23.63 (1.29x) & 
			110k & 30.43 (1.00x) & 28.04 (1.09x) & 28.21 (1.08x) \\

			70k  & 30.50 (1.00x) & 25.09 (1.22x) & 25.24 (1.21x) & 
			128k & 29.79 (1.00x) & 29.98 (0.99x) & 30.17 (0.99x) \\
			
			\bottomrule
		\end{tabular}
	}
\end{table}

\section{Analysis Results}
\label{apdx:analysis_results}
\subsection{Sensitivity Analysis of Hyperparameter $\alpha$}
\label{sec:ana}
{
	
	As demonstrated previously, we introduce the hyperparameter $\alpha$ as a safeguard to satisfy Assumption 3.4. To investigate the sensitivity and necessity of this parameter, we evaluate the performance of our method (integrated with AdaKV) on LongBench across a range of $\alpha$ values $\{0.0, 0.3, 0.5, 0.7\}$ with a 20\% cache budget. \footnote{The results of on a small 10\% budget are shown in Appendix \ref{apdx:low_budget_10}} The detailed results are presented in Table~\ref{tab:sensitivity_analysis}.
	
	Empirically, we observe that a simple choice of $\alpha=0.5$ is sufficient to ensure robust performance across different models over base method AdaKV. For the Llama-3.1-8B model, the performance remains relatively stable across different $\alpha$ values, indicating the robustness of $\alpha$.  However, the results on Mistral-7B-v0.3 highlight the critical necessity of this safeguard. Removing $\alpha$ entirely (i.e., setting $\alpha=0$) leads to a violation of the assumption for certain attention heads, resulting in significant performance degradation. Specifically, as shown in Table~\ref{tab:sensi}, setting $\alpha=0$ on Mistral causes the  score to 31.94, lagging behind the default setting ($\alpha=0.5$, 42.85) by over 10 points.
	
}

\begin{table}[h]
	\centering
	\caption{Sensitivity analysis of the hyperparameter $\alpha$ on LongBench with 20\% cache budget. }
	\label{tab:sensi}
	\resizebox{0.8\linewidth}{!}{
		\begin{tabular}{l|cccccc|c}
			\toprule
			\textbf{Model / Setting} & \textbf{Multi-Doc} & \textbf{Single-Doc} & \textbf{Sum.} & \textbf{Few-Shot} & \textbf{Synthetic} & \textbf{Code} & \textbf{Avg.} \\
			\midrule
			\multicolumn{8}{l}{\textit{Llama-3.1-8B (20\% Cache)}} \\
			\midrule
			AdaKV & 34.03 & 29.54 & 23.74 & 63.56 & 43.90 & 60.94 & 41.39 \\
			AdaKV w/ ours ($\alpha=0.0$) & \textbf{38.14} & 33.56 & \textbf{25.14} & 64.25 & 51.73 & \textbf{61.41} & \textbf{44.35} \\
			AdaKV w/ ours ($\alpha=0.3$) & 36.73 & \textbf{33.80} & 24.94 & 63.29 & 50.21 & 61.03 & 43.67 \\
			AdaKV w/ ours ($\alpha=0.5$) & 35.31 & 32.74 & 24.98 & \textbf{64.74} & \textbf{52.30} & 61.16 & 43.77 \\
			AdaKV w/ ours ($\alpha=0.7$) & 36.95 & 32.20 & 24.70 & 63.36 & 49.14 & 61.38 & 43.29 \\
			\midrule
			\multicolumn{8}{l}{\textit{Mistral-7B-v0.3 (20\% Cache)}} \\
			\midrule
			AdaKV & 31.30 & 27.15 & 23.81 & 65.05 & 46.25 & 62.27 & 41.18 \\
			AdaKV w/ ours ($\alpha=0.0$) & 26.12 & 22.49 & 23.42 & 39.44 & 31.29 & 56.97 & 31.94 \\
			AdaKV w/ ours ($\alpha=0.3$) & 31.69 & \textbf{30.69} & \textbf{24.87} & \textbf{67.30} & 46.84 & 61.67 & 42.54 \\
			AdaKV w/ ours ($\alpha=0.5$) & \textbf{33.04} & 29.06 & 24.83 & 67.08 & \textbf{49.25} & \textbf{62.56} & \textbf{42.85} \\
			AdaKV w/ ours ($\alpha=0.7$) & 32.60 & 30.11 & 24.78 & 66.90 & 49.00 & 61.82 & 42.80 \\
			\bottomrule
		\end{tabular}
	}
\end{table}

\subsection{Analysis of $\alpha$ in Assumption~\ref{asp:power_law}}
\label{apdx:check_asp}
We ensure the reliability of Assumption \ref{asp:power_law} by analyzing the cumulative attention weights of critical KV Cache entries $\sum\nolimits_{i=1}^{n} \mathcal{N}i A{i}$ in individual heads. 
As shown in Figure \ref{fig:check_asp}, across different models and cache sizes, almost all attention heads can accumulate over half of the attention weights as said in  Assumption~\ref{asp:power_law}. The only exceptions are a few attention heads in the first layer. This is primarily due to the low sparsity of attention weights in certain heads of the first layer, a phenomenon that has been noted in many related studies \citep{quest,h2o,pyramidkv}. However, this effect is negligible, as these heads constitute less than 1\% of the total and their minor negative impact is far outweighed by the substantial gains from the compliant heads.
A potential solution is to set the algorithm's threshold $\alpha$ based on the head-wise characteristics  to achieve greater benefits. However, considering the additional complexity that might introduce,  we retain a fixed $\alpha=0.5$ for its simplicity and strong empirical performance, leaving fine-grained tuning for future work.

\begin{figure}[t]
	\centering
	\begin{minipage}[b]{0.94\linewidth}
		\begin{subfigure}[b]{0.24\linewidth}
			\includegraphics[width=\textwidth]{./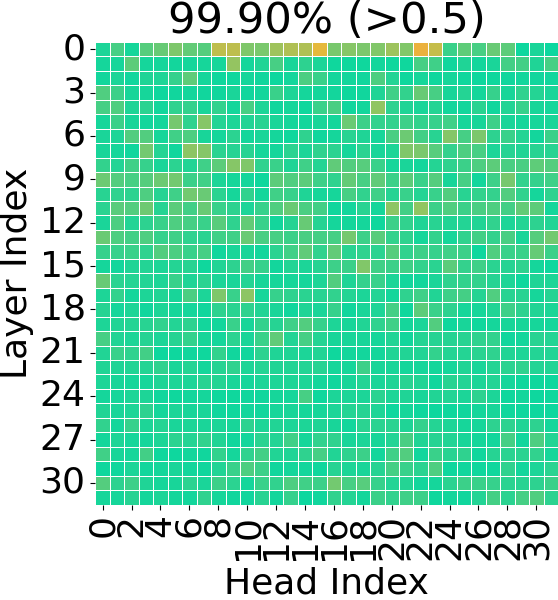}
			\caption{\scriptsize Llama Cache  10\%}
		\end{subfigure}
		\begin{subfigure}[b]{0.24\linewidth}
			\includegraphics[width=\textwidth]{./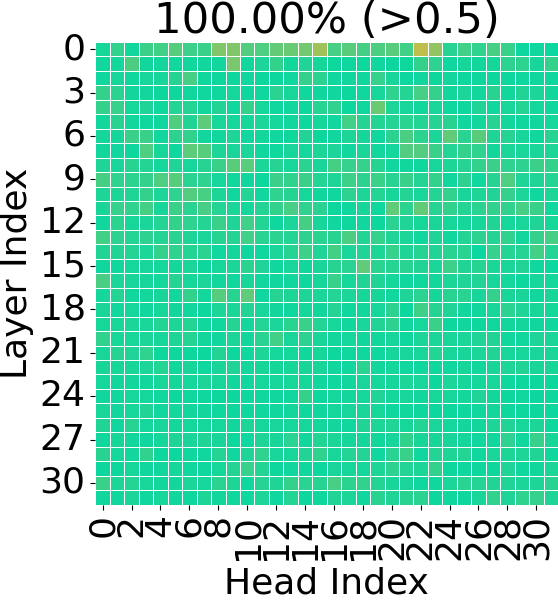}
			\caption{\scriptsize Llama Cache  20\%}
		\end{subfigure}
		\begin{subfigure}[b]{0.24\linewidth}
			\includegraphics[width=\textwidth]{./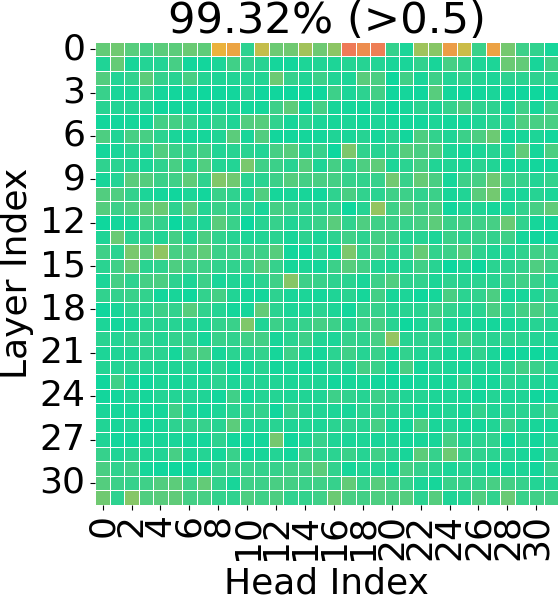}
			\caption{\scriptsize Mistral Cache 10\%}
		\end{subfigure}
		\begin{subfigure}[b]{0.24\linewidth}
			\includegraphics[width=\textwidth]{./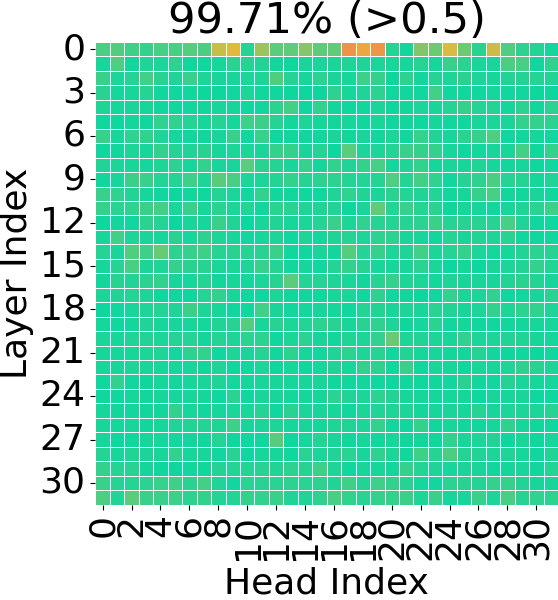}
			\caption{\scriptsize Mistral Cache  20\%}
		\end{subfigure}
	\end{minipage}
	\begin{minipage}[b]{0.05\linewidth}
		\centering
		\includegraphics[width=1.0\textwidth]{./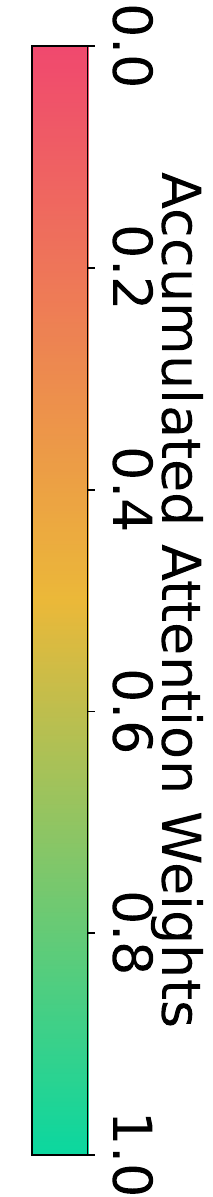}
	\end{minipage}
	
	\caption{Assumption~\ref{asp:power_law} validates in over 99\% of heads across various cache sizes.}
	\label{fig:check_asp}
\end{figure}
\begin{table}[h]
	\centering
	\caption{Impact of safeguard $\alpha$ on algorithm performance based on Ada-KV.}
	\label{tab:sensitivity_analysis}
	\small
	\begin{tabular}{lccccccc}
		\hline
		\textbf{Model} & \textbf{ Ada-KV} & $\alpha=0$ & $\alpha=0.3$ & $\alpha=0.4$ & $\alpha=0.5$  & $\alpha=0.6$ & $\alpha=0.7$ \\
		\hline
		Llama-3.1-8B & 42.87 & 44.35 & 43.67 & 43.86 & 43.88 & 43.31 & 43.29 \\
		Mistral-7B-v0.3 & 41.78 & 31.94 & 42.54 & 42.76 & 42.88 & 42.98 & 42.80 \\
		\hline
	\end{tabular}
\end{table}

\subsection{Analysis of Distance Metric Choice ($L_1$ vs. $L_2$)}
\label{apdx:distance}
To evaluate the impact of different distance metrics on our algorithm, we compared the commonly used $L_1$ and $L_2$ distances on the 4K Ruler Benchmark. As shown in Figure \ref{fig:choice_distance_metric_snap}, we observed no significant improvement in quality when using the more complex $L_2$ distance compared to the simpler $L_1$ distance. For its simplicity, we adopt the $L_1$ distance metric in our analysis. Exploring more advanced distance metrics within our framework remains a promising direction for future work.

\subsection{Analysis of Previous Attention-Weights-Based Selection}
Our algorithm differs from the previous solely attention weights-based selection method primarily in Stage 2. Specifically, by modifying stage 2 of our algorithm to perform the same attention weights-based selection operation as in stage 1, our approach will degrade into the previous method. This modification allows us to conveniently apply perturbation-constrained theory to analyze the earlier attention weights-based selection strategy.
\begin{theorem}
	\label{thm:attn_weights_select}
	Previous solely attention weights-based selection is equivalent to minimizing another upper bound $\hat{\theta}^{relax}$, a relaxed form of $\hat{\theta}$, with remaining budget $b''$ based on stage 1 selection.	 
	\begin{align}
		\hat{\theta}^{relax} =  C' - M\left(2- \frac{1}{\sigma}\right)\sum\nolimits_{i=1}^{n}  \mathcal{N}''_i A_i \quad \text{where} \quad M = MIN(\lVert \boldsymbol{\mathcal{V}}_{i,:} \rVert_1 )
	\end{align}
\end{theorem}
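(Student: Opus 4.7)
\medskip

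\noindent\textbf{Proof proposal.} The plan is to show two things: (i) $\hat{\theta}^{relax}$ is genuinely an upper bound on $\hat{\theta}$ (and hence on $\mathcal{L}$), justifying the label ``relaxed form''; and (ii) under the budget constraint $\sum_{i=1}^{n}\mathcal{N}''_i = b''$, minimizing $\hat{\theta}^{relax}$ is mathematically the same optimization problem as picking the top-$b''$ indices by attention weight $A_i$, which is precisely the previous solely attention-weights-based selection rule. Once these two facts are in hand, the theorem follows by composition with the stage-1 selection rule from Theorem~\ref{thm:target}.

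\medskip

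\noindent First I would verify the sign and positivity structure I need. Assumption~\ref{asp:power_law} gives $\sigma > 0.5$, so $2 - 1/\sigma > 0$. Moreover $M = \min_i \lVert \boldsymbol{\mathcal{V}}_{i,:}\rVert_1 \ge 0$, and $\mathcal{N}''_i A_i \ge 0$ for every $i$. Then by the definition of $M$ I have, termwise, $\lVert \boldsymbol{\mathcal{V}}_{i,:}\rVert_1 \ge M$, so
\begin{equation}
\sum_{i=1}^{n} \mathcal{N}''_i A_i \lVert \boldsymbol{\mathcal{V}}_{i,:}\rVert_1 \;\ge\; M \sum_{i=1}^{n} \mathcal{N}''_i A_i .
\end{equation}
Multiplying by the positive factor $-(2-1/\sigma)$ flips the inequality and, adding the common constant $C'$, yields $\hat{\theta} \le \hat{\theta}^{relax}$. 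Together with $\mathcal{L} \le \hat{\theta}$ from Theorem~\ref{thm:target}, this confirms that $\hat{\theta}^{relax}$ is a (looser) upper bound on the output perturbation $\mathcal{L}$.

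\medskip

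\noindent Next I would reduce the minimization of $\hat{\theta}^{relax}$ to a pure top-$k$ problem on $A$. Because $C'$ is independent of $\mathcal{N}''$, and because $M(2-1/\sigma) \ge 0$ is a constant, minimizing $\hat{\theta}^{relax}$ subject to $\sum_i \mathcal{N}''_i = b''$ with $\mathcal{N}''_i \in \{0,1\}$ is equivalent to maximizing $\sum_i \mathcal{N}''_i A_i$. This linear $0/1$-knapsack-with-equality-cardinality problem is solved exactly by setting $\mathcal{N}''_i = 1$ on the $b''$ indices with largest $A_i$, which is the definition of the previous attention-weights-only rule applied to the residual budget after stage 1. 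Hence the previous method coincides with the minimizer of $\hat{\theta}^{relax}$, establishing the claim.

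\medskip

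\noindent The argument is short and the only subtle step is ensuring the sign of the coefficient $(2 - 1/\sigma)$, which is handled cleanly by Assumption~\ref{asp:power_law}. I do not anticipate a significant technical obstacle; the main conceptual content is recognizing that replacing $\lVert \boldsymbol{\mathcal{V}}_{i,:}\rVert_1$ by its uniform lower bound $M$ is exactly the relaxation that erases the dependence on value-state norms, leaving only the attention-weight term that previous methods optimize.
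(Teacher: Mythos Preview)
Your proposal is correct and follows essentially the same route as the paper: bound each $\lVert \boldsymbol{\mathcal{V}}_{i,:}\rVert_1$ below by $M$ to obtain $\hat{\theta}\le \hat{\theta}^{relax}$, then observe that minimizing $\hat{\theta}^{relax}$ under the cardinality constraint reduces to maximizing $\sum_i \mathcal{N}''_i A_i$, i.e., the Top-$b''$ selection on attention weights. One small wording slip: the factor $-(2-1/\sigma)$ is \emph{negative}, not positive, which is precisely why the inequality flips; the logic you intend is right.
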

\begin{proof}
	We relax the upper bound $\hat{\theta}$ by utilizing $M = MIN(\lVert \boldsymbol{\mathcal{V}}_{i,:} \rVert_1 )$:
	\begin{equation}
		\hat{\theta} =  C' - \left(2- \frac{1}{\sigma}\right)\sum\nolimits_{i=1}^{n} \mathcal{N}''_i A_i \lVert \boldsymbol{\mathcal{V}}_{i,:}  \rVert_1 \leq C' - M \left(2- \frac{1}{\sigma}\right)\sum\nolimits_{i=1}^{n} \mathcal{N}''_i A_i  = \hat{\theta}^{relax}
	\end{equation}

	In the solely attention weights-based selection strategy, the $\mathcal{N}''$ selection is performed using \(Top-K(A_i, b'')\) to maximize \(\sum\nolimits_{i=1}^{n} \mathcal{N}''_i A_i\). This is therefore equivalent to minimizing the relaxed upper bound, \(\hat{\theta}^{relax}\).
	
\end{proof}
Theorem~\ref{thm:attn_weights_select} demonstrates that the solely attention weights-based selection strategy is equivalent to minimizing the relaxed upper bound \(\hat{\theta}^{relax}\).
In contrast, our algorithm optimizes a tighter upper bound, $\hat{\theta}$. While this does not guarantee that our approach will yield a strictly better solution, intuitively, an algorithm designed to optimize a tighter bound often achieves better results.
Theorem~\ref{thm:attn_weights_select} also provides some insight into why a critical KV Cache subset can replace the entire KV Cache in cache eviction methods. Due to the power-law distribution of attention weights~\citep{h2o}, removing most cache entries with near-zero attention weights has a negligible impact on this upper bound. Consequently, the perturbation to the actual output is also bounded by this upper bound.

\subsection{Empirical Validation of Upper Bound Tightness}
To validate the tightness of our derived upper bound, we compare the theoretical perturbation bounds against the empirical output perturbations within a single attention layer. While the derivation involves relaxations (e.g., triangle inequality) that introduce a gap between the theoretical bound and actual perturbation, the relative tightness compared to prior methods remains a critical indicator of efficacy. We evaluate the mean perturbations across layers on the MultiNews dataset using Llama-3.1-8B under 20\% and 40\% cache budgets. Table~\ref{tab:bound_tightness} compares our optimized upper bound $\hat{\theta}$ with the relaxed bound $\hat{\theta}^{\text{relax}}$ from prior attention-weight-only methods, alongside the actual perturbation $\mathcal{L}$. 
These results demonstrate that although the absolute bound values are conservative, our method consistently achieves a significantly tighter bound than existing approaches. This theoretical advantage provides a principled explanation for our method's consistent performance gains: by optimizing a markedly tighter perturbation upper bound, our approach yields a more effective cache eviction criterion.

\begin{table}[t]
	\centering
	\small
	\caption{Comparison of theoretical bounds and empirical perturbations on MultiNews (Llama-3.1-8B).}
	\label{tab:bound_tightness}
	\resizebox{0.59\linewidth}{!}{
		\begin{tabular}{lcccc}
			\toprule
			\textbf{Budget} & \textbf{Actual $\mathcal{L}$} & \textbf{Our Bound $\hat{\theta}$} & \textbf{Prev. Bound $\hat{\theta}^{\text{relax}}$} & \textbf{Ratio} \\
			\midrule
			20\% & 0.70 & 3.21 & 4.65 & 69.0\% \\
			40\% & 0.28 & 1.43 & 2.64 & 54.2\% \\
			\bottomrule
		\end{tabular}
	}
\end{table}

\begin{figure*}[t!]
	\begin{minipage}{\linewidth}
		\centering
		\includegraphics[width=\textwidth]{./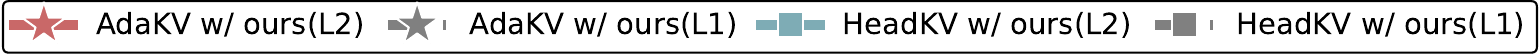}
	\end{minipage}
	\centering
	\begin{subfigure}[b]{0.19\linewidth}
		\centering
		\includegraphics[width=\linewidth]{./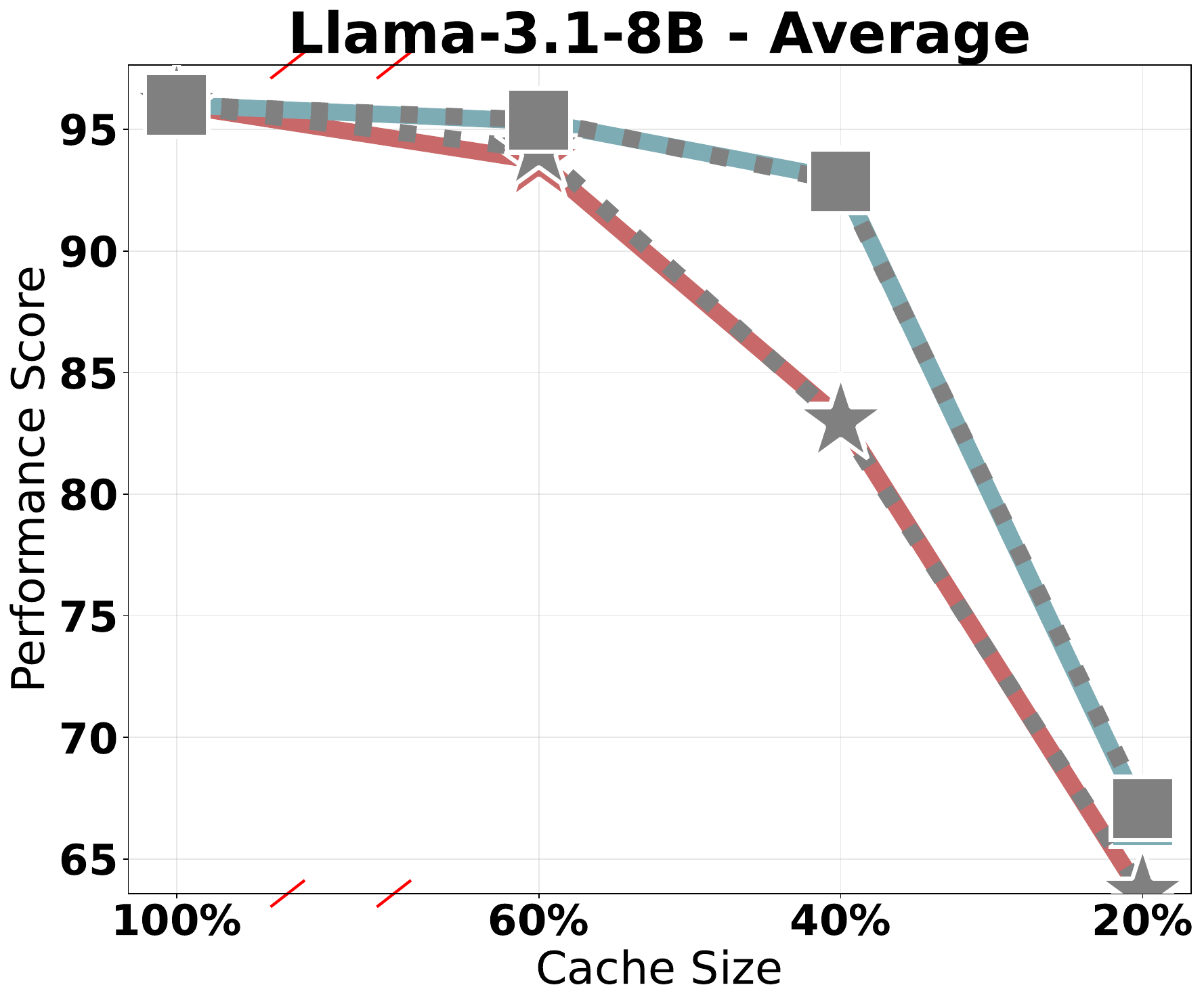}
	\end{subfigure}
	\begin{subfigure}[b]{0.19\linewidth}
		\centering
		\includegraphics[width=\textwidth]{./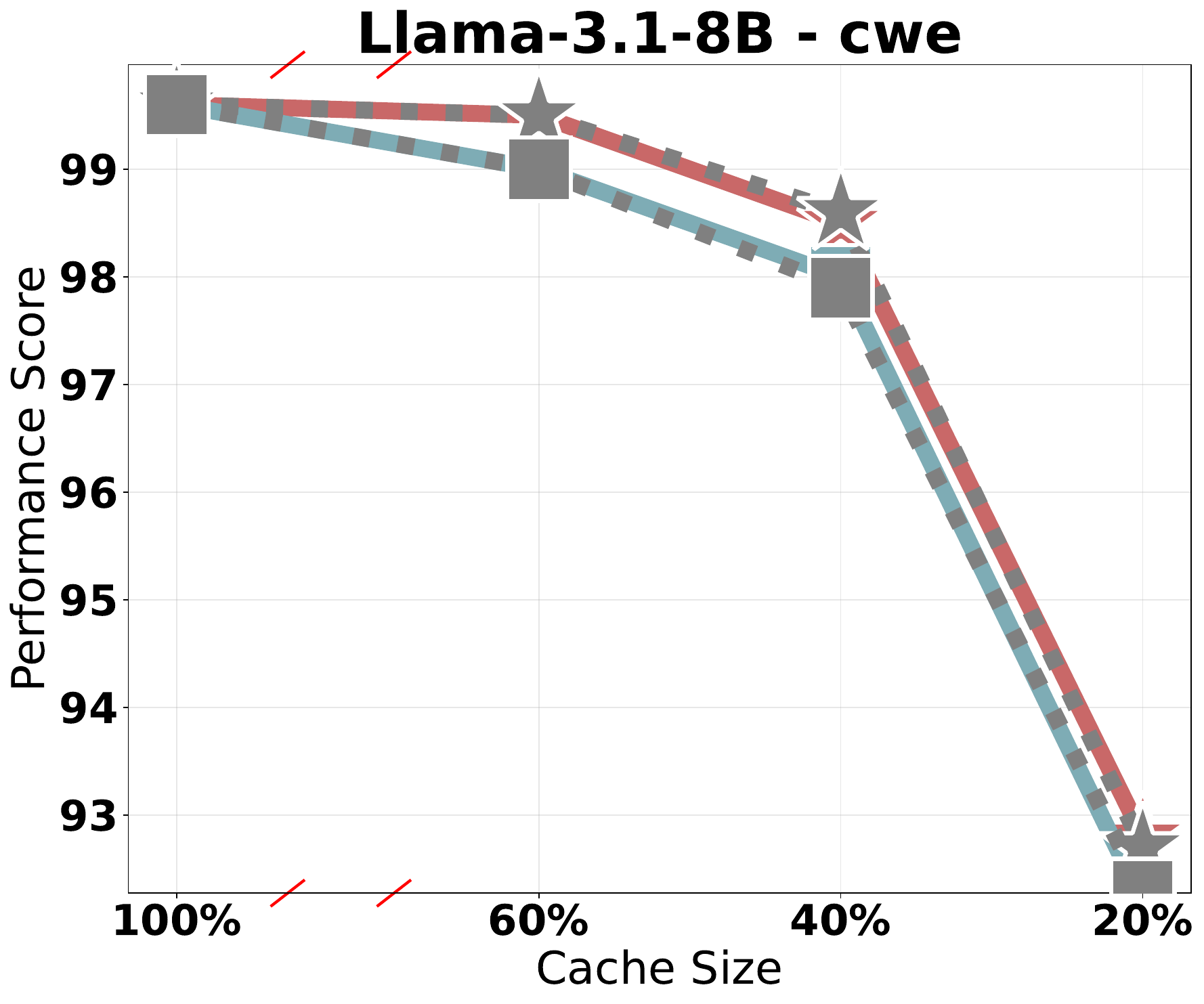}
	\end{subfigure}
	\begin{subfigure}[b]{0.19\linewidth}
		\centering
		\includegraphics[width=\textwidth]{./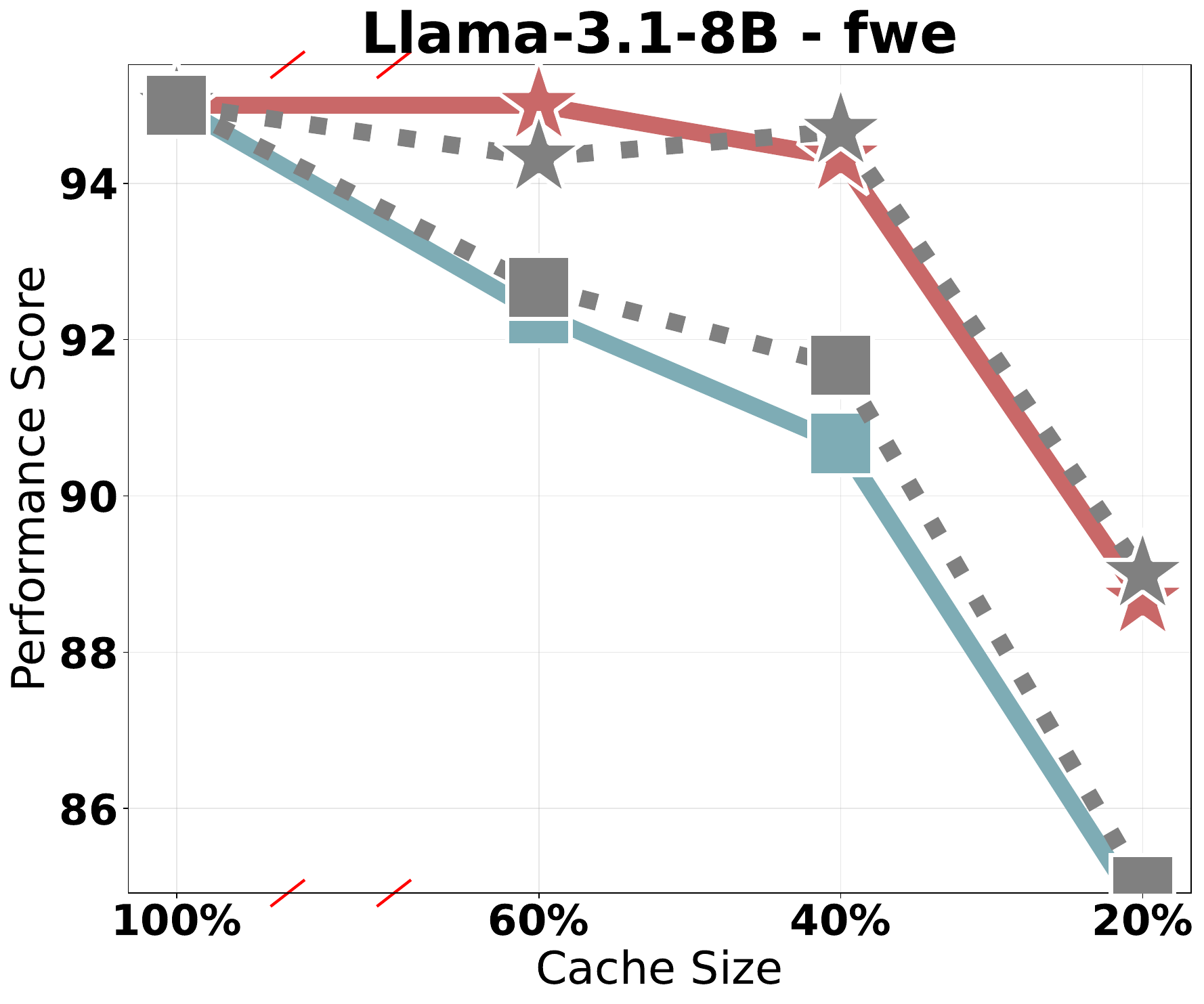}
	\end{subfigure}
	\begin{subfigure}[b]{0.19\linewidth}
		\centering
		\includegraphics[width=\textwidth]{./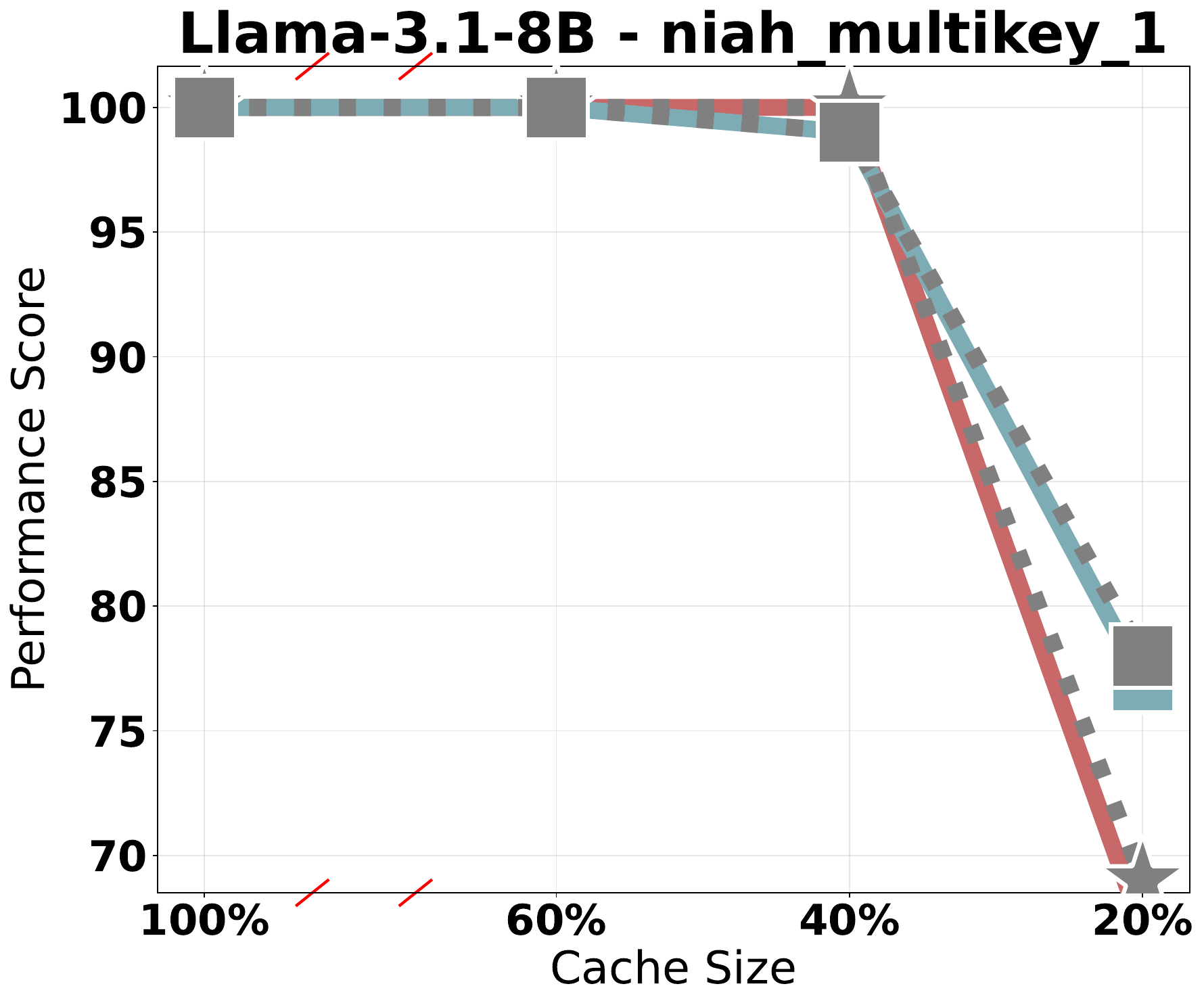}
	\end{subfigure}	
	\begin{subfigure}[b]{0.19\linewidth}
		\centering
		\includegraphics[width=\textwidth]{./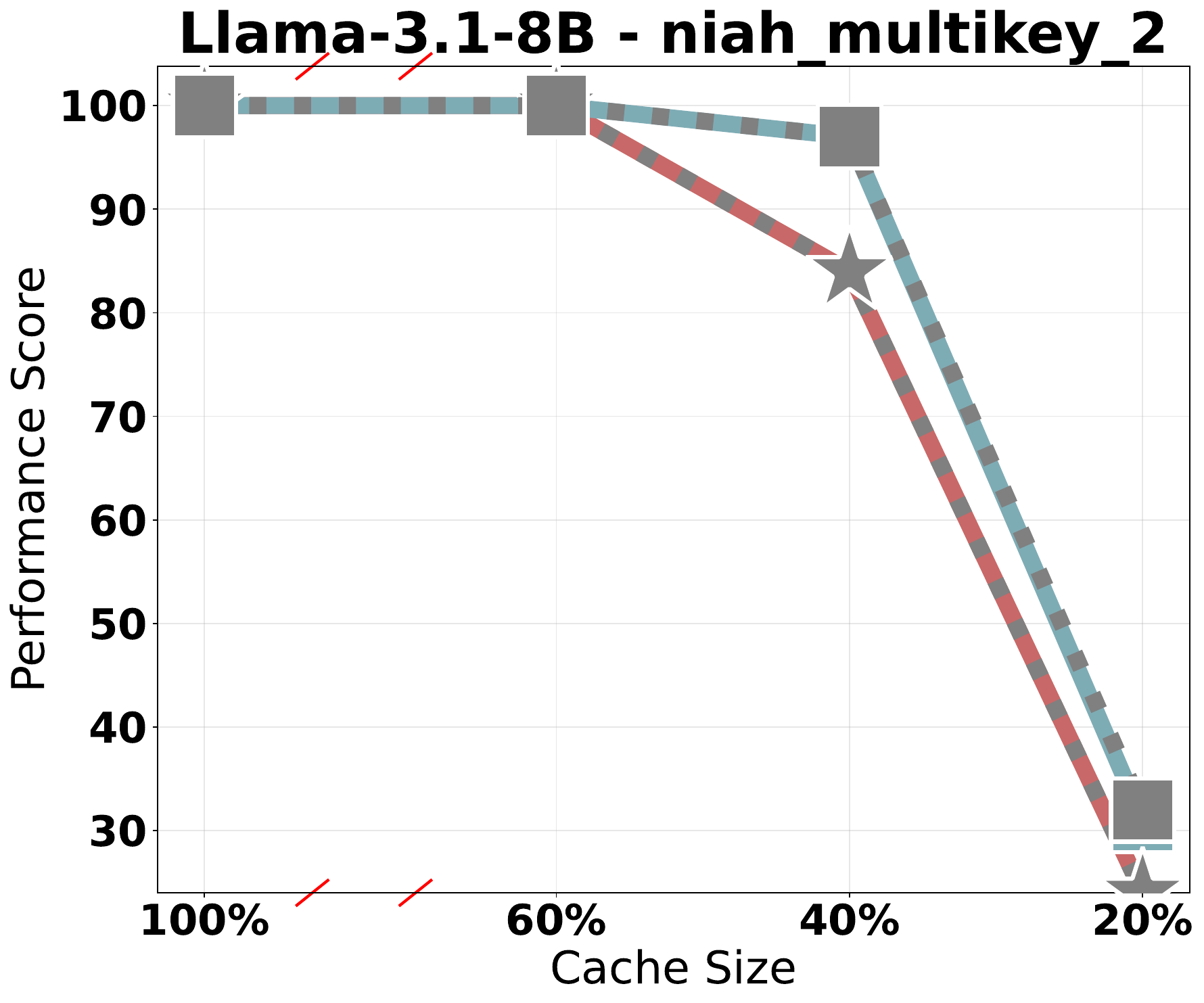}
	\end{subfigure}
	\begin{subfigure}[b]{0.19\linewidth}
		\centering
		\includegraphics[width=\textwidth]{./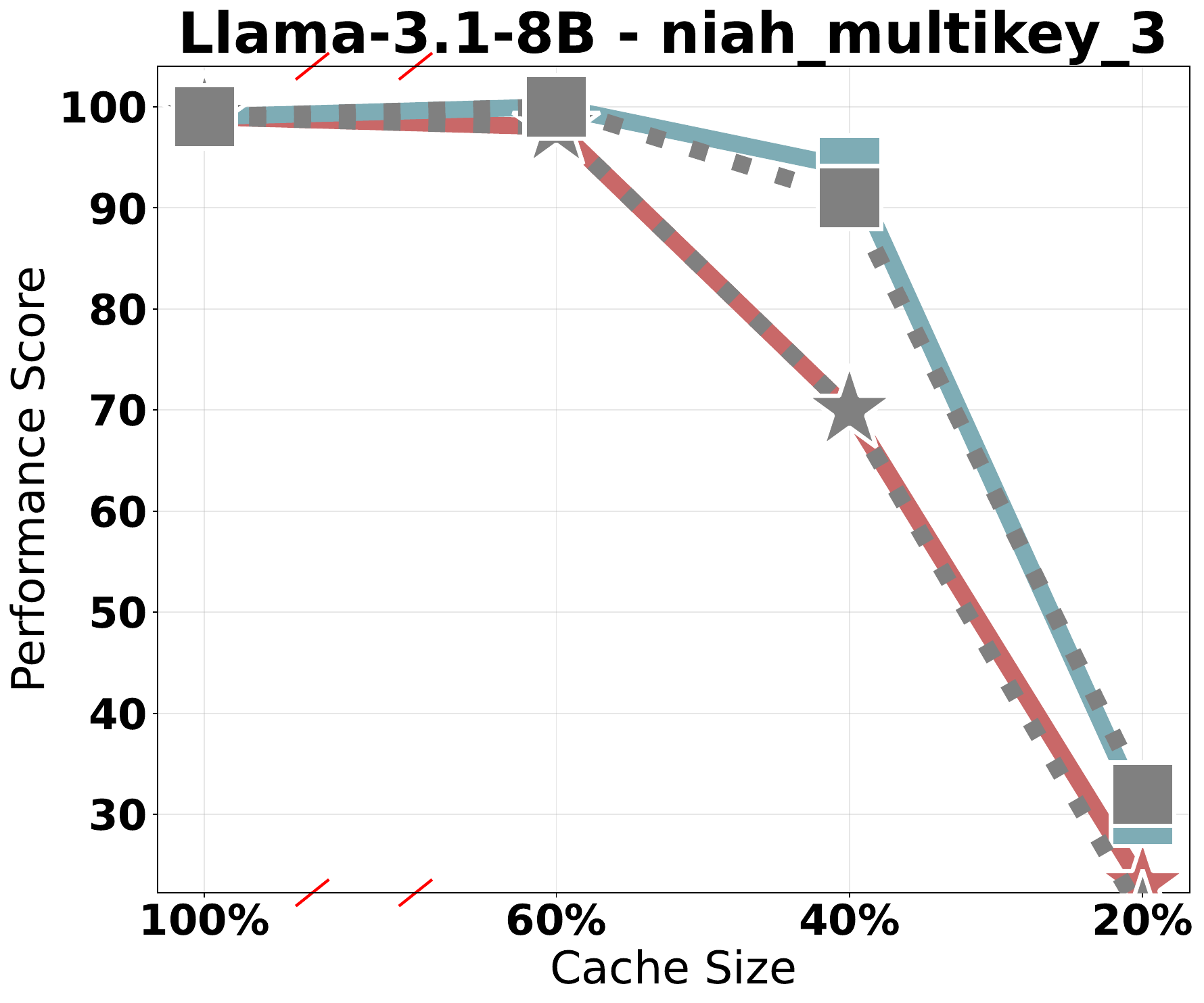}
	\end{subfigure}
	\begin{subfigure}[b]{0.19\linewidth}
		\centering
		\includegraphics[width=\textwidth]{./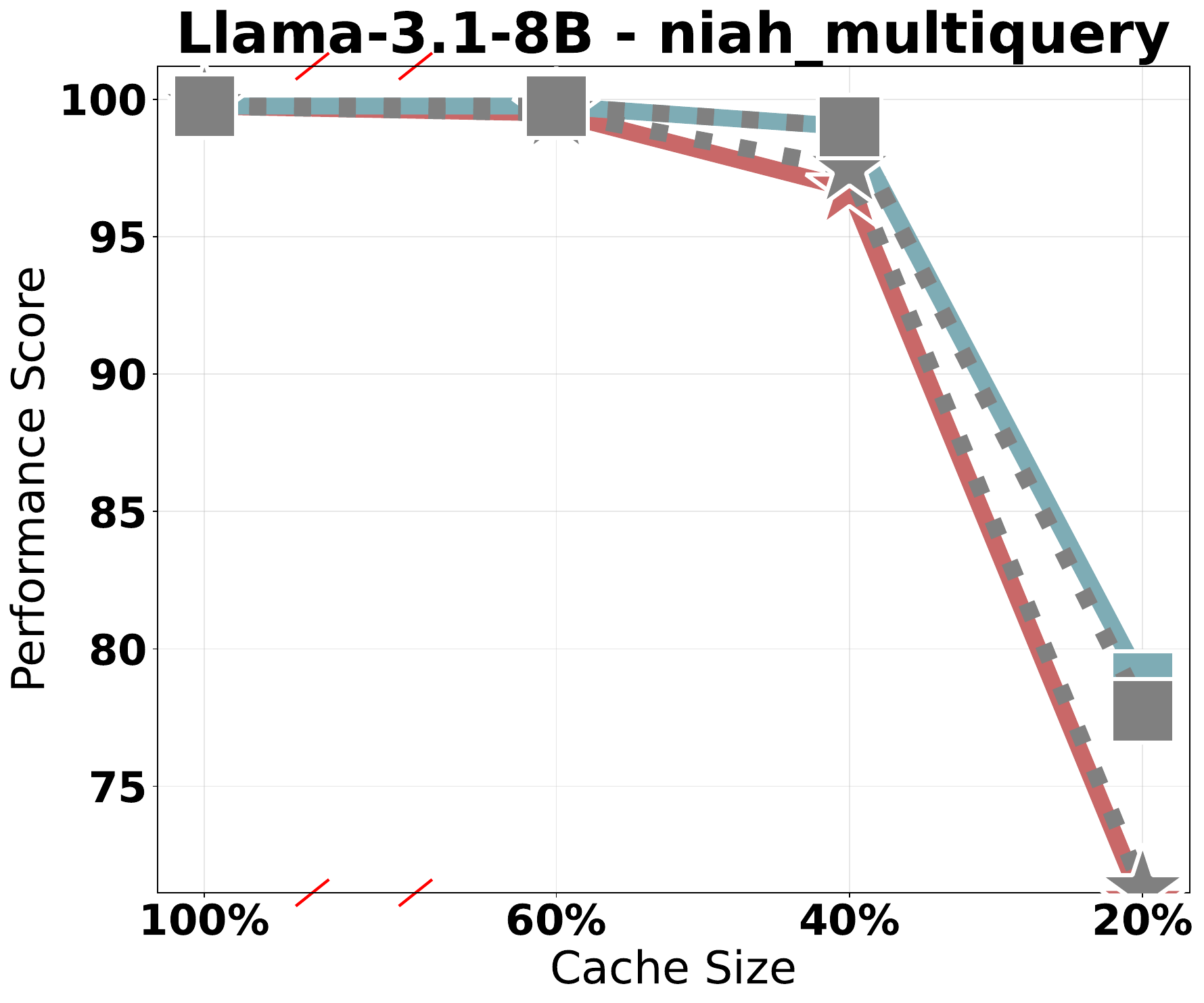}
	\end{subfigure}
	\begin{subfigure}[b]{0.19\linewidth}
		\centering
		\includegraphics[width=\textwidth]{./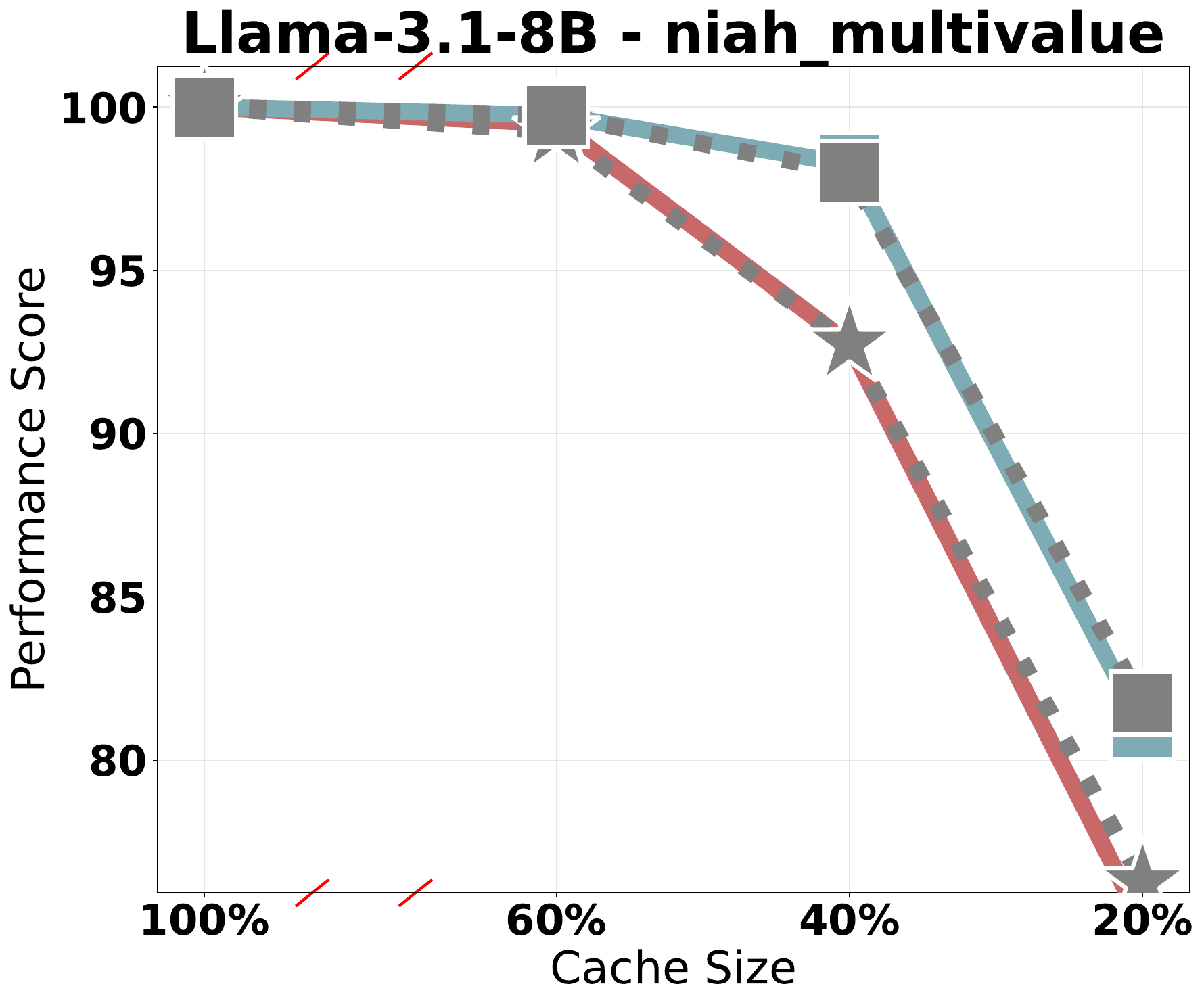}
	\end{subfigure}
	\begin{subfigure}[b]{0.19\linewidth}
		\centering
		\includegraphics[width=\textwidth]{./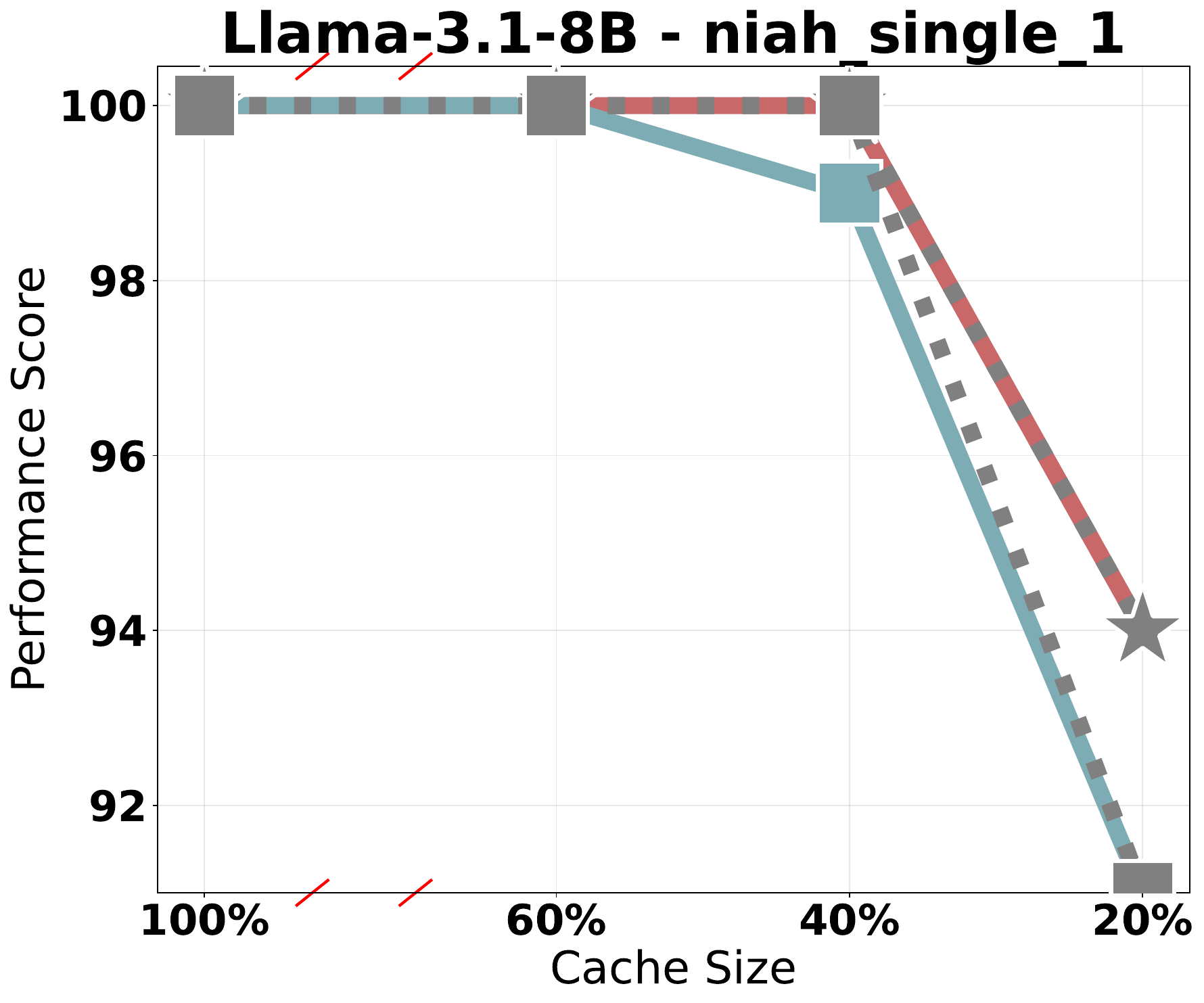}
	\end{subfigure}
	\begin{subfigure}[b]{0.19\linewidth}
		\centering
		\includegraphics[width=\textwidth]{./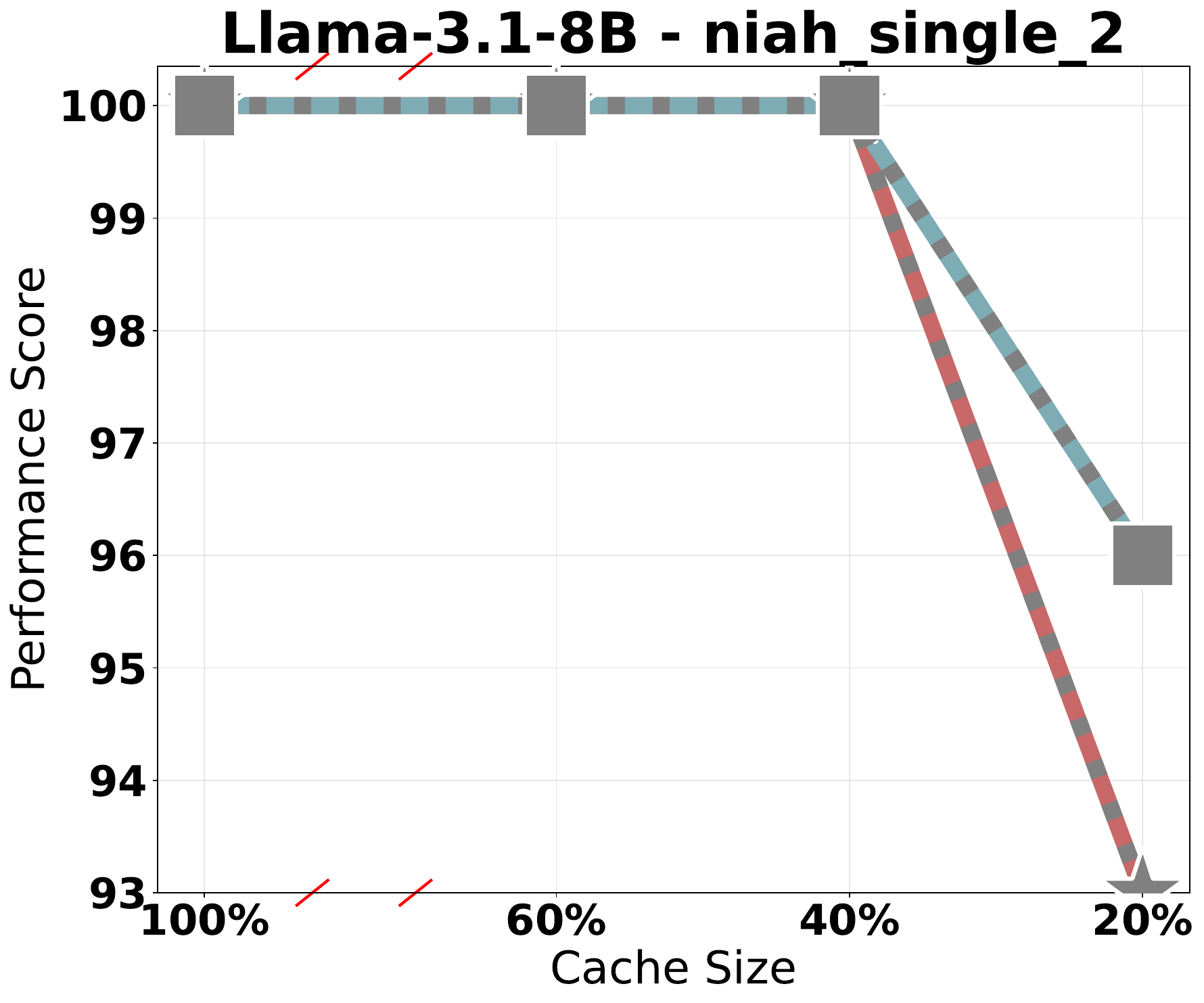}
	\end{subfigure}
	\begin{subfigure}[b]{0.19\linewidth}
		\centering
		\includegraphics[width=\textwidth]{./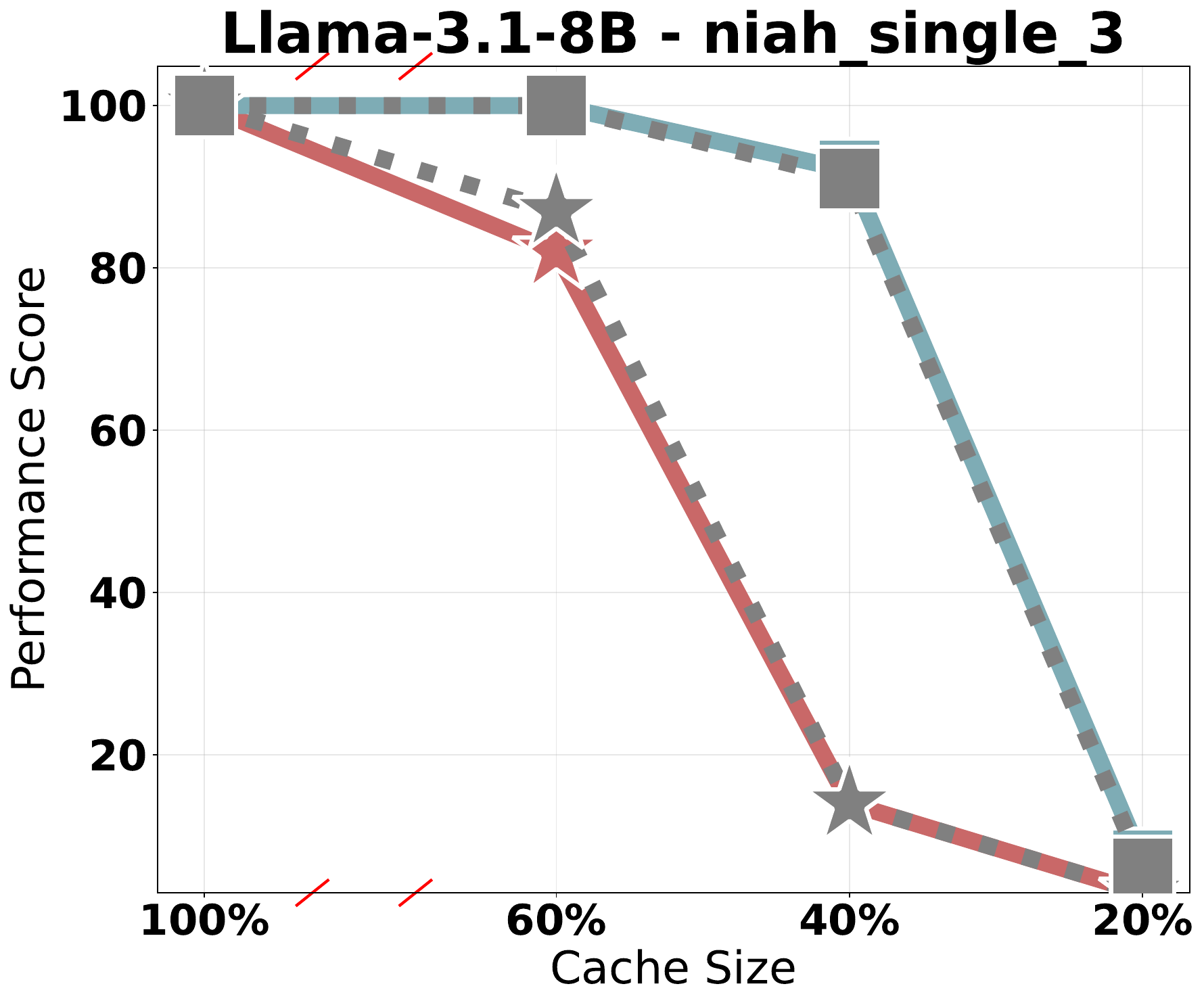}
	\end{subfigure}
	\begin{subfigure}[b]{0.19\linewidth}
		\centering
		\includegraphics[width=\textwidth]{./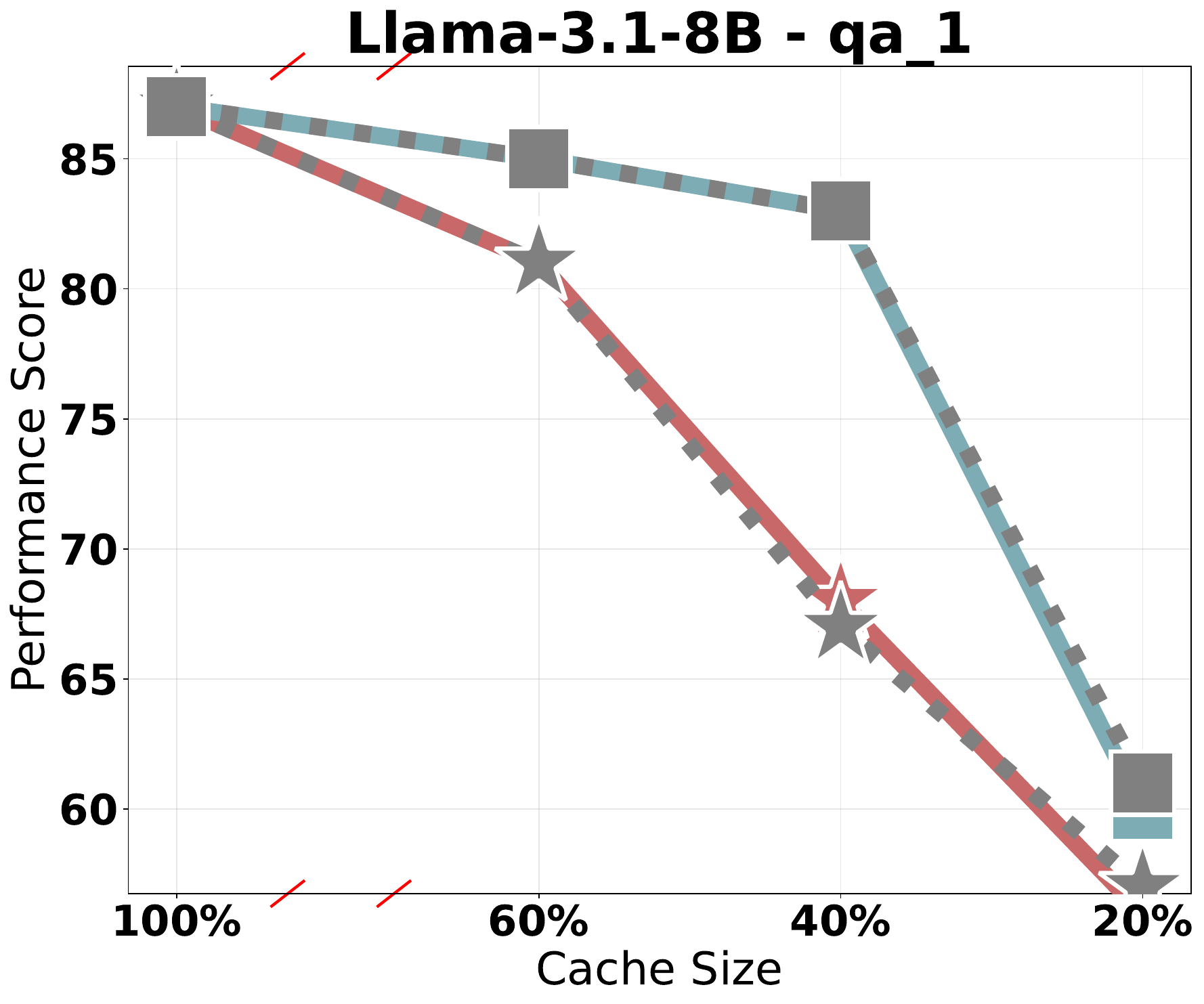}
	\end{subfigure}
	\begin{subfigure}[b]{0.19\linewidth}
		\centering
		\includegraphics[width=\textwidth]{./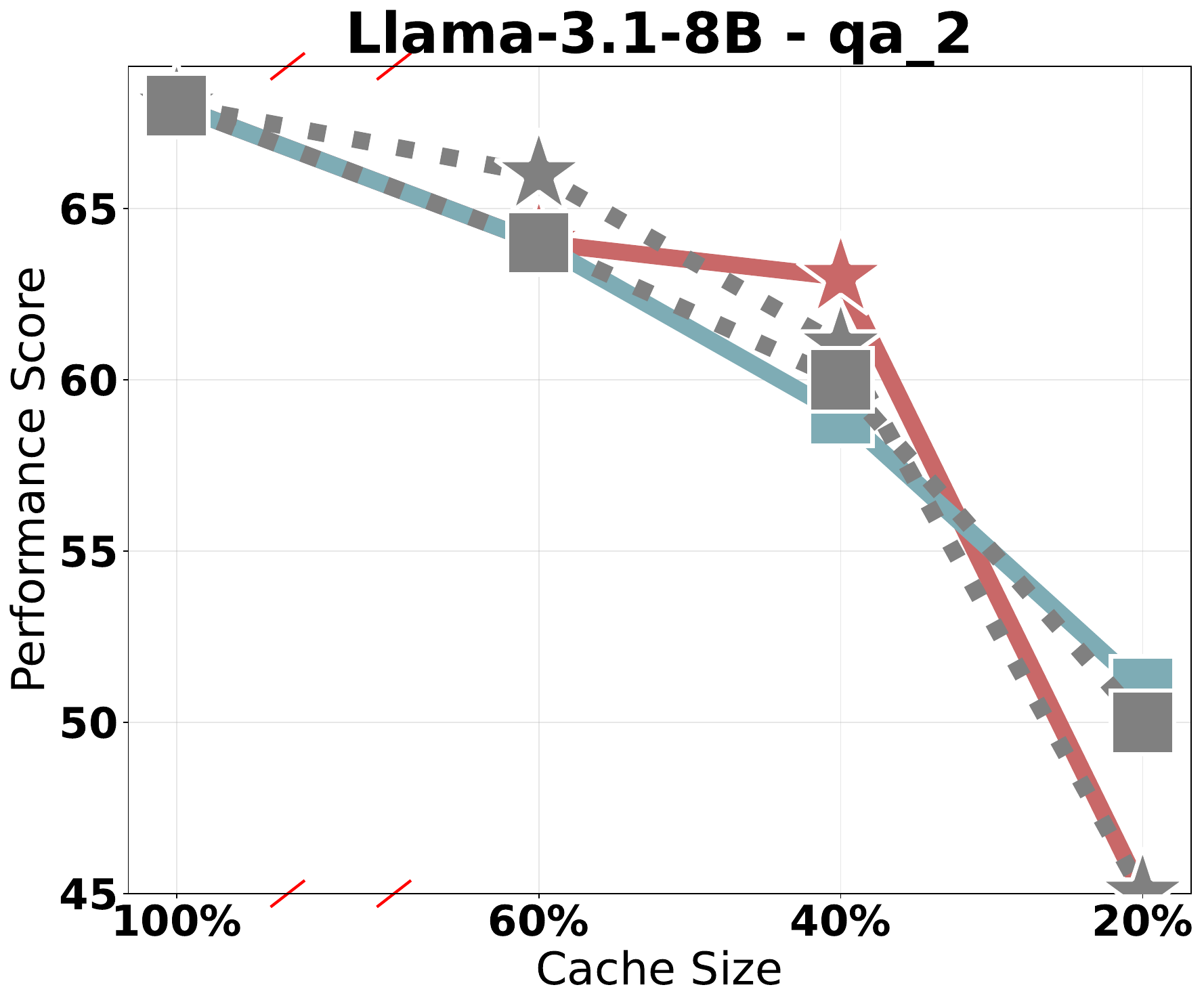}
	\end{subfigure}
	\begin{subfigure}[b]{0.19\linewidth}
		\centering
		\includegraphics[width=\textwidth]{./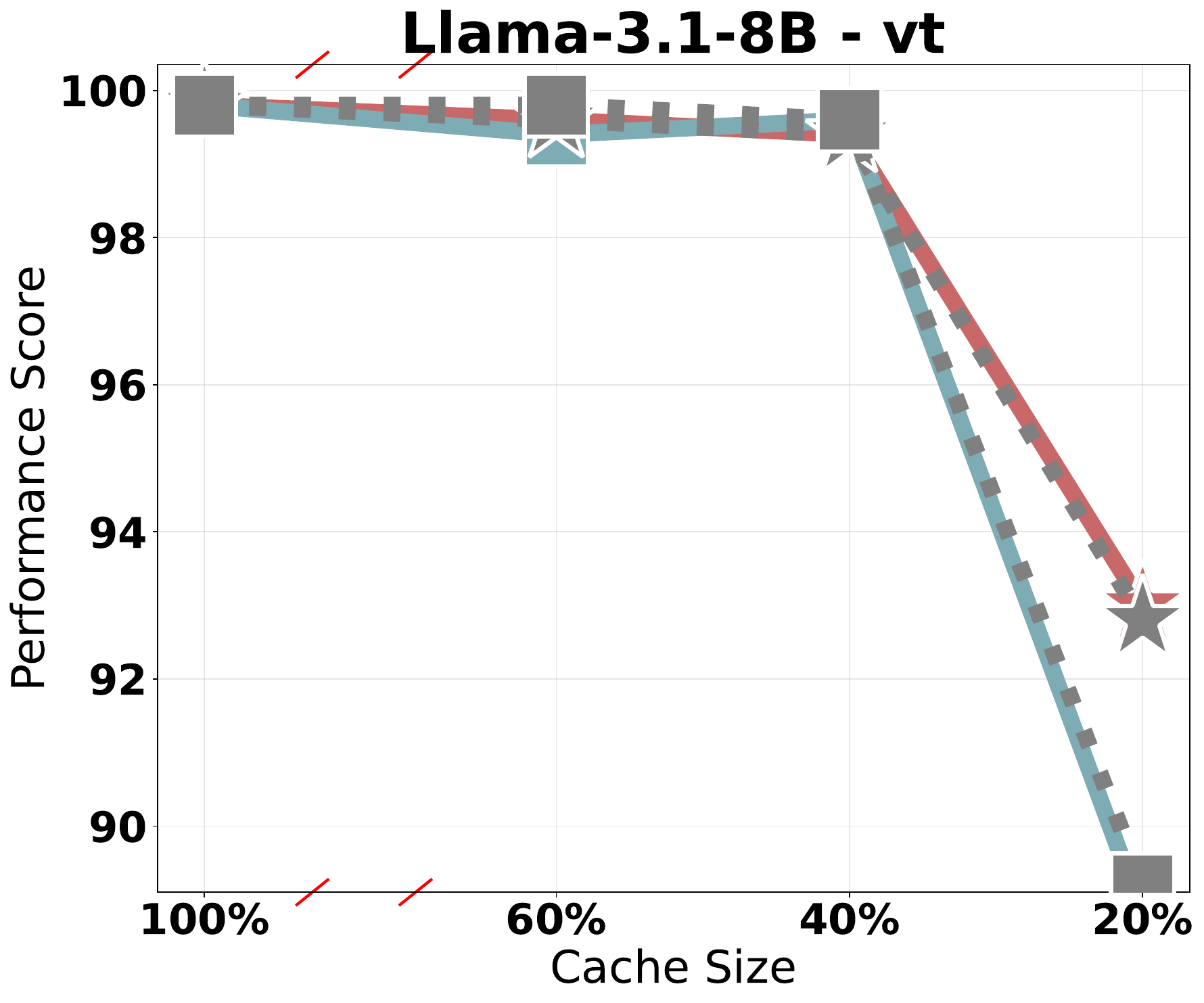}
	\end{subfigure}
	\caption{Choice of Distance Metric: $L_1$ distance and $L_2$ distance.}
	\label{fig:choice_distance_metric_snap}
	
\end{figure*}

\section{Additional Details}
\label{apdx:additional_details}
\subsection{Additional Related Works}

Some adaptive methods in KV cache eviction or sparse attention, such as~\citep{ge2024modeltellsdiscardadaptive, jiang2024minference10acceleratingprefilling}, employ varying critical cache selection strategies tailored to the characteristics of different attention heads. For example, some heads use attention weights based selection, while others utilize fixed patterns, such as recent window-based or special token-based approaches. Our method can also be applied to enhance performance in the head which according to attention weights-based selection strategies, providing a boost to adaptive methods. Several recent works have specifically tailored KV cache compression for multimodal scenarios by exploiting the inherent redundancy in visual information~\citep{zeng2026hybridkvhybridkvcache,wang2025sparsemm}. Although we do not empirically evaluate our approach in these multimodal settings, our analysis based on output perturbation should naturally extend to such contexts. We leave this exploration for future work.

A range of techniques beyond cache eviction have also been explored to reduce the KV cache footprint. Think~\citep{xu2024think} compresses the cache by decreasing the number of channels in key states. Methods like MiniCache exploit similarities between layers to achieve compact representations~\citep{liu2024minicache, yang2024kvsharer}. KV cache quantization~\citep{liu2024kivi, hooper2024kvquant} also contributes by lowering the precision of individual entries. All of these methods are orthogonal to cache eviction and offer potential for further enhancement.

Dynamic Cache Selection methods~\citep{jiang2024minference10acceleratingprefilling,tang2024questqueryawaresparsityefficient,lv2024critiprefillsegmentwisecriticalitybasedapproach,zhang2025spargeattn}, such as quest, are conceptually related to the KV cache eviction methods discussed in this paper. While KV cache eviction retains only a small subset of essential KV cache entries, sparse attention methods maintain all entries during inference. However, during computation, only the most critical entries are selectively utilized in the sparse attention mechanism. Consequently, sparse attention methods do not reduce the memory footprint of the KV cache but enhance inference speed and often offer better output quality than cache eviction methods~\citep{tang2024questqueryawaresparsityefficient}. Existing sparse attention methods typically rely on approximate estimations of attention weights to identify critical entries~\citep{tang2024questqueryawaresparsityefficient,lv2024critiprefillsegmentwisecriticalitybasedapproach}. Future works could explore integrating our proposed perturbation-constrained selection algorithm to refine these methods by achieving more accurate critical cache entry identification.

\subsection{Comparison with OBCache}
\label{apdx:obcache}

OBCache~\citep{gu2026obcache} also analyzes KV cache eviction from an output perturbation perspective. However, our theoretical framework and specific approaches differ fundamentally. OBCache utilizes a second-order Taylor expansion to model perturbation, relying solely on information from the Key and Value caches. In contrast, our method additionally incorporates the pretrained weight matrix $W^O$ alongside Value cache information to constrain a perturbation upper bound, yielding a more informative selection metric.

We conducted detailed comparisons with OBCache's strongest variant (SnapKV with OBCache Joint) on both LongBench and Ruler benchmarks under 40\% and 20\% cache budgets. As shown in Tables~\ref{tab:obcache_llama_40}--\ref{tab:obcache_mistral_20}, our method consistently achieves higher scores when combined with the same base method (SnapKV). Furthermore, our approach demonstrates seamless integration with more advanced base methods like AdaKV and HeadKV, achieving substantially higher scores across all settings. 

\begin{table*}[t!]
	\centering
	\small
	\caption{Comparison with OBCache on Llama-3.1-8B (40\% cache budget).}
	\label{tab:obcache_llama_40}
	\begin{tabular}{lcc}
		\toprule
		Method & LongBench Avg. & Ruler Avg. \\
		\midrule
		SnapKV & 45.52 & 67.93 \\
		SnapKV w/ OBCache (Joint) & 45.92 & 71.53 \\
		\textbf{SnapKV w/ ours} & \textbf{47.29} & \textbf{76.89} \\
		\textbf{AdaKV w/ ours} & \textbf{47.79} & \textbf{86.28} \\
		\textbf{HeadKV w/ ours} & \textbf{48.00} & \textbf{89.29} \\
		\bottomrule
	\end{tabular}
\end{table*}

\begin{table*}[t!]
	\centering
	\small
	\caption{Comparison with OBCache on Llama-3.1-8B (20\% cache budget).}
	\label{tab:obcache_llama_20}
	\begin{tabular}{lcc}
		\toprule
		Method & LongBench Avg. & Ruler Avg. \\
		\midrule
		SnapKV & 41.29 & 54.90 \\
		SnapKV w/ OBCache (Joint) & 41.52 & 58.47 \\
		\textbf{SnapKV w/ ours} & \textbf{42.99} & \textbf{58.70} \\
		\textbf{AdaKV w/ ours} & \textbf{43.77} & \textbf{68.94} \\
		\textbf{HeadKV w/ ours} & \textbf{43.99} & \textbf{70.12} \\
		\bottomrule
	\end{tabular}
\end{table*}

\begin{table*}[t!]
	\centering
	\small
	\caption{Comparison with OBCache on Mistral-7B (40\% cache budget).}
	\label{tab:obcache_mistral_40}
	\begin{tabular}{lcc}
		\toprule
		Method & LongBench Avg. & Ruler Avg. \\
		\midrule
		SnapKV & 44.03 & 32.15 \\
		SnapKV w/ OBCache (Joint) & 44.23 & 34.92 \\
		\textbf{SnapKV w/ ours} & \textbf{45.35} & \textbf{41.56} \\
		\textbf{AdaKV w/ ours} & \textbf{46.23} & \textbf{69.17} \\
		\textbf{HeadKV w/ ours} & \textbf{46.10} & \textbf{57.59} \\
		\bottomrule
	\end{tabular}
\end{table*}

\begin{table*}[t!]
	\centering
	\small
	\caption{Comparison with OBCache on Mistral-7B (20\% cache budget).}
	\label{tab:obcache_mistral_20}
	\begin{tabular}{lcc}
		\toprule
		Method & LongBench Avg. & Ruler Avg. \\
		\midrule
		SnapKV & 40.11 & 25.60 \\
		SnapKV w/ OBCache (Joint) & 40.49 & 25.99 \\
		\textbf{SnapKV w/ ours} & \textbf{41.77} & \textbf{30.24} \\
		\textbf{AdaKV w/ ours} & \textbf{42.85} & \textbf{44.13} \\
		\textbf{HeadKV w/ ours} & \textbf{43.46} & \textbf{34.86} \\
		\bottomrule
	\end{tabular}
\end{table*}

\subsection{Discussion on Attention-Free Eviction Methods}
\label{apdx:attention_free}

Our perturbation-constrained selection algorithm is designed for mainstream cache eviction methods that rely on attention weights to identify critical KV cache entries, such as SnapKV~\citep{SnapKV}, H2O~\citep{h2o}, and their variants. These methods are ideal for our analysis since attention weights provide a natural query-dependent importance signal.

However, in extreme memory-constrained settings that require frequent and rapid compressions, some attention-free methods such as StreamingLLM~\citep{streamingllm} prioritize inference speed over generation quality. These methods employ fixed eviction patterns (e.g., retaining only recent tokens and attention sink tokens) without computing attention-based importance scores, and are therefore not directly compatible with our perturbation analysis framework. A promising direction for future work is to derive perturbation bounds directly from the KV caches and pretrained parameters, without relying on attention scores. This would extend our method to attention-free settings and potentially unify perturbation-based analysis across a broader range of cache compression techniques.

\subsection{Connection to Wanda}
\label{apdx:wanda}

Wanda~\citep{wanda} shares a perturbation-based perspective with our work, but the two approaches differ fundamentally in three key aspects.

\textbf{Different scope.} Wanda belongs to the category of model pruning, targeting the reduction of model weights to decrease memory footprint during both training and inference. In contrast, our work focuses on KV cache management, aiming to reduce the cache memory specifically during autoregressive generation in LLM inference.

\textbf{Different execution modes.} Wanda performs offline pruning that requires calibration data to compute activation magnitudes. Our method operates online during inference without any calibration data, and functions as a plug-and-play module that can be seamlessly integrated into existing cache eviction pipelines.

\textbf{Readily available vs. theoretically derived metrics.} Wanda directly reuses standard forward-pass activations $\boldsymbol{X}$, which are naturally available during computation. In contrast, our metric $\|\boldsymbol{V}W^O\|_1$ is absent from standard computation graphs and was uniquely identified through our theoretical derivation. To compute this metric efficiently without materializing the prohibitive intermediate tensor $\boldsymbol{V}W^O$ (approximately 7.8\,GB per layer for a 7B model with 128K context), we designed a custom fused Triton kernel with block-wise online L1 accumulation. This transforms a theoretically-grounded discovery into a practical, efficient metric.

{
\subsection{Limitations}
\label{apdx:limit}
Our work demonstrates that $L_1$ distance-based perturbation-constrained selection algorithms can effectively enhance the retrieval scores of the original SnapKV and AdaKV. We also evaluated the $L_2$ distance metric and found its performance to be similar to the $L_1$ distance. Future work may explore more sophisticated distance metrics within this framework. In addition, our current approach assumes that the $\alpha = 50\%$ most important KV cache entries are retained in the first stage to ensure the assumption hold (Appendix \ref{apdx:check_asp}). Nonetheless, exploring more fine-grained strategies can be explored for further improvement.

\subsection{Details of 16 Datasets in LongBench}
\label{apdx:details_datasets}

 As a widely used long-context benchmark~\citep{ada,SnapKV,pyramidkv}, LongBench consists of 16 datasets across six task domains: single-document question answering (QA) \citep{kovcisky2018narrativeqa,dasigi2021dataset}, multi-document QA \citep{multi_hop1,ho-etal-2020-constructing,trivedi2022musique}, summarization \citep{huang2021efficient,zhong2021qmsum,fabbri2019multi}, few-shot learning \citep{joshi2017triviaqalargescaledistantly,gliwa2019samsum,li2002learning}, synthetic tasks \citep{bai2023longbench}, and code generation \citep{guo2023longcoderlongrangepretrainedlanguage,liu2023repobenchbenchmarkingrepositorylevelcode}. The average token length across all 16 datasets is 6,711. Table \ref{tab:detail_datasets} provides detailed information on the 16 datasets in LongBench.

\begin{table*}[t!]
	\centering
	\small
	\caption{Domain Scores on LongBench under Easy Compression Setting.}
	\label{tab:llama_lb_regular}
	\resizebox{\textwidth}{!}{%
		\begin{tabular}{@{}l>{\hspace{-0.8em}}l>{\hspace{-0.8em}}c>{\hspace{-0.2em}}c>{\hspace{-0.8em}}c>{\hspace{-0.2em}}c>{\hspace{-0.8em}}c>{\hspace{-0.2em}}c>{\hspace{-0.8em}}c>{\hspace{-0.2em}}c>{\hspace{-0.8em}}c>{\hspace{-0.2em}}c}
			\toprule
			& \multirow{2}{*}{Domain} &  \multirow{2}{*}{\makecell{Full\\Cache}}  & \multicolumn{2}{c}{\small \makecell{AdaKV $b =$ 5\%}} & \multicolumn{2}{c}{\small \makecell{AdaKV $b =$ 10\%}} & \multicolumn{2}{c}{\small \makecell{AdaKV $b =$ 20\%}} & \multicolumn{2}{c}{\small \makecell{AdaKV $b =$ 40\%}}  \\
			\cmidrule(lr){4-5}\cmidrule(lr){6-7}\cmidrule(lr){8-9}\cmidrule(lr){10-11}
			& &  & \small{base}   & \small{w/ ours}& \small{base}   & \small{w/ ours}& \small{base}   & \small{w/ ours}& \small{base}   & \small{w/ ours} \\

			\toprule
			\multirow{6}{*}{\small\rotatebox[origin=c]{90}{\makecell{ Llama-3.1-8B \\ Easy Setting }}}
  & Single\-Doc. QA & 43.10             & 38.57             & \textbf{38.79}    & \textbf{41.36}     & 41.07              & 42.73              & \textbf{43.05}     & 43.31              & \textbf{43.59}     \\
  & Multi\-Doc. QA  & 46.49             & 44.61             & \textbf{45.28}    & 46.03              & \textbf{46.08}     & \textbf{46.64}     & 46.42              & \textbf{47.02}     & 46.97              \\
  & Summarization   & 28.97             & 22.85             & \textbf{22.97}    & 24.17              & \textbf{24.63}     & 25.49              & \textbf{26.05}     & 27.24              & \textbf{27.79}     \\
  & Few\-shot       & 69.45             & 67.06             & \textbf{67.49}    & 68.65              & \textbf{68.72}     & \textbf{69.19}     & 69.03              & 69.36              & \textbf{69.40}     \\
  & Synthetic       & 53.73             & \textbf{53.49}    & 53.36             & 53.25              & \textbf{53.56}     & 53.57              & \textbf{54.45}     & 53.96              & \textbf{54.59}     \\
  & Code            & 57.86             & 56.72             & \textbf{57.26}    & 57.63              & \textbf{58.24}     & 58.43              & \textbf{58.57}     & 58.27              & \textbf{58.46}     \\
			\hline
  & Ave. Score            & 49.20             & 46.23             & \textbf{46.55}    & 47.65              & \textbf{47.82}     & 48.51              & \textbf{48.73}     & 49.08              & \textbf{49.33} \\

	& Avg.  Loss {\scriptsize $\downarrow$} &  0.0 {\scriptsize $\%$} &  6.0 {\scriptsize $\%$} &  \textbf{5.4 {\scriptsize $\%$}} &  3.2 {\scriptsize $\%$} &  \textbf{2.8 {\scriptsize $\%$}} &  1.4 {\scriptsize $\%$} &  \textbf{1.0 {\scriptsize $\%$}} &  0.2 {\scriptsize $\%$} &  \textbf{-0.3 {\scriptsize $\%$}}  \\
		\hline
		\end{tabular}%
	}
\end{table*}

\begin{table*}[thb!]
	\centering
	\small
		\caption{Details of 16 datasets in LongBench.}
	\label{tab:detail_datasets}
	\begin{tabular}{@{}lllllr@{}}
		\toprule
		Task                & Task Type     & Eval metric & Avg len & Language & Sample Num        \\ \midrule
		NarrativeQA         & Single-Doc. QA & F1          & 18,409   & EN       & 200            \\
		Qasper              & Single-Doc. QA & F1          & 3,619    & EN       & 200            \\
		MultiFieldQA-en     & Single-Doc. QA & F1          & 4,559    & EN       & 150            \\
		HotpotQA            & Multi-Doc. QA  & F1          & 9,151    & EN       & 200            \\
		2WikiMultihopQA     & Multi-Doc. QA  & F1          & 4,887    & EN       & 200            \\
		MuSiQue             & Multi-Doc. QA  & F1          & 11,214   & EN       & 200            \\
		GovReport           & Summarization & Rouge-L     & 8,734    & EN       & 200            \\
		QMSum               & Summarization & Rouge-L     & 10,614   & EN       & 200            \\
		MultiNews           & Summarization & Rouge-L     & 2,113    & EN       & 200            \\
		TREC                & Few-shot Learning & Accuracy    & 5,177    & EN       & 200            \\
		TriviaQA            & Few-shot Learning & F1          & 8,209    & EN       & 200            \\
		SAMSum              & Few-shot Learning & Rouge-L     & 6,258    & EN       & 200            \\
		PassageCount        & Synthetic     & Accuracy    & 11,141   & EN       & 200            \\
		PassageRetrieval-en & Synthetic     & Accuracy    & 9,289    & EN       & 200            \\
		LCC                 & Code          & Edit Sim    & 1,235      & Python/C\#/Java & 500\\
		RepoBench-P         & Code          & Edit Sim    & 4,206      & Python/Java & 500    \\ \bottomrule
	\end{tabular}
\end{table*}

{

Below are prompt templates for various tasks. We assess performance under two scenarios: regular compression and context-only compression. We adhere to the input prompt format from KVPress \citep{kvpress}, dividing the input into context and question segments. The question segment is highlighted in green, while other colors represent the context segment. In regular compression, both the context and question segments are input into the model and compressed. For context-only compression, where future questions are unpredictable, only the context segment is input for compression. After compression, the question segment is input for answer generation.
\subsection{Ruler Templates}
\label{apdx:prompt_templates}

In the Needle-in-A-Haystack task, a keyword, referred to as the "needle", is embedded within a lengthy context known as the "haystack". The objective of this task is to extract the "needle" from the "haystack", which is composed of essays by Paul Graham. 

For the Single Needle-in-A-Haystack(S-NIAH) task, the goal is to retrieve a single "needle". Similarly, the Multi-Value Needle-in-A-Haystack(MV-NIAH) task requires the extraction of multiple inserted "needles". To prevent models from refusing to answer our questions, we append the answer prefix to the input, prompting the models to generate answers.

\begin{table*}[h]
	\small
	\centering
		\caption{Single retrieval and multi retrieval templates in Needle-in-A-Haystack tests.}
	\label{tab:ruler_task_template1}
	\resizebox{\linewidth}{!}{
		\begin{tabular}{cp{0.9\linewidth}}
			\toprule

            \begin{tabular}{@{}c@{}}Single retrieval\end{tabular} & 
			\begin{tabular}{@{}p{\linewidth}@{}} 
				\textbf{Task Template:} \\
				Some special magic numbers are hidden within the following text. Make sure to memorize it. I will quiz you about the numbers afterwards.\\
				\textcolor{lightgray}{Paul Graham Essays.} \\
				\textcolor{lightgray}{......} One of the special magic numbers for \textcolor{violet}{\{word\}} is: \textcolor{orange}{\{number\}}. \textcolor{lightgray}{......}\\
                \textcolor{question_color}{What is the special magic number for \{word\} mentioned in the provided text?} \\ \\
                \textcolor{question_color}{The special magic number for \{word\} mentioned in the provided text is}
            \end{tabular}\\
			
            \midrule

            \begin{tabular}{@{}c@{}}Multi retrieval\end{tabular} &
            \begin{tabular}{@{}p{\linewidth}@{}} 
            \textbf{Task Template:} \\
            Some special magic numbers are hidden within the following text. Make sure to memorize it. I will quiz you about the numbers afterwards.\\
            \textcolor{lightgray}{Paul Graham Essays.} \\
            \textcolor{lightgray}{......} One of the special magic numbers for \textcolor{violet}{\{word\}} is: \textcolor{orange}{\{number-1\}}. \textcolor{lightgray}{......}\\
            \textcolor{lightgray}{......} One of the special magic numbers for \textcolor{violet}{\{word\}} is: \textcolor{orange}{\{number-2\}}. \textcolor{lightgray}{......}\\
            \textcolor{lightgray}{......} One of the special magic numbers for \textcolor{violet}{\{word\}} is: \textcolor{orange}{\{number-3\}}. \textcolor{lightgray}{......}\\
            \textcolor{lightgray}{......} One of the special magic numbers for \textcolor{violet}{\{word\}} is: \textcolor{orange}{\{number-4\}}. \textcolor{lightgray}{......}\\
            \textcolor{question_color}{What are all the special magic numbers for \{word\} mentioned in the provided text?} \\ \\
            \textcolor{question_color}{The special magic numbers for \{word\} mentioned in the provided text are}
            \end{tabular}\\

			\bottomrule
	\end{tabular}}

\end{table*}

\subsection{LongBench Templates}

The construction of the LongBench template follows the official formats \citep{longbench} to evaluate performance under regular compression and context-only compression.

\begin{table*}[h]
    \small
    \centering
    \caption{LongBench templates. Single-Doc. QA Tasks.}
    \label{tab:task_template3}
    \resizebox{\linewidth}{!}{
    \begin{tabular}{cp{0.9\linewidth}}
    \toprule
    
    \begin{tabular}{@{}c@{}}NarrativeQA\end{tabular} & 
    \begin{tabular}{@{}p{\linewidth}@{}} 
    \textbf{Task Template:} \\

        You are given a story, which can be either a novel or a movie script, and a question. Answer the question asconcisely as you can, using a single phrase if possible. Do not provide any explanation. \\\\
        Story: \textcolor{orange}{\{context\}} \\\\

        \textcolor{question_color}{Now, answer the question based on the story asconcisely as you can, using a single phrase if possible. Do not provide any explanation.}\\\\

        \textcolor{question_color}{Question: \textcolor{question_color}{\{question\}}}\\

    \end{tabular}\\

    \midrule

    \begin{tabular}{@{}c@{}}Qasper\end{tabular} & 
    \begin{tabular}{@{}p{\linewidth}@{}} 
    \textbf{Task Template:} \\

    You are given a scientific article and a question. Answer the question as concisely as you can, using a single phrase or sentence if possible. If the question cannot be answered based on the information in the article, write "unanswerable". If the question is a yes/no question, answer "yes", "no", or "unanswerable". Do not provide any explanation.\\\\
    Article: \textcolor{orange}{\{context\}}\\\\
    \textcolor{question_color}{Answer the question based on the above article as concisely as you can, using a single phrase or sentence if possible. If the question cannot be answered based on the information in the article, write "unanswerable". If the question is a yes/no question, answer "yes", "no", or "unanswerable". Do not provide any explanation.}\\\\
    \textcolor{question_color}{Question: \textcolor{question_color}{\{question\}}}\\

    \end{tabular}\\

    \midrule

    \begin{tabular}{@{}c@{}}MultifieldQA EN\end{tabular} & 
    \begin{tabular}{@{}p{\linewidth}@{}} 
    \textbf{Task Template:} \\

    Read the following text and answer briefly.\\\\
    \textcolor{orange}{\{context\}}\\\\
    \textcolor{question_color}{Now, answer the following question based on the above text, only give me the answer and do not output any other words.}\\\\
    \textcolor{question_color}{Question: \textcolor{question_color}{\{question\}}} \\
    \end{tabular}\\

    \bottomrule
    \end{tabular}}
\end{table*}

\begin{table*}[h]
    \small
    \centering
    \caption{LongBench templates. Multi-Doc. QA Tasks.}
    \label{tab:task_template3}
    \resizebox{\linewidth}{!}{
    \begin{tabular}{cp{0.9\linewidth}}
    \toprule

    \begin{tabular}{@{}c@{}}HotpotQA\end{tabular} & 
    \begin{tabular}{@{}p{\linewidth}@{}} 
    \textbf{Task Template:} \\

    Answer the question based on the given passages. Only give me the answer and do not output any other words.\\\\
    The following are given passages.\\
    \textcolor{orange}{\{context\}}\\\\
    \textcolor{question_color}{Answer the question based on the given passages. Only give me the answer and do not output any other words.}\\\\
    \textcolor{question_color}{Question: \textcolor{question_color}{\{question\}}} \\

    \end{tabular}\\

    \midrule

    \begin{tabular}{@{}c@{}}2WikimQA\end{tabular} & 
    \begin{tabular}{@{}p{\linewidth}@{}} 
    \textbf{Task Template:} \\

    Answer the question based on the given passages. Only give me the answer and do not output any other words.\\\\
    The following are given passages.\\
    \textcolor{orange}{\{context\}}\\\\
    \textcolor{question_color}{Answer the question based on the given passages. Only give me the answer and do not output any other words.}\\\\
    \textcolor{question_color}{Question: \textcolor{question_color}{\{question\}}}

    \end{tabular}\\

    \midrule

    \begin{tabular}{@{}c@{}}Musique\end{tabular} & 
    \begin{tabular}{@{}p{\linewidth}@{}} 
    \textbf{Task Template:} \\

    Answer the question based on the given passages. Only give me the answer and do not output any other words.\\\\
    The following are given passages.\\
    \textcolor{orange}{\{context\}}\\\\
    \textcolor{question_color}{Answer the question based on the given passages. Only give me the answer and do not output any other words.}\\\\
    \textcolor{question_color}{Question: \textcolor{question_color}{\{question\}}}

    \end{tabular}\\

    \bottomrule
    \end{tabular}}
\end{table*}

\begin{table*}[h]
    \small
    \centering
    \caption{LongBench templates. Summarization Tasks.}
    \label{tab:task_template3}
    \resizebox{\linewidth}{!}{
    \begin{tabular}{cp{0.9\linewidth}}
    \toprule

    \begin{tabular}{@{}c@{}}Gov Report\end{tabular} & 
    \begin{tabular}{@{}p{\linewidth}@{}} 
    \textbf{Task Template:} \\

    You are given a report by a government agency. Write a one-page summary of the report.\\\\
    Report:\\
    \textcolor{orange}{\{context\}}\\\\
    \textcolor{question_color}{Now, write a one-page summary of the report.} \\
    \end{tabular}\\

    \midrule

    \begin{tabular}{@{}c@{}}QMSum\end{tabular} & 
    \begin{tabular}{@{}p{\linewidth}@{}} 
    \textbf{Task Template:} \\

    You are given a meeting transcript and a query containing a question or instruction. Answer the query in one or more sentences.\\\\
    Transcript:\\
    \textcolor{orange}{\{context\}}\\\\
    \textcolor{question_color}{Now, answer the query based on the above meeting transcript in one or more sentences.}\\\\
    \textcolor{question_color}{Query: \textcolor{question_color}{\{question\}}} \\
    \end{tabular}\\

    \midrule

    \begin{tabular}{@{}c@{}}Multi News\end{tabular} & 
    \begin{tabular}{@{}p{\linewidth}@{}} 
    \textbf{Task Template:} \\

    You are given several news passages. Write a one-page summary of all news. \\\\
    News:\\
    \textcolor{orange}{\{context\}}\\\\
    \textcolor{question_color}{Now, write a one-page summary of all the news.}\\
    \end{tabular}\\

    \bottomrule
    \end{tabular}}
\end{table*}

\begin{table*}[h]
    \small
    \centering
    \caption{LongBench templates. Few-shot Learning Tasks.}
    \label{tab:task_template3}
    \resizebox{\linewidth}{!}{
    \begin{tabular}{cp{0.9\linewidth}}
    \toprule

    \begin{tabular}{@{}c@{}}TREC\end{tabular} & 
    \begin{tabular}{@{}p{\linewidth}@{}} 
    \textbf{Task Template:} \\

    Please determine the type of the question below. Here are some examples of questions.\\\\
    \textcolor{orange}{\{context\}}\\
    \textcolor{question_color}{\{question\}} \\

    \end{tabular}\\

    \midrule

    \begin{tabular}{@{}c@{}}TriviaQA\end{tabular} & 
    \begin{tabular}{@{}p{\linewidth}@{}} 
    \textbf{Task Template:} \\

    Answer the question based on the given passage. Only give me the answer and do not output any other words. The following are some examples.\\\\
    \textcolor{orange}{\{context\}}\\\\
    \textcolor{question_color}{\{question\}} \\

    \end{tabular}\\

    \midrule

    \begin{tabular}{@{}c@{}}SAMSum\end{tabular} & 
    \begin{tabular}{@{}p{\linewidth}@{}} 
    \textbf{Task Template:} \\

    Summarize the dialogue into a few short sentences. The following are some examples.\\\\
    \textcolor{orange}{\{context\}}\\\\
    \textcolor{question_color}{\{question\}} \\
    \end{tabular}\\

    \bottomrule
    \end{tabular}}
\end{table*}

\begin{table*}[h]
    \small
    \centering
    \caption{LongBench templates. Synthetic Tasks.}
    \label{tab:task_template3}
    \resizebox{\linewidth}{!}{
    \begin{tabular}{cp{0.9\linewidth}}
    \toprule

    \begin{tabular}{@{}c@{}}Passage Count\end{tabular} & 
    \begin{tabular}{@{}p{\linewidth}@{}} 
    \textbf{Task Template:} \\

    There are some paragraphs below sourced from Wikipedia. Some of them may be duplicates. Please carefully read these paragraphs and determine how many unique paragraphs there are after removing duplicates. In other words, how many non-repeating paragraphs are there in total?\\\\
    \textcolor{orange}{\{context\}}\\\\
    \textcolor{question_color}{Please enter the final count of unique paragraphs after removing duplicates. The output format should only contain the number, such as 1, 2, 3, and so on.}\\
    \end{tabular}\\

    \midrule

    \begin{tabular}{@{}c@{}}Passage Retrieval EN\end{tabular} & 
    \begin{tabular}{@{}p{\linewidth}@{}} 
    \textbf{Task Template:} \\

    Here are 30 paragraphs from Wikipedia, along with an abstract. Please determine which paragraph the abstract is from.\\\\
    \textcolor{orange}{\{context\}}\\\\
    The following is an abstract.\\\\
    \textcolor{question_color}{\{question\}}\\\\
    \textcolor{question_color}{Please enter the number of the paragraph that the abstract is from. The answer format must be like "Paragraph 1", "Paragraph 2", etc.}\\
    \end{tabular}\\

    \bottomrule
    \end{tabular}}
\end{table*}

\begin{table*}[h]
    \small
    \centering
    \caption{LongBench templates. Code Tasks.}
    \label{tab:task_template3}
    \resizebox{\linewidth}{!}{
    \begin{tabular}{cp{0.9\linewidth}}
    \toprule

    \begin{tabular}{@{}c@{}}Lcc\end{tabular} & 
    \begin{tabular}{@{}p{\linewidth}@{}} 
    \textbf{Task Template:} \\

    Please complete the code given below. \\
    \textcolor{orange}{\{context\}} \\
    \textcolor{question_color}{Next line of code:}
    \end{tabular}\\

    \midrule

    \begin{tabular}{@{}c@{}}Repobench-P\end{tabular} & 
    \begin{tabular}{@{}p{\linewidth}@{}} 
    \textbf{Task Template:} \\

    Please complete the code given below. \\
    \textcolor{orange}{\{context\}} \\
    \textcolor{question_color}{\{question\}} \\
    \textcolor{question_color}{Next line of code:}
    \end{tabular}\\


    \bottomrule
    \end{tabular}}
\end{table*}



\end{document}